\newtheorem{theorem}{Theorem}
\newtheorem{lemma}[theorem]{Lemma}
\newtheorem{proposition}[theorem]{Proposition}
\newtheorem{definition}{Definition}
\newtheorem{assumption}{Assumption}
\newtheorem{remark}{Remark}
\newcommand{\inner}[2]{\left\langle #1, #2 \right\rangle}
\newcommand{\ol}{\overline}
\newcommand{\loss}{\mathcal{L}}
\titlespacing*{\paragraph}{0pt}{2pt plus 1pt minus 1pt}{1em}
\newcommand{\rev}[1]{\textcolor{black}{#1}}
\newcommand{\hp}{\nu}                             
\newcommand{\bhp}{\bm{\nu}}                       
\newcommand{\hpexp}{\gamma}                       
\newcommand{\bhpexp}{\bm{\gamma}}                 
\newcommand{\scaledhps}{\mathcal{H}}
\newcommand{\searchspace}{\mathcal{X}}            
\newcommand{\lb}{\ell}
\newcommand{\ub}{u}
\newcommand{\grid}{\mathcal{G}}                   
\newcommand{\compexp}{r}
\newcommand{\interior}[1]{\operatorname{int}\left(#1\right)}
\newcommand{\gridres}{\rho}                       
\newcommand{\gridresval}[2]{\gridres(#1, #2)}     
\newcommand{\growthconst}{\tau}
\newcommand{\growthrad}{\delta}
\newcommand{\hptrunc}{\kappa}
\newcommand{\bhptrunc}{\bm{\kappa}}
\newcommand{\numhps}{h}
\newcommand{\gridsize}{g}
\newcommand{\family}{\mathscr{F}}
\newcommand{\algo}{\mathcal{A}}
\newcommand{\supnorm}[1]{\norm{#1}_{\mathrm{sup}}}
\newcommand{\traj}{\bm{\omega}}
\newcommand{\epsinv}{\eps_{\mathrm{inv}}}
\newcommand{\epsflat}{\eps_{\mathrm{flat}}}
\newcommand{\flops}{\mathcal{F}}
\newcommand{\sym}{\mathcal{S}}
\newcommand{\Lip}{\mathrm{Lip}}
\newcommand{\ConvErr}{\mathrm{ConvErr}}
\newcommand{\decompobj}{\mathcal{J}}
\newcommand{\btau}{\boldsymbol{\tau}}
\newcommand{\MCI}{\mathrm{MCI}}
\newcounter{idea}
\renewcommand{\theidea}{\arabic{idea}}
\title{Understanding the Mechanisms of Fast Hyperparameter Transfer}
\author{
\text{Nikhil Ghosh}$^{1}$\!,\,\,~ 
\text{Denny Wu}$^{1,2}$\!,\,\,~ 
\text{Alberto Bietti}$^{1}$
\vspace{1.5mm} 
\\
\normalsize{$^{1}$Flatiron Institute},~~
\normalsize{$^{2}$New York University}
\\
{\small
\texttt{\{nghosh,dwu,abietti\}@flatironinstitute.org}
}
}
\begin{document}

\maketitle


\begin{abstract}
The growing scale of deep learning models has rendered standard hyperparameter (HP) optimization prohibitively expensive. A promising solution is the use of scale-aware hyperparameters, which can enable direct \emph{transfer} of optimal HPs from small-scale grid searches to large models with minimal performance loss. To understand the principles 
governing such transfer strategy, we develop a general conceptual framework for reasoning about HP transfer across scale, characterizing transfer as \emph{fast} when the suboptimality it induces vanishes asymptotically faster 
than the finite-scale performance gap. We show formally that fast transfer is equivalent to \emph{useful} transfer for compute-optimal grid search, meaning that transfer is asymptotically more compute-efficient than direct tuning. 
While empirical work has found that the Maximal Update Parameterization ($\mu$P) exhibits fast transfer when scaling model width, the mechanisms remain poorly understood. We show that this property depends critically on problem structure by presenting synthetic settings where transfer either offers provable computational advantage or fails to outperform direct tuning even under $\mu$P. To explain the fast transfer observed in practice, we conjecture that decomposing the optimization trajectory reveals two contributions to loss reduction: (1) a width-stable component that determines the optimal HPs and (2) a width-sensitive component that improves with width but weakly perturbs the HP optimum. We present empirical evidence for this hypothesis across various settings, including large language model pretraining.
\end{abstract}

\section{Introduction}

\paragraph{Scale-aware Hyperparameters.}
A central dogma in empirical deep learning is that performance will steadily improve as training data and parameter count increase \citep{hestness2017deep,kaplan2020scaling, hoffmann2022training}. This paradigm incentivizes practitioners to develop increasingly larger trained models while first conducting experiments at smaller, more economical scales. For such small-scale experimentation to effectively inform larger training runs, it becomes crucial to reason about hyperparameters (HPs) in a scale-aware framework and to analyze the behavior of the \textit{sequence} of progressively scaled-up training runs. 
For example, when scaling the width $n$ of a neural network, we should explicitly conceptualize the learning rate as the product of a scale-independent HP $\eta$ and a scaling factor $n^{-a}$. This scale-aware perspective was initially formalized in the Tensor Program series \citep{yang2021tensor, yang2022tensor, yang2023tensor} for the width-scaling of neural networks, with later extensions to other dimensions such as depth \citep{yang2023depth, bordelon2023depthwise, dey2025don, chizat2025hidden}. It was shown theoretically that the correct scaling for ensuring ``optimal'' training in the infinite-width limit $n\to\infty$ is the \textit{Maximal Update Parameterization} \citep[$\mu$P,][]{yang2021tensor}, and under such scaling the optimal HPs are expected to be asymptotically scale-independent. This property is desirable since the HPs do not explode or vanish with scale and hence can be tuned on a fixed, scale-independent grid.

\paragraph{Fast Transfer.} While existing Tensor Programs theory suggests asymptotic scale independence of hyperparameters, it does not predict how quickly optimal HPs converge with scale. Empirical evidence \citep{yang2022tensor, lingle2024empirical}, however, has demonstrated that $\mu$P exhibits \textit{fast} hyperparameter transfer in practice. Fast HP transfer occurs when optimal HPs converge sufficiently fast with respect to scale (see Section \ref{sec:useful-transfer} for a precise statement), allowing practitioners to tune HPs on smaller proxy models and apply these values to larger-scale training runs with minimal performance loss. Despite its practical utility in drastically reducing HP tuning costs, the underlying mechanisms that enable fast transfer remain poorly understood.

\subsection{Our Contributions}
We provide insight into the puzzle of fast HP transfer by first developing a mathematical framework for reasoning about HP transfer in Section~\ref{sec:framework}. Then in Section~\ref{sec:useful-transfer} we formally define fast HP transfer in terms of convergence rates of optimal HPs and the loss, and provide a direct connection to the ``usefulness'' of transfer when performing compute-optimal grid search. We quantify these convergence rates in synthetic settings, and demonstrate that while valid in certain linear models, fast transfer is not guaranteed in neural networks even when using $\mu$P. Such examples illustrate that the benefits of transfer heavily depend on structural properties of the training process emerging from the data, optimizer, and architecture. 

To illuminate this structure, in Section~\ref{sec:topk_decomposition} we introduce a novel \emph{trajectory decomposition} obtained by decomposing the stepwise linearized loss change along the training trajectory. We empirically demonstrate that the linearization is an effective proxy for the true loss change when considering the \emph{exponential moving average} (EMA) of the optimization trajectory. The faithfulness of this approximation arises due to the smoothness of the resulting EMA trajectory which averages out the oscillations (see Figure \ref{fig:ema_iterate_lin}). 
Using this linearization, we track the loss change from the projection of the update in the top-$k$ directions that maximize this change in the loss at each matrix layer of the neural network. This decomposes the total loss change into two components: the \emph{top-$k$ loss} arising from the dominant $k$ components and the remaining \emph{residual loss}. 

\begin{wrapfigure}{r}{0.48\textwidth}  
\vspace{-4mm} 
    \centering
    \includegraphics[width=0.48\textwidth,keepaspectratio]{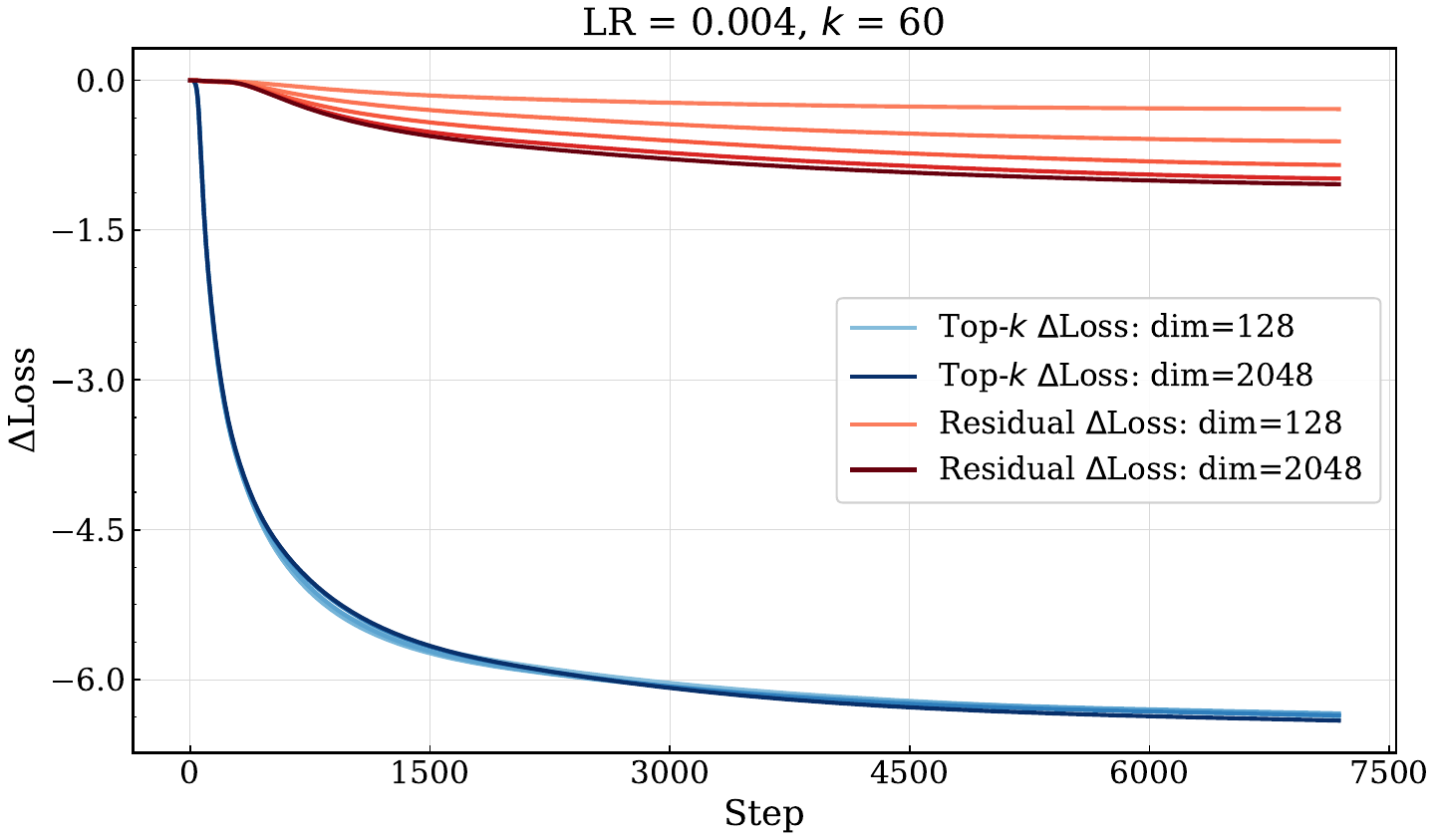}
    \vspace{-6.6mm}
    \caption{\small Loss decomposition into top-$k$ and residual components in transformers with varying widths.
    }
    \label{fig:split-loss-over-time-step-7185}
    \vspace{-5mm}
\end{wrapfigure}

We conjecture that the training behavior on the top-$k$ subspaces concentrates quickly with width while the remaining directions provide performance gains as the width increases. In Figure~\ref{fig:split-loss-over-time-step-7185} we validate this intuition by applying the loss decomposition introduced in Section~\ref{sec:fast-transfer} to a sequence of Llama-style transformers of increasing width trained under $\mu$P. We observe that the leading top-$k$ loss remains approximately invariant across widths, whereas the residual loss consistently improves with width, especially later in training. Moreover, since the top-$k$ loss provides the majority of the loss decrease, it is reasonable to expect that the optimal learning rate for the total loss will not deviate much from the optimal learning rate for the top-$k$ loss.  

Therefore, for an appropriately chosen $k < n$, the top-$k$ loss decomposition provides the following potential explanation for the phenomenon of fast transfer. 

\refstepcounter{idea}\label{box:fast-transfer}
\begin{tcolorbox}[colback=white, colframe=gray,
  title={Hypothesis~\theidea: Fast Transfer via Loss Decomposition}]
\begin{enumerate}[leftmargin=1em,itemsep=0.2em]
    \item The top-$k$ loss converges rapidly with scale~$n$, hence so do the optimal HPs for the top-$k$ loss.
    \item The optimal HPs for the top-$k$ loss approximately \emph{determine} the optimal HPs for the total loss.
\end{enumerate}
\end{tcolorbox}
We formalize this explanation and introduce an explicit trajectory decomposition (Section~\ref{sec:topk_decomposition}) that separates width-stable and width-sensitive contributions to loss reduction. This decomposition provides the basic lens we use throughout to analyze optimizer dynamics across widths. We use this to demonstrate concretely that Adam and SGD exhibit the high-level structure posited by Hypothesis~\ref{box:fast-transfer}. In Section~\ref{sec:llama_muon}, we apply the same analysis to Muon and obtain a qualitatively different picture: loss reduction is less concentrated in a width-stable top-$k$ subspace and is spread over many directions. This yields a concrete contrast between Adam and Muon dynamics and highlights a potential connection between optimizer geometry and the stability of learning-rate transfer. We further probe this connection via rank-controlled variants to isolate how preconditioning structure modulates the decomposition and transfer. Finally, in Section~\ref{sec:mlp_sgd} we refine the decomposition to operate at the data level, where we observe that width-stable components align with ``easy'' examples and width-sensitive residual directions align with ``hard'' examples in the tail.

\subsection{Related Work}\label{sec:related_work}

\paragraph{Hyperparameter Transfer.}
The concept of HP transfer was introduced by \citet{yang2022tensor}, who showed that HPs tuned on small proxy models can transfer reliably to much larger ones under $\mu$P scaling. Subsequent empirical studies confirmed the effectiveness of learning rate transfer in transformers, while also highlighting certain limitations \citep{lingle2024empirical, vlassis2025thorough}. As \citet{yang2022tensor} noted, the success of HP transfer is not fully explained from first principles. To address this gap \citet{noci2024super} showed empirically that top Hessian eigenvalues converge rapidly under $\mu$P across widths, suggesting a width-stable curvature that could underpin reliable transfer. However, the connection between these spectral statistics and the optimal learning rate remains unclear. More recently, \cite{hong2025provable} established a scale separation between certain macro- and micro-variables, thereby suggesting HPs can be effectively tuned in early training stages. The concurrent work of \cite{hayou2025proof} showed that for linear networks  under $\mu$P, the optimal learning rate admits a nontrivial limit as $n\to\infty$, hence establishing the \textit{weak transfer} condition which we introduce in Section~\ref{sec:framework}; however, as we will see in Section~\ref{sec:useful-transfer}, this condition alone does not imply computational efficiency of HP transfer. In contrast to these results, 
our work introduces a formal framework to reason about when transfer is useful, and explicitly connects the fast convergence of certain statistics of the optimization path (which we define via a trajectory-level loss decomposition) to the fast convergence of optimal HPs. 

\paragraph{Optimization Trajectories.}
Our fast transfer hypothesis rests on the existence of low-dimensional structure in optimization trajectories. \citet{gur2018gradient} observed that gradients rapidly align with top eigenvectors of the Hessian, suggesting that GD operates within a ``tiny subspace.'' \citet{song2024does} refined this view, showing that while gradient variance concentrates in the top eigenspace, motion in these directions primarily drives oscillations rather than loss reduction. This behavior is consistent with edge-of-stability (EoS) \citep{cohen2021gradient,damian2022self} and motivates our use of EMA smoothing: by averaging out oscillatory components tied to the top eigenspace, the smoothed trajectory (analogous to the ``central flow'' \citep{cohen2024understanding}) reveals the low-dimensional subspace where genuine loss decrease occurs. 
Similar low-dimensional structure has also appeared in various theoretical settings on gradient-based feature learning, such as the learning of multi-index models \citep{abbe2022merged,damian2022neural,mousavi2023neural,glasgow2025mean}, where SGD ``localizes'' parameters into low-dimensional subspaces. 
Recent works have also shown that gradient updates induce a spiked (signal-plus-noise) eigenstructure in the conjugate kernel \citep{moniri2023theory,wang2024nonlinear,dandi2024random} or the Hessian \citep{arous2023high,arous2025local,montanari2025phase} of the neural network, and leading eigenvectors contain information of the target function. 
Drawing inspiration from these theoretical works, we explicitly isolate the width-invariant structure in realistic settings using a spectral decomposition and link it to HP transfer.  

\section{Preliminaries and Formal Framework}\label{sec:framework}

We now formalize our framework for HP transfer. We will let $n$ denote the scaling dimension. The scaled HPs used during training at scale $n$ are:
\[
    \scaledhps_n(\bhp, \bhpexp) = (\hp_1 n^{-\hpexp_1}, \ldots, \hp_{\numhps} n^{-\hpexp_{\numhps}}),
\]
where we refer to $\bhp = (\hp_1, \ldots, \hp_{\numhps})$ as the HPs and to $\bhpexp = (\hpexp_1, \ldots, \hpexp_{\numhps})$ as the scaling exponents. Conceptually, $\bhp$ are a set of $n$-independent constants which are tuned for a specific problem and $\bhpexp$ are a set of exponents which specify how to scale the HPs with $n$ so that training can be performed at any scale $n$ using $\scaledhps_n(\bhp, \bhpexp)$. 
In this paper, we focus on the ``optimization hyperparameters'' of the abcd-parameterizations of \citet{yang2023tensor} (see Appendix \ref{app:background}) -- this encompasses the width scaling of HPs needed for training standard neural network architectures using common optimizers such as SGD and Adam.
Our empirical investigation considers settings where width is the only scaling dimension; thus we will often simply refer to the scale $n$ as the width; however our framework can be used more broadly for reasoning about scale-aware HPs.

In our setting the \emph{training procedure} $\algo$ is held fixed and we only vary $n$, the HPs $\bhp$, and the scaling $\bhpexp$. The training procedure returns a stochastic output $\algo(n, \bhp, \bhpexp)$. For neural network training this means that the architecture, optimizer, dataset, etc.\ are all fixed, and we will let $\algo(n, \bhp, \bhpexp)$ be the resulting optimization trajectory. For a given configuration, we can measure a scalar metric $\phi_n(\bhp; \bhpexp)$ obtained from the output of $\algo$. For example, this can be the final validation loss of the model. The HP search is over a \emph{search space} $\searchspace$ which we take to be a $\numhps$-dimensional box. 
We define the \emph{optimal HPs} and the corresponding \emph{optimal value} as
\[
    \bhp^\star(n; \bhpexp) = \argmin_{\bhp \in \searchspace} \phi_n(\bhp; \bhpexp),~~ \phi_n^\star(\bhpexp) := \phi_n(\bhp^\star(n; \bhpexp); \bhpexp).
\]
For convenience, we omit the scaling exponents $\bhpexp$ from the notation when they are clear from context, and abbreviate the above as $\phi_n(\cdot)$, $\bhp^\star(n)$, and $\phi_n^\star$.  
 
 \paragraph{Weak Transfer.} A basic requirement for a parameterization $\bhpexp$ is that both the optimal HPs $\bhp^\star(n)$ and the local sensitivity of $\phi_n$ around $\bhp^\star(n)$ are (asymptotically) independent of the scale $n$.
 This guarantees that grid search over a fixed, scale-independent grid of HPs achieves near-optimal performance for all $n$. 
 To ensure that relevant asymptotic quantities converge to well-defined limits, we impose regularity assumptions on the family $\{\phi_n\}$ so that a well-defined limit $\phi_\infty$ exists and the optimal HPs $\bhp^\star(n)$ converge to the minimizer $\bhp^\star(\infty)$ of $\phi_\infty$. To quantify suboptimality due to transfer, we further assume that $\phi_\infty$ is \textbf{locally strongly convex}. This condition links the suboptimality of a transferred HP to loss suboptimality and ensures that performance meaningfully degrades away from the optimum. Without such a condition, $\phi_\infty$ can be flat near its minimizer, making accurate HP selection irrelevant in the large-$n$ limit. 

\begin{wrapfigure}{r}{0.42\textwidth}  
\vspace{-5mm} 
    \centering
    \includegraphics[width=0.42\textwidth,keepaspectratio]{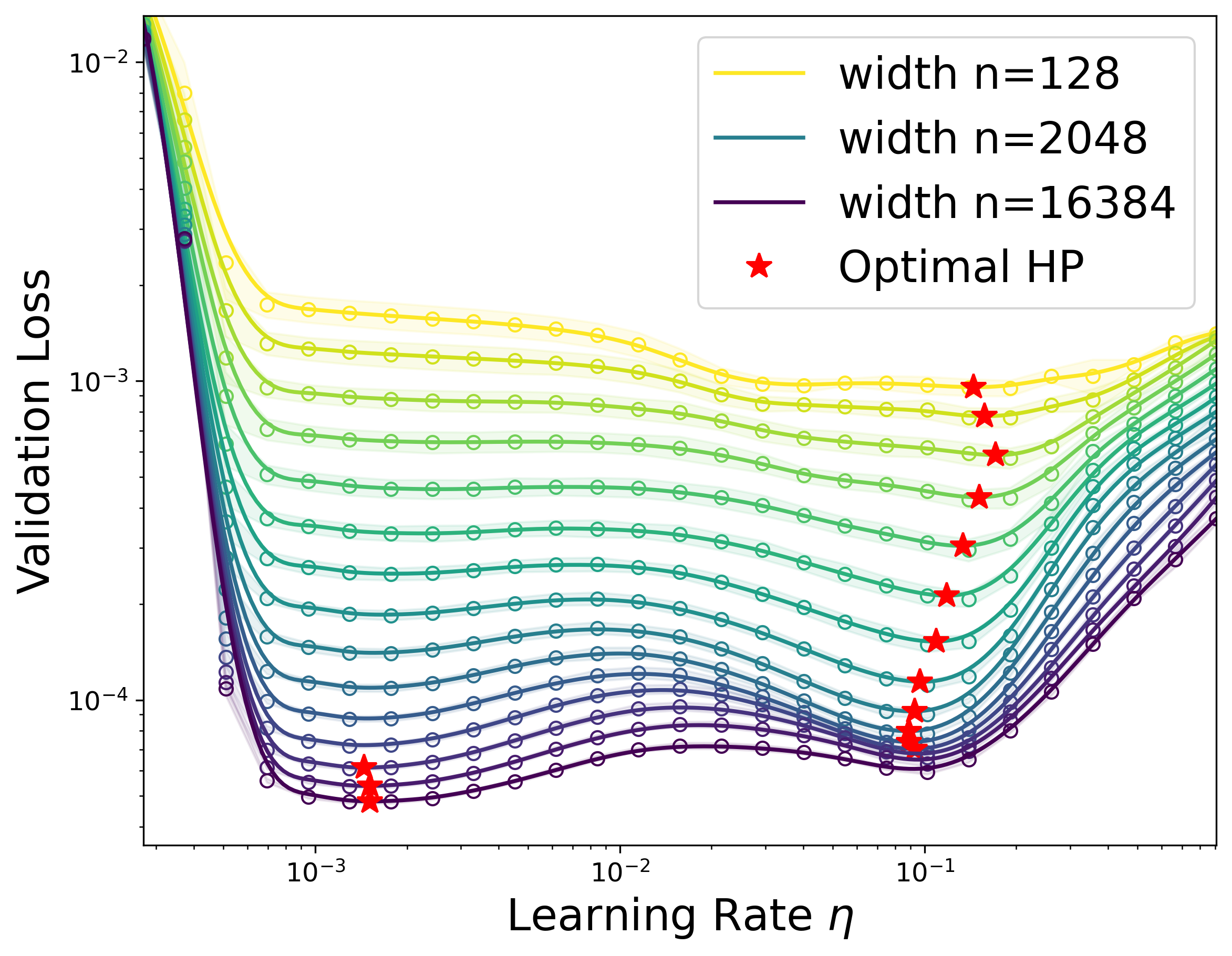} 
    \vspace{-6.6mm}
    \caption{\small Learning Gaussian $k$-index model using two-layer ReLU network with $\mu$P. The HP of interest is the Adam learning rate. Experiment details can be found in Appendix~\ref{app:two-layer-relu}. The optimal HP exhibits an abrupt shift at $n=8192$.}
    \label{fig:counter-example}
    \vspace{-6mm}
\end{wrapfigure}

We will say that the parameterization $\bhpexp$ admits \emph{weak transfer} (or simply that ``HP transfer holds'') when the above conditions are satisfied. The rigorous formulation is given in Definition~\ref{def:hp_transfer} (Appendix~\ref{app:background}). We regard this as the ``weakest'' notion of HP transfer, since it concerns only the asymptotic convergence of quantities rather than their convergence rates. Prior work \citep{yang2022tensor, yang2023depth} has argued informally that only ``optimal'' parameterizations can admit weak HP transfer and that $\mu$P is the unique optimal parameterization for general tasks. We formalize the first claim in Theorem \ref{thm:transfer_optimality} and assume that weak transfer holds under $\mu$P.

Note that weak transfer alone does not imply that transfer under $\mu$P is actually \emph{useful}, as it does not quantify the convergence rates and computational gain over direct tuning. In particular, even though the optimal HP converges to a well-defined limit, if such convergence happens very slowly, then one cannot reliably infer $\bhp^\star(\infty)$ from small-scale proxies --- see Figure~\ref{fig:counter-example} for a numerical example, where the optimal HP up to width $n=8192$ becomes clearly suboptimal for larger-width models.  
In the next section, we will show that stronger conditions on the convergence rates are required for fast and useful HP transfer. 
\section{The Puzzle of Fast \& Useful Transfer}\label{sec:useful-transfer}
 
In this section, we relate the convergence rate of optimal HPs (e.g., learning rate) to that of the evaluation metric (e.g., validation loss). 
As previously discussed, the definition of weak transfer ensures that $\bhp^\star(n) \to \bhp^\star(\infty)$, but we need to characterize the rate of convergence in order to conclude the computational efficiency of the transfer strategy. As we will see in Proposition \ref{prop:basic_rate}, the convergence rate of $\phi_n \to \phi_\infty$ already implies an upper bound on the convergence rate of the minimizers $\bhp^\star(n)$ due to the local strong convexity condition of $\phi_\infty$. Moreover, it turns out that this convergence also informs whether it is computationally more efficient to use HP transfer than to directly tune the model at a single width $n$, as we show in Theorem \ref{thm:grid_and_transfer_rate}. We then present toy examples where the convergence rates of the optimal HP and loss can be quantified. 

\subsection{Convergence Rates of HP Transfer}\label{sec:hp-asymptotics}

\paragraph{Notation.}
Given deterministic sequences $x_n \in \mathbb{R}$ and $y_n \in \mathbb{R}^+$, we write $x_n = O(y_n)$ to mean there exists $c > 0$ such that $|x_n| \leq c y_n$ for all large $n$. We write $x_n = o(y_n)$ if $x_n / y_n \to 0$ as $n \to \infty$, and $x_n = \Theta(y_n)$ if there exist $c_1, c_2 > 0$ such that $c_1 y_n \leq |x_n| \leq c_2 y_n$ for all large $n$. As shorthand, we write $x_n \sim y_n$ to mean $x_n = \Theta(y_n)$. We overload the same notation for random quantities: whenever $x_n$ is random, $O(\cdot)$, $o(\cdot)$, $\Theta(\cdot)$, and $\sim$ are understood in the corresponding stochastic sense (i.e., $O_p$, $o_p$, $\Theta_p$), and limits $x_n \to x$ are interpreted as convergence in probability unless stated otherwise \citep[Section~2.2]{vandervaart1998asymptotic}.

\paragraph{Tracking the Width Dependence.}
We now introduce three quantities that track the convergence of the optimal loss and HPs, illustrated in Figure~\ref{fig:convergent-quantities}.

\begin{definition}[Convergent Quantities] \label{def:convergent_quantities}
We define the following width-dependent quantities.
    \begin{itemize}[leftmargin=*] 
    \item \textbf{Loss gap:} $a_n = |\phi_n^\star - \phi_\infty^\star|$. The quantity measures the discrepancy between the optimal value of the metric $\phi$ at the finite-width model (over all HPs) and that of the infinite-width model.
    \item \textbf{Hyperparameter gap:} $b_n = \norm{\bhp^\star(n) - \bhp^\star(\infty)}$. The quantity measures the discrepancy between the optimal HPs at width $n$ and the infinite-width counterpart.
    
    \item \textbf{Transfer suboptimality gap:} $c_n = |\phi_\infty(\bhp^\star(n)) - \phi_\infty^\star|$. The quantity measures the performance gap between two infinite-width models: one with the optimal HP tuned at infinite-width, and the other with the optimal HP at a smaller width. 
    Intuitively speaking, this quantity reflects the performance gap in transferring the optimal HP in a small model to a large model. 
\end{itemize}
\end{definition} 

\begin{figure}[t]
\centering
\begin{minipage}[t]{0.32\linewidth}
\centering
{\includegraphics[height=0.73\textwidth]{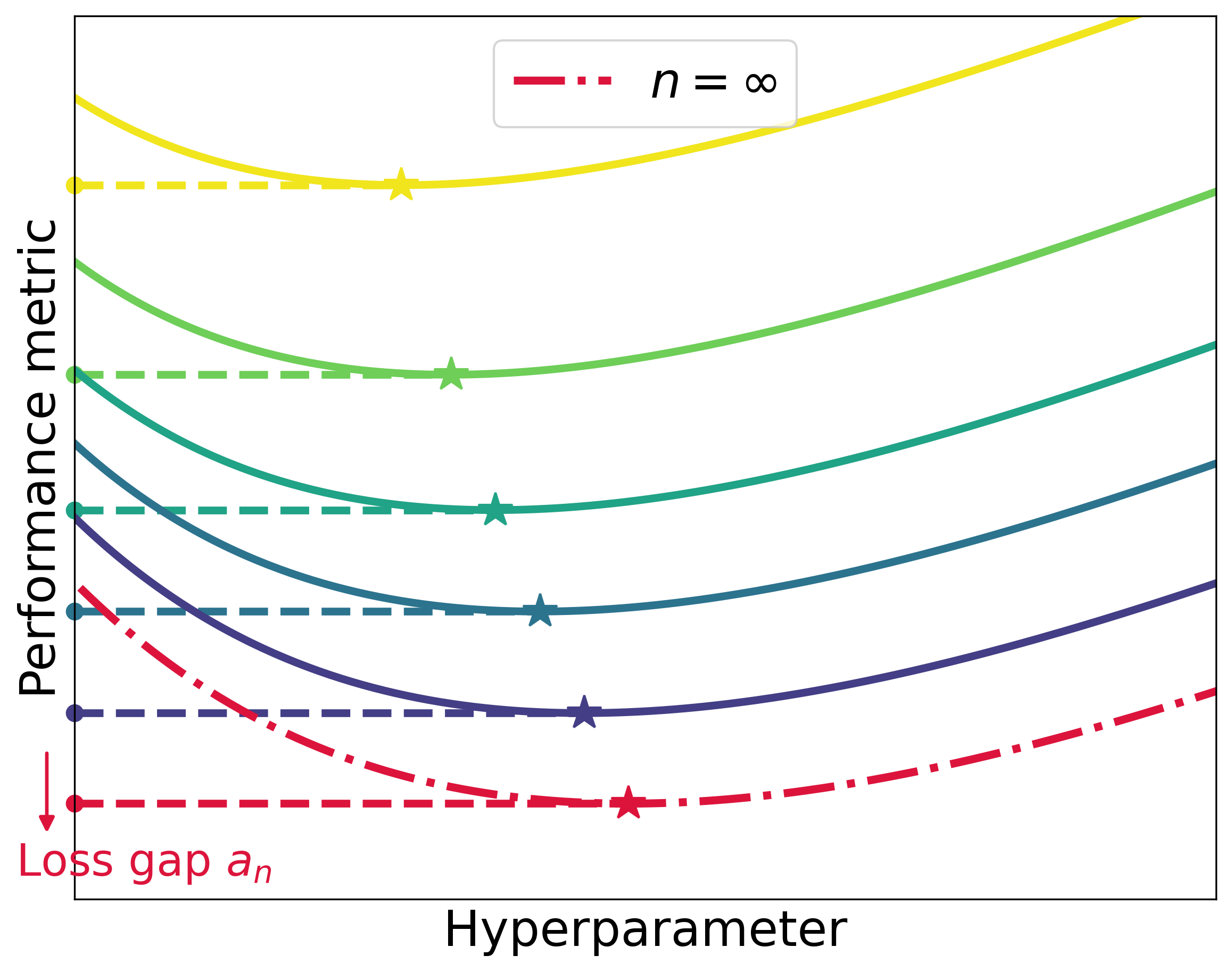}}  \\ \vspace{-0mm}
\small (a) Loss gap $a_n$.
\end{minipage}%
\begin{minipage}[t]{0.32\linewidth}
\centering 
{\includegraphics[height=0.73\textwidth]{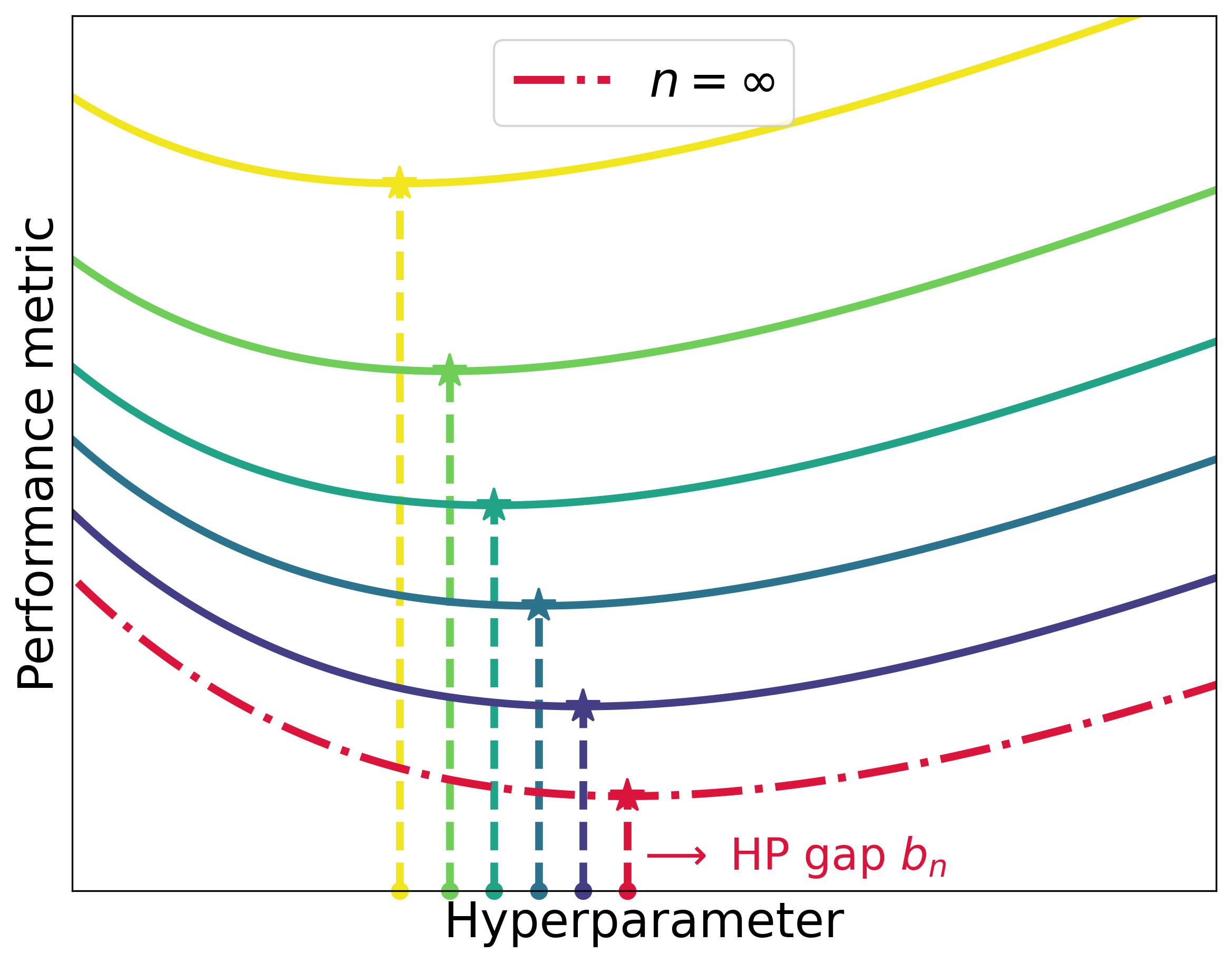}} \\ \vspace{-0mm}
\small (b) Hyperparameter gap $b_n$. 
\end{minipage}%
\begin{minipage}[t]{0.32\linewidth}
\centering 
{\includegraphics[height=0.73\textwidth]{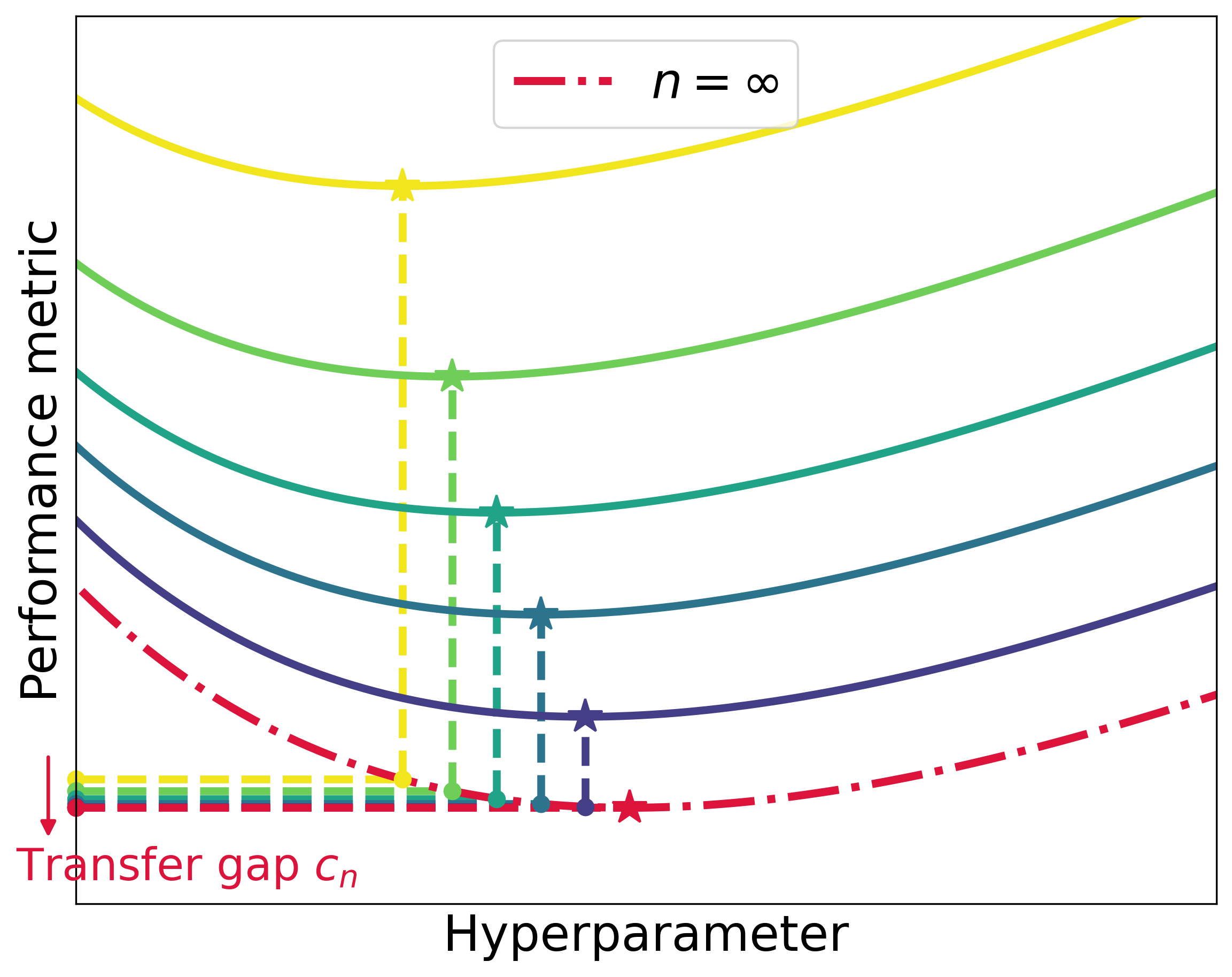}} \\ \vspace{-0mm}
\small (c) Transfer suboptimality gap $c_n$. 
\end{minipage}%
\caption{\small Illustration of convergent quantities in Definition~\ref{def:convergent_quantities}; note that $c_n$ captures the efficiency of HP transfer. }  
\label{fig:convergent-quantities} 
\end{figure}

\paragraph{Fast Transfer.} In the following, in addition to assuming HP transfer holds (Def.\ \ref{def:hp_transfer} in Appendix \ref{app:weak_transfer}) we will make the mild technical assumption that the ``uniform loss gap is locally tight'' which implies that $a_n \sim \supnorm{\phi_n - \phi_\infty}$ (see Def.\ \ref{def:locally_tight} and Lemma \ref{lem:unif_loss_gap} in Appendix \ref{app:asymptotic_rates}). The following proposition uses strong convexity around the optimal infinite-width HP to directly connect the convergence rates of the optimal HP and the evaluation metric (see Appendix~\ref{app:asymptotic_rates}).
\begin{proposition}\label{prop:basic_rate}
     $b_n = O\big(a_n^{1 / 2}\big)$ and $c_n = \Theta(b_n^2) = O(a_n)$.
\end{proposition}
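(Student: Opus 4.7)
The strategy is to exploit the local quadratic behavior of $\phi_\infty$ at its minimizer (from local strong convexity, plus smoothness which is part of the standing regularity assumptions) to relate $c_n$ and $b_n$, and then to bound $c_n$ in terms of a sup-norm discrepancy between $\phi_n$ and $\phi_\infty$ via a three-term telescoping identity. Combining these two pieces yields the bound on $b_n$.

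\textbf{Step 1: $c_n = \Theta(b_n^2)$.} Weak transfer gives $\bhp^\star(n) \to \bhp^\star(\infty)$, so for all sufficiently large $n$ the point $\bhp^\star(n)$ lies in a neighborhood where $\phi_\infty$ is $\mu$-strongly convex and (by the regularity hypotheses) $L$-smooth. Since $\nabla \phi_\infty(\bhp^\star(\infty)) = 0$, a two-sided Taylor expansion at $\bhp^\star(\infty)$ gives
\[
\tfrac{\mu}{2}\,\norm{\bhp^\star(n) - \bhp^\star(\infty)}^2 \;\le\; \phi_\infty(\bhp^\star(n)) - \phi_\infty^\star \;\le\; \tfrac{L}{2}\,\norm{\bhp^\star(n) - \bhp^\star(\infty)}^2,
\]
which is exactly $c_n = \Theta(b_n^2)$.

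\textbf{Step 2: $c_n = O(a_n)$.} I would write the telescoping decomposition
\[
c_n \;=\; \bigl[\phi_\infty(\bhp^\star(n)) - \phi_n(\bhp^\star(n))\bigr] \;+\; \bigl[\phi_n^\star - \phi_n(\bhp^\star(\infty))\bigr] \;+\; \bigl[\phi_n(\bhp^\star(\infty)) - \phi_\infty^\star\bigr].
\]
The middle bracket is $\le 0$ by the optimality of $\bhp^\star(n)$ for $\phi_n$, and each of the outer brackets is at most $\supnorm{\phi_n - \phi_\infty}$ in absolute value (where the sup-norm is taken over the neighborhood of $\bhp^\star(\infty)$ implicit in the local-tightness assumption; since both $\bhp^\star(n)$ and $\bhp^\star(\infty)$ eventually lie there by weak transfer, the bound applies). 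Using Lemma~\ref{lem:unif_loss_gap}, $\supnorm{\phi_n-\phi_\infty}\sim a_n$, so $c_n \le 2\supnorm{\phi_n - \phi_\infty} = O(a_n)$.

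\textbf{Step 3: $b_n = O(a_n^{1/2})$.} Combine Steps 1 and 2: $b_n^2 = \Theta(c_n) = O(a_n)$, so $b_n = O(a_n^{1/2})$.

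\textbf{Expected obstacle.} The only delicate point is justifying that the sup-norm in Step 2 can be taken over a fixed neighborhood of $\bhp^\star(\infty)$ rather than all of $\searchspace$, and that $\bhp^\star(n)$ is eventually in this neighborhood so that the quadratic sandwich in Step 1 actually applies. Both of these should follow from combining the weak-transfer definition (Def.~\ref{def:hp_transfer}) with the local-tightness assumption (Def.~\ref{def:locally_tight}); once that neighborhood is fixed, the rest is bookkeeping with Taylor expansions and the triangle inequality.
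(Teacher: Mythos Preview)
Your proposal is correct and uses essentially the same ingredients as the paper: the telescoping bound in your Step~2 is exactly the content of the paper's Lemma~\ref{lem:sc_unif_perturb} (which shows $g(\bx_f)-g(\bx_g)\le 2\varepsilon$ and then applies strong convexity), and your Step~1 is the Taylor argument in Lemma~\ref{lem:basic_rate_unif}. The only difference is organizational---the paper first bounds $b_n$ via the perturbation lemma and then deduces $c_n=\Theta(b_n^2)=O(a_n)$, whereas you establish $c_n=\Theta(b_n^2)$ and $c_n=O(a_n)$ separately and infer $b_n=O(a_n^{1/2})$ from their combination.
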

Since HP transfer holds, we have by definition $a_n, b_n, c_n \to 0$ as $n \to \infty$. This being said, \textit{a priori} we may have $b_n = \Theta\big(a_n^{1/2}\big)$ and hence $c_n = \Theta(a_n)$. This would indicate that the transfer suboptimality gap converges at the same rate as the loss gap. In practice however, we tend to observe a much faster rate for $c_n$ relative to $a_n$. We refer to this phenomenon formally as \textit{fast transfer}.
\begin{definition}
We say that HP transfer is fast if $c_n = o(a_n)$ which occurs iff $b_n = o\big(a_n^{1/2}\big)$.
    \label{def:fast-transfer}
\end{definition}
For instance, if $a_n \sim n^{-\alpha}$ and $b_n \sim n^{-\beta}$, then fast transfer occurs if and only if $ \beta > \alpha / 2$. 

\paragraph{Useful Transfer.} It turns out that the notion of fast transfer also coincides with the computational usefulness of HP transfer when performing a grid search. More precisely, consider two HP tuning strategies: \textbf{direct} performs grid search on models of a single width, and \textbf{transfer} performs grid search on models of a smaller width and then trains a final model of larger width using the obtained optimal HPs.  
Suppose we have a compute budget of $\flops$ flops, and the amount of flops needed for a single training run scales with $n^{\compexp}$ for models of width $n$, where $\compexp = 2$ for standard optimization algorithms on standard architectures. 
We say that transfer is \textit{useful} if the transfer strategy yields better performance than the direct strategy for a given compute budget. The following result (formally detailed in Appendix \ref{app:grid_search}) characterizes when this occurs. 

\begin{theorem}[Useful transfer]\label{thm:grid_and_transfer_rate}
Suppose $a_n \sim n^{-\alpha}$ and $b_n \sim n^{-\beta}$, we are given a compute budget of $\flops$ flops, and a single training run at width $n$ costs $n^{\compexp}$ flops. 
\begin{enumerate}[label=(\alph*),leftmargin=*,]
    \item \textbf{Direct tuning.} If we directly conduct grid search on a width-$n$ model, the compute-optimal performance scales as $\flops^{\frac{-2\alpha}{\numhps \alpha + 2 \compexp}}$ and is obtained at width $n^\star \sim \flops^{\frac{2}{\numhps \alpha + 2 \compexp}}$.
    \item \textbf{Transfer.} If we conduct a grid search on a width-$n$ model and then transfer to a large width-$M$ model, the compute-optimal performance scales as $\flops^{\frac{-\alpha}{\compexp}} + \flops^{\frac{-2\beta}{\numhps \beta + \compexp}}$, obtained at widths $n^\star \sim \flops^{\frac{1}{\numhps \beta + \compexp}}$ and $M^\star \sim \flops^{1/\compexp}$. 
\end{enumerate}
Hence transfer is useful if and only if $\beta > \alpha / 2$, i.e., $b_n = o(\sqrt{a_n})$. 
\end{theorem}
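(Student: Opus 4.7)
The plan is to recast each tuning strategy as a constrained optimization problem --- choose a width (or widths) and a grid resolution to minimize the total suboptimality of the final model subject to the compute budget $\flops$ --- and then compare the resulting rates.

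\textbf{Direct tuning.} At a fixed width $n$ with a grid of resolution $g$ along each of the $\numhps$ HP axes (so $g^{\numhps}$ evaluations, each costing $n^\compexp$), I decompose the performance of the best grid point $\hat\bhp$ as
\[
\phi_n(\hat\bhp) - \phi_\infty^\star \;=\; \bigl(\phi_n(\hat\bhp) - \phi_n^\star\bigr) + \bigl(\phi_n^\star - \phi_\infty^\star\bigr).
\]
The second summand is $a_n \sim n^{-\alpha}$. For the first, the nearest grid point to $\bhp^\star(n)$ lies within distance $O(1/g)$ of it, and local strong convexity of $\phi_n$ around its minimizer (inherited from $\phi_\infty$ via uniform convergence, invoking the ``uniform loss gap is locally tight'' assumption from Appendix~\ref{app:asymptotic_rates}) yields $\phi_n(\hat\bhp) - \phi_n^\star = \Theta(g^{-2})$. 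Minimizing $n^{-\alpha} + g^{-2}$ subject to $g^{\numhps} n^{\compexp} \leq \flops$ then reduces to balancing the two error terms; equating them gives $g \sim n^{\alpha/2}$, and saturating the budget yields the stated $n^\star \sim \flops^{2/(\numhps\alpha + 2\compexp)}$ with rate $\flops^{-2\alpha/(\numhps\alpha + 2\compexp)}$.

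\textbf{Transfer.} Run the same grid search at small width $n$ returning $\hat\bhp$, then train one model at large width $M$ using $\hat\bhp$. Decompose
\[
\phi_M(\hat\bhp) - \phi_\infty^\star \;\leq\; \bigl(\phi_M^\star - \phi_\infty^\star\bigr) + \bigl(\phi_M(\hat\bhp) - \phi_M^\star\bigr) \;\lesssim\; M^{-\alpha} + \|\hat\bhp - \bhp^\star(M)\|^2,
\]
where the quadratic bound again uses local strong convexity. Triangle inequality with $b_n \sim n^{-\beta}$ and the grid estimate $\|\hat\bhp - \bhp^\star(n)\| = O(1/g)$ gives $\|\hat\bhp - \bhp^\star(M)\| \lesssim g^{-1} + n^{-\beta}$ (absorbing $b_M$ since $M \geq n$), so the total is $\lesssim M^{-\alpha} + g^{-2} + n^{-2\beta}$ under the constraint $g^{\numhps} n^{\compexp} + M^{\compexp} \leq \flops$. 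Since the three error sources decouple across $(n, g, M)$, the compute-optimal allocation spends a constant fraction of $\flops$ on the final run, giving $M^\star \sim \flops^{1/\compexp}$ with contribution $\flops^{-\alpha/\compexp}$, while balancing $g^{-2} \sim n^{-2\beta}$ (so $g \sim n^\beta$) inside the remaining budget yields $n^\star \sim \flops^{1/(\numhps\beta + \compexp)}$ with contribution $\flops^{-2\beta/(\numhps\beta + \compexp)}$. For the comparison, transfer beats direct asymptotically iff both of its error terms decay strictly faster than $\flops^{-2\alpha/(\numhps\alpha + 2\compexp)}$. The first term's rate $\alpha/\compexp$ always dominates, since $\alpha/\compexp > 2\alpha/(\numhps\alpha + 2\compexp)$ reduces to $\numhps\alpha > 0$. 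The second gives the nontrivial condition $2\beta/(\numhps\beta + \compexp) > 2\alpha/(\numhps\alpha + 2\compexp)$, which by cross-multiplication (the $\numhps\alpha\beta$ terms cancel) simplifies to $2\beta\compexp > \alpha\compexp$, i.e.\ $\beta > \alpha/2$, equivalently $b_n = o(\sqrt{a_n})$ --- precisely fast transfer.

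The main obstacle I anticipate is making the ``grid resolution implies $g^{-2}$ quadratic gap'' step fully rigorous. Since $\phi_n$ is generally a random quantity (the paper adopts $O_p$ conventions), the bound $\phi_n(\hat\bhp) - \phi_n^\star = \Theta_p(g^{-2})$ requires uniform-in-$n$ control of the local quadratic behavior of $\phi_n$ near $\bhp^\star(n)$. Weak transfer together with the ``uniform loss gap is locally tight'' assumption should suffice to transfer the strong convexity of $\phi_\infty$ to $\phi_n$ for large $n$, but care is needed to handle integer-valued grid resolution and to ensure that $n^\star$ lies in the asymptotic regime where the assumed scalings $a_n \sim n^{-\alpha}$ and $b_n \sim n^{-\beta}$ genuinely hold. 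The remainder of the argument is a clean constrained-optimization exercise once these rates are in place.
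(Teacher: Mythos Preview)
Your proposal is correct and follows essentially the same approach as the paper: the same grid-cost model, the same error decomposition (loss gap plus quadratic HP error), and the same balancing argument for the compute-optimal rates, with your $g$ playing the role of the paper's $1/\rho$. The only cosmetic difference is that the paper's part~(b) uses a three-term telescoping split through $\bhp^\star(n)$ and $\bhp^\star(M)$, whereas you merge the last two and recover them via the triangle inequality on $\|\hat\bhp - \bhp^\star(M)\|$; also note that the upper bound $\phi_M(\hat\bhp)-\phi_M^\star\lesssim\|\hat\bhp-\bhp^\star(M)\|^2$ comes from the uniform bound on $\phi_n''$ (smoothness), not from strong convexity, and the $\Theta(g^{-2})$ estimate implicitly uses the paper's grid-proximity assumption rather than ``locally tight''.
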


Observe that the requirement $\beta>\alpha/2$ is the same condition as \textit{fast transfer} (Definition~\ref{def:fast-transfer}). Consequently, under local strong convexity of $\phi_\infty$ with respect to $\bhp$ (i.e., at infinite width the model performance
meaningfully degrades away from the optimal HP), the naive loss convergence rate already implies that \textit{transfer never underperforms the direct tuning strategy asymptotically} provided that the HPs are parameterized to be $n$-independent. 
While this observation supports the effectiveness of $\mu$-transfer \citep{yang2022tensor}, it does not address the question of when the optimal HPs converge faster than what is implied naively by the loss convergence (i.e., transfer \textit{outperforming} direct tuning). The question turns out to be subtle, and the following subsection presents simple examples where fast transfer may be present or absent. 

\subsection{Examples of Fast and Slow HP Transfer}\label{sec:examples}

Note that the precise scaling of quantities in Definition~\ref{def:convergent_quantities} is infeasible to measure in large-scale scenarios: since the infinite-scale model ($n\to\infty$) is inaccessible, we must resort to power-law fits from finite-$n$ data, which requires a fine grid search of HPs at each scale. Consequently, in this section we present synthetic settings where the convergence rate of $(a_n,b_n,c_n)$ can be either analytically derived or reliably estimated from data. These settings allow us to quantify whether HP transfer offers computational gain.

\paragraph{Fast HP Transfer: Random Features Regression.} First we consider tuning the ridge penalty $\lambda$ in a high-dimensional random features (RF) model with nonlinearity $\sigma$, where the target function is a single-index model with link function $\sigma_*$ on isotropic Gaussian input in $\R^d$. We aim to select the optimal regularization parameter $\lambda\in\R$ that minimizes the prediction risk $\mathcal{R}(f) = \E_{\bx} [(y - f(\bx))^2]$. 

In the proportional limit where the number of training data $N$, dimension of input features $d$, and model width $n$ all diverge $N,d,n\to\infty, N/d\to\psi_1, n/d\to\psi_2$ where $\psi_1,\psi_2\in (0,\infty)$, \cite{mei2022generalization,gerace2020generalisation} derived precise asymptotics of prediction risk under standard assumptions (see Assumption~\ref{assumption:RF}). In this model, the number of trainable parameters is controlled by the ratio $\psi_2=n/d$, and the infinite-width model is obtained by sending $\psi_2\to\infty$. The following proposition quantifies the convergence of prediction risk and hyperparameter as a function of $\psi_2$. 

\begin{theorem}\label{thm:RF-rate}
Define $\lambda^*(\psi_2) := \argmin_{\lambda\in\R} \cR_{\psi_2}(\lambda)$, where $\cR_{\psi_2}(\lambda)$ is the asymptotic prediction risk of the RF model with width $n/d=\psi_2$ and ridge penalty $\lambda$. Then under Assumption~\ref{assumption:RF}, we have
\begin{itemize}[leftmargin=*]
    \item \textbf{Loss gap:} $\abs{\cR_{\psi_2}(\lambda^*(\psi_2)) - \cR_{\infty}(\lambda^*(\infty))} = \Theta(\psi_2^{-1})$. 
    \item \textbf{Hyperparameter gap:} $\abs{\lambda^*(\psi_2) - \lambda^*(\infty)} = O(\psi_2^{-1})$. 
    \item \textbf{Suboptimality gap:} $\abs{\cR_\infty(\lambda^*(\psi_2)) - \cR_\infty(\lambda^*(\infty))} = O(\psi_2^{-2})$. 
\end{itemize}
\end{theorem}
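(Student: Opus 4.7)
The plan is to extract the three convergence rates directly from the precise asymptotic formulas for $\cR_{\psi_2}(\lambda)$ established by \citet{mei2022generalization} and \citet{gerace2020generalisation}. In those works the risk is characterized via a system of self-consistent equations whose solutions depend smoothly on $\psi_2$ and on $\lambda$. The first step I would carry out is to perturb these equations around $\psi_2 = \infty$ to obtain a uniform asymptotic expansion of the form
\begin{equation*}
    \cR_{\psi_2}(\lambda) = \cR_\infty(\lambda) + \psi_2^{-1}\,\mathcal{V}(\lambda) + o(\psi_2^{-1}),
\end{equation*}
valid uniformly for $\lambda$ in a neighborhood of $\lambda^*(\infty)$, together with the matching expansion for $\cR_{\psi_2}'(\lambda)$ obtained by differentiation. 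The correction term $\mathcal{V}(\lambda)$ is strictly positive (it is the Monte-Carlo variance from using finitely many features), and local strong convexity from the framework of Section~\ref{sec:framework} gives $\cR_\infty''(\lambda^*(\infty)) > 0$.

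Given this expansion, the loss gap is immediate. The upper bound follows from optimality of $\lambda^*(\psi_2)$:
\begin{equation*}
    \cR_{\psi_2}(\lambda^*(\psi_2)) - \cR_\infty(\lambda^*(\infty)) \le \cR_{\psi_2}(\lambda^*(\infty)) - \cR_\infty(\lambda^*(\infty)) = \psi_2^{-1}\mathcal{V}(\lambda^*(\infty)) + o(\psi_2^{-1}),
\end{equation*}
and the matching lower bound follows by evaluating the expansion at $\lambda^*(\psi_2)$ and using $\cR_\infty(\lambda^*(\psi_2)) \ge \cR_\infty(\lambda^*(\infty))$ together with continuity of $\mathcal{V}$ at $\lambda^*(\infty)$, yielding the claimed $\Theta(\psi_2^{-1})$.

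For the hyperparameter gap I would combine the first-order optimality condition $\cR_{\psi_2}'(\lambda^*(\psi_2)) = 0$ with the $C^1$ expansion to conclude $\cR_\infty'(\lambda^*(\psi_2)) = -\psi_2^{-1}\mathcal{V}'(\lambda^*(\psi_2)) + o(\psi_2^{-1}) = O(\psi_2^{-1})$. A one-step Taylor expansion of $\cR_\infty'$ around $\lambda^*(\infty)$ then gives
\begin{equation*}
    \cR_\infty''(\lambda^*(\infty)) \cdot \big(\lambda^*(\psi_2) - \lambda^*(\infty)\big) = O(\psi_2^{-1}),
\end{equation*}
so dividing through by the positive constant $\cR_\infty''(\lambda^*(\infty))$ yields $|\lambda^*(\psi_2) - \lambda^*(\infty)| = O(\psi_2^{-1})$, which is a factor of $\psi_2^{-1/2}$ faster than the naive bound of Proposition~\ref{prop:basic_rate}. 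The suboptimality gap then follows by plugging this estimate into the second-order Taylor expansion of $\cR_\infty$ at $\lambda^*(\infty)$: the linear term vanishes by optimality and the quadratic term contributes $\Theta\big((\lambda^*(\psi_2) - \lambda^*(\infty))^2\big) = O(\psi_2^{-2})$.

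The main obstacle is the very first step: upgrading the pointwise convergence $\cR_{\psi_2}(\lambda) \to \cR_\infty(\lambda)$ of \citet{mei2022generalization} to a $C^1$ expansion in $\psi_2^{-1}$ that is uniform in $\lambda$, and verifying strict positivity of $\mathcal{V}(\lambda^*(\infty))$. Both should follow from an implicit-function-theorem analysis of the self-consistent equations around $\psi_2 = \infty$ (whose Jacobian is non-degenerate for the ridge-regularized problem), and from the fact that the finite-width risk cannot beat its infinite-width limit at any fixed $\lambda$; the sign and the uniformity, however, require working concretely through the bias-variance decomposition under Assumption~\ref{assumption:RF} rather than treating the asymptotic formulas as a pure black box.
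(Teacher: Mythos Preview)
Your proposal is correct and follows essentially the same route as the paper: both expand the risk in $\psi_2^{-1}$ via an implicit-function-theorem analysis of the self-consistent Stieltjes equations around $\psi_2=\infty$, verify strict positivity of the first-order coefficient at $\lambda^*(\infty)$, and then read off the three rates from Taylor expansions. The paper carries out explicitly the step you correctly flag as the main obstacle---computing $\partial_{\eta}\cR$ at $\eta=0$ from the coupled equations, checking Jacobian invertibility, and algebraically factoring the result to show $\mathcal{V}(\lambda^*(\infty))>0$---rather than relying on the heuristic that finite-width risk cannot beat its infinite-width limit.
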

This theorem states that both the loss gap $a_n$ and the HP gap $b_n$ scale as $d/n=\psi_2^{-1}$, whereas the transfer suboptimality gap scales as $c_n\sim \psi_2^{-2} \ll a_n$. We conclude that the ridge regularization parameter $\lambda$ exhibits fast transfer per Definition~\ref{def:fast-transfer}. Consequently, Theorem~\ref{thm:grid_and_transfer_rate} implies that tuning the ridge penalty on a small (narrower) RF model and then transfer is more compute-efficient than directly tuning the large-width model. To our knowledge, this gives the first concrete setting where the HP transfer strategy in \cite{yang2023depth} provably offers a computational advantage.   

Figure~\ref{fig:RFF-risk} presents the prediction risk of the RF ridge estimator across varying widths $n/d=\psi_2$. We set $\sigma=\mathrm{tanh}, \sigma_*=\mathrm{ReLU}, \psi_1=4, \sigma_{\varepsilon}=1/4$. The analytical curves are obtained by solving the coupled Stieltjes transforms \eqref{eq:stieltjes1} and \eqref{eq:stieltjes2}. 
Figure~\ref{fig:RFF-scaling} confirms the scaling of $a_n, b_n, c_n$ in Theorem~\ref{thm:RF-rate}.
 
\begin{figure}[t]
\vspace{-2mm} 
\centering
\begin{subfigure}[t]{0.49\linewidth}
\centering
{\includegraphics[height=0.69\textwidth]{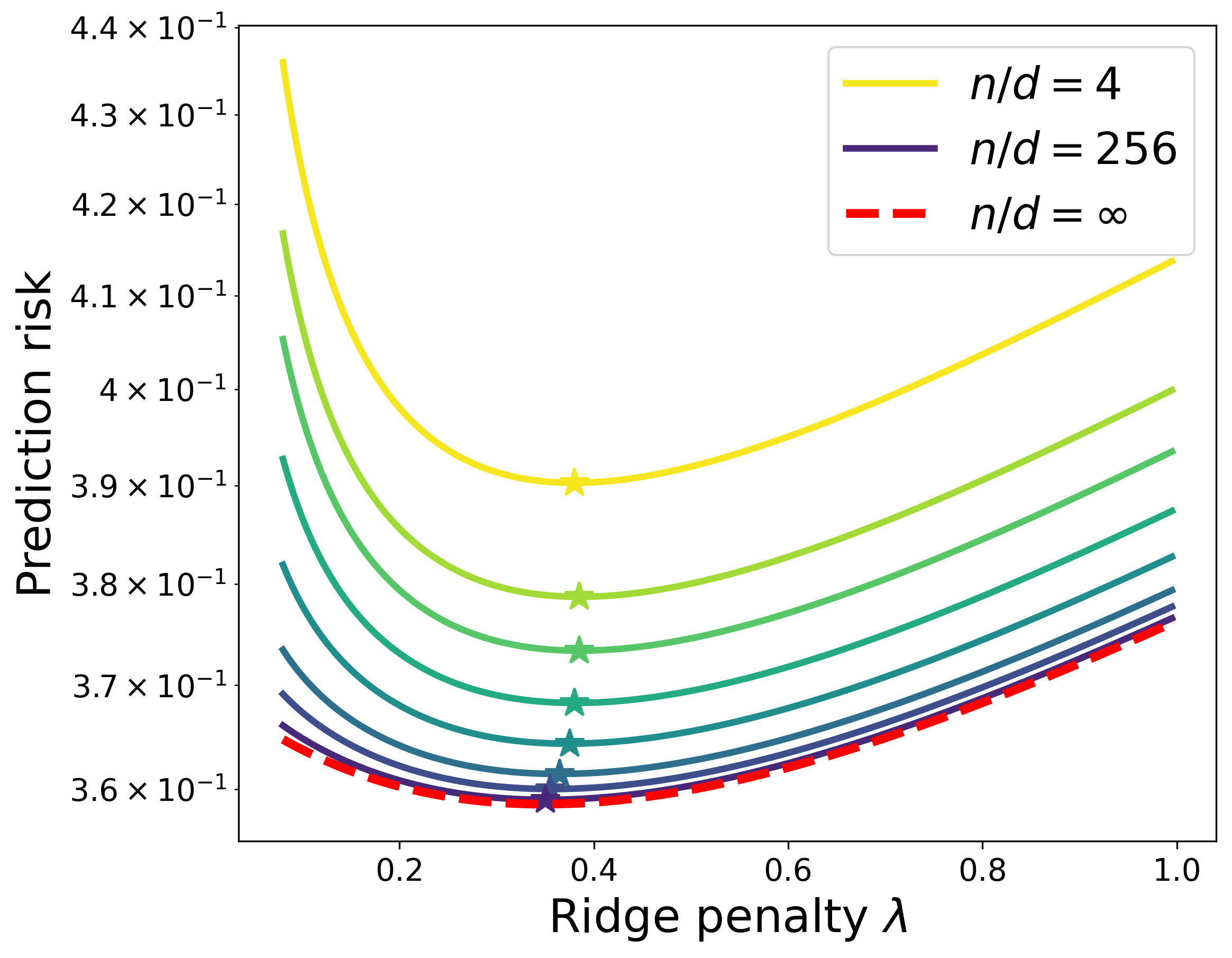}} 
\vspace{-1mm}
\caption{Prediction risk vs.~ridge penalty.}
\label{fig:RFF-risk}
\end{subfigure}%
\begin{subfigure}[t]{0.49\linewidth}
\centering 
{\includegraphics[height=0.69\textwidth]{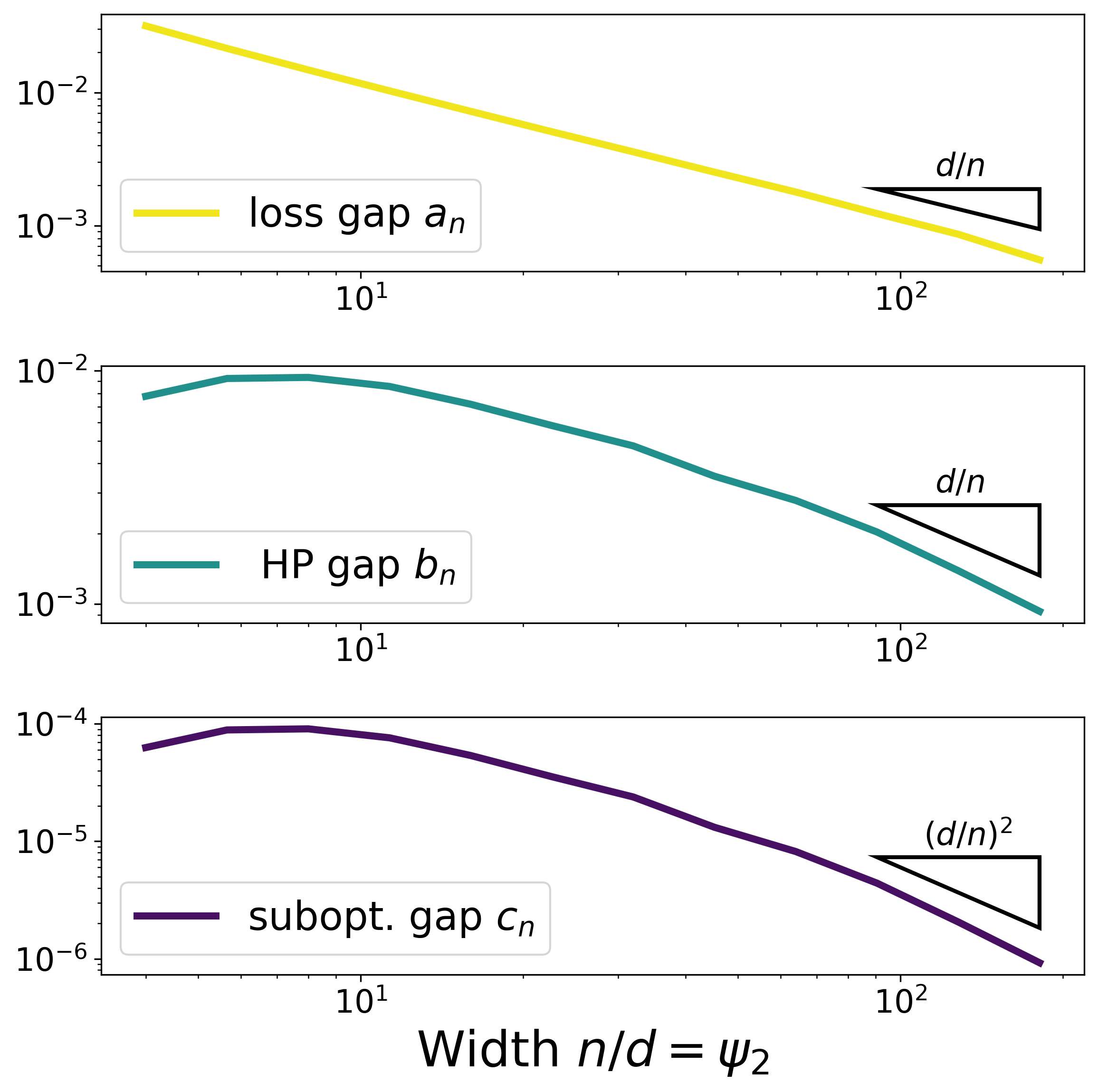}} 
\vspace{-1mm}
\caption{\small Scaling of optimal loss and ridge penalty.} 
\label{fig:RFF-scaling}
\end{subfigure}%
\caption{\small Optimal ridge penalty (generalization error) for RF regression to learn a single-index model.}  
\vspace{-0.5mm}
\label{fig:RFF} 
\end{figure}  

\begin{figure}[b]
\vspace{-3mm} 
\centering
\begin{subfigure}[t]{0.49\linewidth}
\centering
{\includegraphics[height=0.65\textwidth]{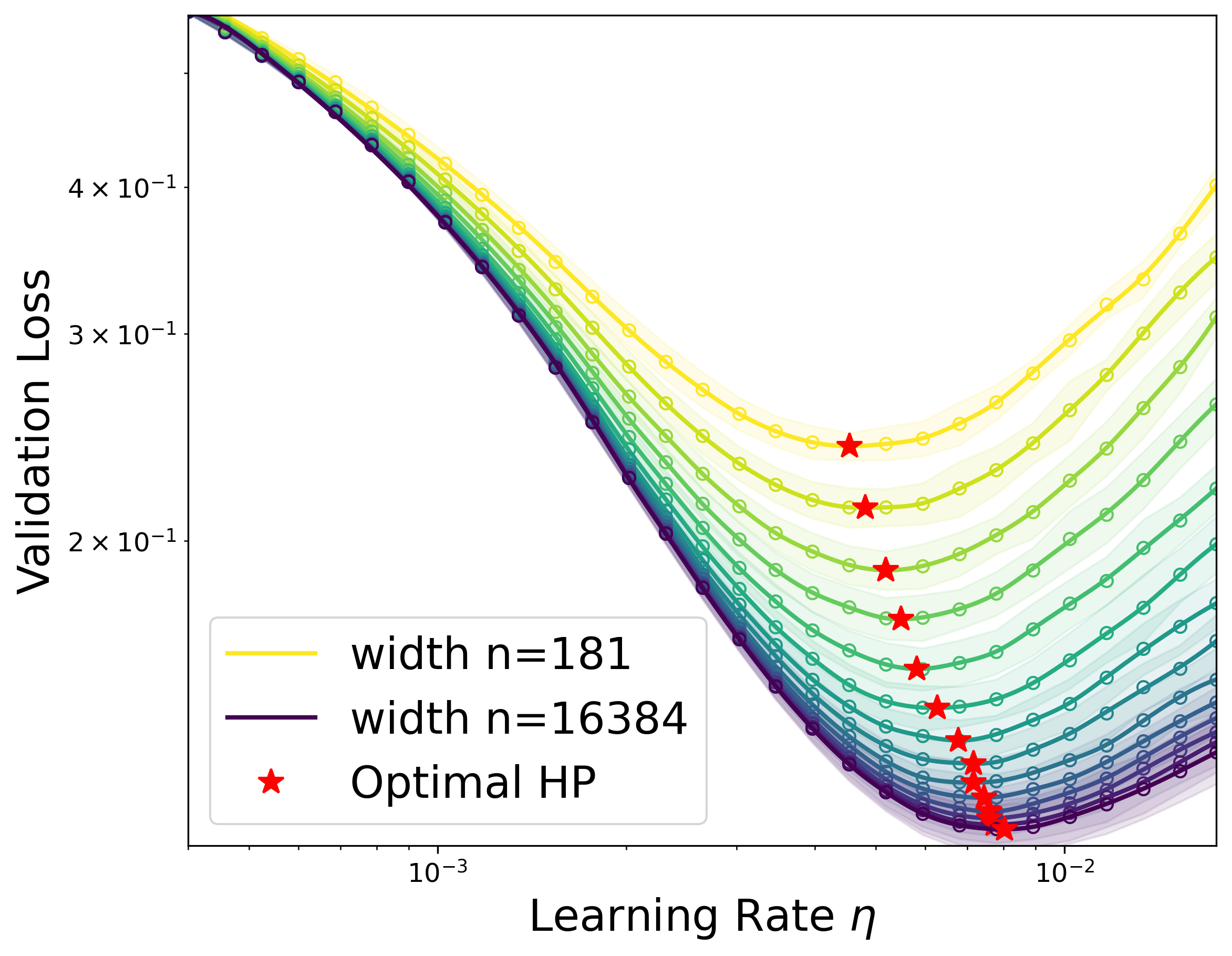}}  \vspace{-1mm}
\caption{\small Validation loss vs.~learning rate.}
\label{fig:two-layer-ReLU-loss}
\end{subfigure}%
\begin{subfigure}[t]{0.49\linewidth}
\centering 
{\includegraphics[height=0.65\textwidth]{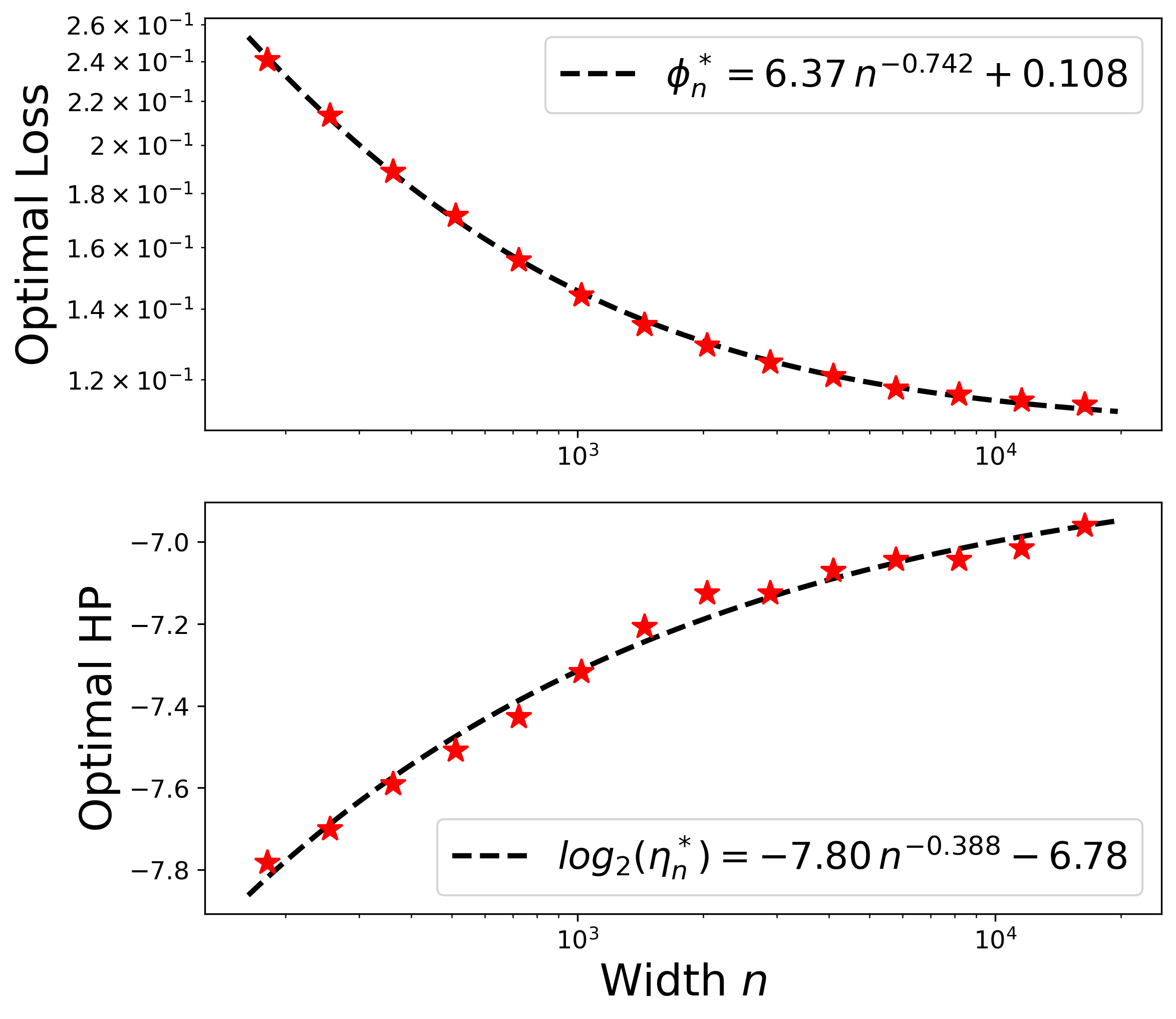}}  \vspace{-1mm}
\caption{\small Convergence of optimal loss and learning rate.} 
\label{fig:two-layer-ReLU-scaling}
\end{subfigure}%
\caption{\small Optimal learning rate (validation loss) for two-layer ReLU network to learn the ball indicator function. } 
\vspace{-1.5mm}
\label{fig:two-layer-ReLU} 
\end{figure}

\paragraph{Slow HP Transfer: Two-layer ReLU Network.} Next we consider a classification setting with a shallow ReLU neural network: $f(\bx) = \sum_{i=1}^n a_i \sigma(\langle \bw_i,\bx\rangle + b_i)$, where we aim to tune the learning rate $\eta$ that minimizes the validation loss. 
We set the target to be the norm indicator function, 
which is a well-studied function that requires a wide two-layer network to approximate \citep{safran2022optimization}, 
\[
y = \mathbbm{1}\big\{\norm{\bx}_2^2 > F_{\chi_d^2}^{-1}(0.5)\big\}, \quad \text{where~ } \textstyle\bx\sim\mathcal{N}(0,\bI_d),
\]
where $F_{\chi_d^2}^{-1}(0.5)$ represents the median of a chi-square distribution with $d$ degrees of freedom. This threshold ensures that the classes are exactly balanced. 
We set $d=2^6, n=2^{14}$, and run the Adam optimizer \citep{kingma2014adam} for $T=2^{14}$ steps with batch size $2^8$ to minimize the binary cross-entropy loss. The initialization and learning rate are set according to $\mu$P \citep{yang2021tensor}. 

In Figure~\ref{fig:two-layer-ReLU-loss} we observe that under $\mu$P, while the optimal learning rate admits a well-defined limit, there is still a visible drift towards the right. Moreover, Figure~\ref{fig:two-layer-ReLU-scaling} illustrates that under a power-law fit, the optimal $\eta$ converges slower than the validation loss, and the estimated scaling $b_n\sim \sqrt{a_n}$ suggests that the HP convergence rate does not beat the agnostic rate from strong convexity. We speculate that the absence of fast transfer in this example is due to the approximation hardness of the target function: achieving low test loss requires exponentially many neurons, and thus polynomial-width dynamics may not accurately track the infinite-width limit and fail to predict the optimal HPs. 
An interesting future direction is to analytically derive the scaling exponents in $a_n,b_n,c_n$ for similar $\mu$P examples and quantify the efficiency of learning rate transfer 
(beyond the definition of weak transfer in Section~\ref{sec:framework}).

\section{Fast Transfer via Trajectory Decomposition}\label{sec:fast-transfer}

Section~\ref{sec:useful-transfer} demonstrates that without further assumptions, the existence of a scale-independent limit of optimal HP (i.e., \textit{weak transfer}) does not entail that transfer is more computationally efficient than direct tuning. At a high-level, for useful transfer to occur in practical neural network optimization settings, the optimal HPs should depend on some statistics of the training trajectory that converge faster than statistics tied to the performance metric of interest. 
In this section, we explicitly extract the fast-converging statistics from the trajectory and connect them to the optimal HPs. 
To do so, we will leverage prior intuition on the \textit{low-dimensionality} of optimization trajectories (see Section \ref{sec:related_work}): intuitively speaking, we may expect the movement in a small number of HP-sensitive directions to significantly contribute to the loss decrease and approximately determine the optimal HP. If the associated low-dimensional statistics converge sufficiently fast, then the optimal HP also becomes stable across scale. For instance, it is plausible that the loss depends on all eigenvalues of the Hessian, whereas the learning rate is decided by the top components alone (which may converge faster with scale). In the ensuing subsection, we introduce a novel layer-wise spectral decomposition to identify the low-dimensional structure that informs the HP selection.

\subsection{\texorpdfstring{Top-$k$ Loss Decomposition}{Top-k Loss Decomposition}}\label{sec:topk_decomposition}

Let us consider an optimization trajectory $\traj = (\bw_0, \ldots, \bw_T)$ of a neural network. Define the one-step loss change $\delta \loss(\bw_t) := \loss(\bw_{t+1}) - \loss(\bw_t)$, so that the overall loss change is the sum
\[
\Delta \loss(\traj) := \loss(\bw_T) - \loss(\bw_0) = \sum\limits_{t=0}^{T-1} \delta \loss(\bw_t).
\]
Let $\bg_t := \grad \loss(\bw_t)$ and $\delta \bw_t := \bw_{t+1} - \bw_t$. Let us define $\delta \phi(\bw_t) := \inner{\bg_t}{\delta \bw_t}$ to be the linearization of $\delta \loss(\bw_t)$. Under appropriate smoothness conditions on the trajectory,
\begin{equation}\label{eq:linear_defs}
    \delta \phi(\bw_t) \approx \delta \loss(\bw_t)
\text{ ~~and~~ } \phi(\traj) := \sum\limits_{t=0}^{T-1} \delta \phi(\bw_t) \approx \Delta \loss(\traj).
\end{equation}
To ensure that such smoothness conditions hold and $\phi(\traj)$ is a useful proxy for $\Delta \loss(\traj)$, we take $\traj$ to be an \textit{exponential moving average} (EMA) of the base optimization trajectory to remove oscillatory behavior. Empirically, on realistic problems, we obtain excellent agreement between $\phi(\traj)$ and $\Delta \loss(\traj)$ for EMA trajectories which perform at least as well as the corresponding base trajectories (see Fig.\ \ref{fig:ema_iterate_lin}). 
We believe that this approach can be used more broadly in the empirical analysis of neural network optimization since it allows us to accurately approximate an optimization trajectory as the discretization of a continuous-time flow.

The utility of considering the linearization $\phi(\traj)$ in our setting is that we can decompose $\delta \phi(\bw_t)$ based on the structure of $(\bg_t, \delta \bw_t)$ and recover a resulting decomposition of $\phi(\traj)$. Let us consider a fixed time index $t$. If $\bw = (\bW^{(1)}, \ldots, \bW^{(L)})$ are the model parameters with corresponding gradients $\bg = (\bG^{(1)}, \ldots, \bG^{(L)})$ such that $\bW^{(\ell)}$ and $\bG^{(\ell)}$ are tensors of the same shape, then 
\[
    \inner{\bg}{\delta \bw} = \sum\limits_{\ell \in [L]} \inner{\bG^{(\ell)}}{\delta \bW^{(\ell)}}.
\]
Consider a single summand coming from $\bW \in \R^{m \times n}$ with corresponding gradient $\bG \in \R^{m \times n}$. To
isolate the dominant directions of descent, we analyze the alignment between the gradient and the update via the \emph{alignment matrix} which is the symmetrized product:
\begin{equation}\label{eq:alignmat}
    \sym(\bG, \delta \bW) := \textstyle\frac{1}{2}(\bG^\top \cdot \delta \bW + \delta \bW^\top \cdot \bG).
\end{equation}
If $\sym(\bG, \delta \bW)$ has eigenvalues $\lambda_1, \ldots, \lambda_n$ such that $|\lambda_1| \geq \cdots \geq |\lambda_n|$, then the loss change $\delta \phi(\bW)$ from the update to $\bW$ is the sum of the eigenvalues, and we define the top-$k$ stepwise linear loss change $\delta \phi^k(\bW)$ to be the sum of the first $k$ eigenvalues.
\begin{equation}\label{eq:topk-loss-param}
    \delta \phi(\bW) = \sum\limits_{i=1}^n \lambda_i~~ \text{ and }~~ \delta \phi^k(\bW) := \sum\limits_{i = 1}^k \lambda_i.
\end{equation}
Eq.\ \eqref{eq:topk-loss-param} captures an intuitive notion of the loss change in the top-$k$ directions of maximum change in loss. If the update $\delta \bW$ is aligned with the gradient, i.e., $\bG \propto  \delta \bW$, then $\delta \phi^k(\bW)$ is the sum of the top-$k$ singular values of $\bG$. We can add the stepwise changes over time and parameters\footnote{We will only decompose matrix parameters and use the full inner-product for vector parameters. \vspace{-2mm}} to compute the top-$k$ loss change, 
\begin{equation}\label{eq:topk-loss}
    \phi^k(\traj) := \sum\limits_{\ell \in [L]} \sum\limits_{t = 0}^{T-1} \delta \phi^k\qty(\bW^{(\ell)}_t). 
\end{equation}

For some parameters we employ a ``row version'' of $\delta \phi^k$ which uses $\sym(\bG^\top, \delta \bW^\top)$ in place of $\sym(\bG, \delta \bW)$. Then the index $k$ can span $[n]$ across all layers so that using the same $k$ for each layer in Eq.\ \eqref{eq:topk-loss} is reasonable.  

\begin{table}[t]
\vspace{-4mm}
    \centering
    \caption{\rev{\small Notation for the linearized trajectory decomposition.}}
    \vspace{-2mm}
    \label{tab:phi_notation}
    \small
    \begin{tabular}{@{}llc@{}}
        \toprule
        \textbf{Symbol} & \textbf{Definition} & \textbf{Ref.} \\
        \midrule
        $\delta \phi(\bw_t)$ & Linearized loss change at step $t$. & \eqref{eq:linear_defs} \\
        $\phi(\traj)$ & Total sum of $\delta \phi(\bw_t)$ over trajectory. & \eqref{eq:linear_defs} \\
        $\delta \phi^k(\bW^{(\ell)}_t)$ & Sum of top-$k$ components in layer $\ell$ at time $t$. & \eqref{eq:topk-loss-param} \\
        $\phi^k(\traj)$ & Sum of top-$k$ components over layers and time. & \eqref{eq:topk-loss} \\
        \midrule
        $\phi_n(\bhp)$ & Loss curve over HPs $\bhp$ for width $n$. & \eqref{eq:decomposition_quantities} \\
        $\hptrunc(\bhp)$ & Truncation function maps HPs $\bhp$ to index in $[n]$. & \eqref{eq:decomposition_quantities} \\
        $\phi_n^\hptrunc(\bhp)$ & HP adaptive top-$\hptrunc$ loss curve $\phi_n^{\hptrunc(\bhp)}(\bhp)$. & \eqref{eq:decomposition_quantities} \\
        \bottomrule
    \end{tabular}
\end{table}

\paragraph{Decomposition-aware HP Transfer.}
We can now make our earlier intuitions in Hypothesis~\ref{box:fast-transfer} more precise using this stepwise decomposition. Assume we fix a training procedure $\algo$ (see Definition \ref{def:hp_transfer}).
Then we can define the width-$n$ trajectory trained with HPs $\bhp$ and scaling $\bhpexp$ to be the trajectory $\traj_n(\bhp, \bhpexp)$ obtained from executing the training procedure $\algo$ with hyperparameters $\scaledhps_n(\bhp, \bhpexp)$. If we consider a fixed scaling $\bhpexp$ and an HP-dependent \emph{truncation function} $\hptrunc$ which outputs an index $\hptrunc(\bhp) \in [n]$ given HPs $\bhp$, we can abbreviate 
\begin{align}\label{eq:decomposition_quantities}
    &\phi_n(\bhp) := \phi(\traj_n(\bhp, \bhpexp)),~  \phi_n^\hptrunc(\bhp) := \phi^{\hptrunc(\bhp)}(\traj_n(\bhp, \bhpexp)) \notag \\
    &\bhp^\star(n) := \argmin_{\bhp} \phi_n(\bhp),~  \bhp_\hptrunc^\star(n) := \argmin_{\bhp} \phi^{\hptrunc(\bhp)}_n(\bhp).
\end{align}
We will refer to $\phi_n^\hptrunc$ as the \textbf{top-$\hptrunc$ loss curve} and $\phi_n^{-\hptrunc} := \phi_n - \phi_n^\hptrunc$ as the \textbf{residual loss curve}. For
convenience, we summarize the notation for our trajectory decomposition in Table \ref{tab:phi_notation}.

Based on this decomposition, we have the following conceptual explanation of fast transfer: if for an appropriately chosen sequence $\hptrunc_n$ the following are simultaneously true for large enough $n$, 
\begin{itemize}[leftmargin=*]
    \item \textbf{Top-$\hptrunc$ strong convexity:} The top-$\hptrunc_n$ losses $\phi_n^{\hptrunc_n}$ and $\phi_\infty^{\hptrunc_n}$ are locally strongly-convex.
    \item \textbf{Top-$\hptrunc$ invariance:} The top-$\hptrunc_n$ loss converges rapidly so $\phi_n^{\hptrunc_n} \approx \phi_\infty^{\hptrunc_n}$ and $\bhp^\star_{\hptrunc_n}(n) \approx \bhp^\star_{\hptrunc_n}(\infty)$.
    \item \textbf{Residual Flatness:} The residuals $\phi_n^{-\hptrunc_n}$ and $\phi_\infty^{-\hptrunc_n}$ are ``flat''; hence $\bhp_{\hptrunc_n}^\star(n) \approx \bhp^\star(n)$ and $\bhp_{\hptrunc_n}^\star(\infty) \approx \bhp^\star(\infty)$.
\end{itemize}
then it follows that $\bhp^\star(n) \approx \bhp^\star(\infty)$, that is, the optimal HP remains stable across widths $n$ (see Appendix \ref{app:fast_transfer} for details). This is precisely the mechanism posited in Hypothesis~\ref{box:fast-transfer}: the top-$\hptrunc_n$ loss optimizer stabilizes quickly as width grows and the residual loss is too flat to significantly shift the optimum of the full loss. Due to the prohibitive computational cost to accurately estimate the scaling of the above quantities in realistic settings, in this work we present \textit{qualitative} evidence that the above conditions hold empirically (hence the notation ``$\approx$''). We leave a more quantitative investigation to future work. 

\paragraph{Selecting the Truncation Function.}
Intuitively, selecting $\hptrunc$ involves a trade-off: we must retain enough components to ensure the top-$\hptrunc$ minimizer approximates the true optimal hyperparameters (requiring larger $k$), while excluding tail components that are width-sensitive (requiring smaller $k$). In Appendix \ref{app:fast_transfer}, we formalize this by defining a quantity $\decompobj_n(\hptrunc)$ which upper bounds the HP gap $b_n$ using quantitative measures of top-$\hptrunc$ invariance and residual flatness. By choosing a truncation $\hptrunc_n^\star$ that minimizes $\decompobj_n(\hptrunc)$, we optimally balance these properties to obtain the tightest upper bound on $b_n$. \rev{Since directly optimizing $\decompobj_n$ is intractable, we instead optimize a proxy objective $\decompobj_{\mathrm{proxy}}(\hptrunc)$ to obtain a truncation $\hat{\hptrunc}(n)$ via Algorithm \ref{alg:compute-hptrunc-all} (Appendix \ref{app:selection_process}). We then validate empirically that $\hat{\hptrunc}(n)$ achieves the desired qualitative properties.}

\subsection{Experimental Results}\label{sec:tf_experiments}
We empirically evaluate the explanation of fast transfer developed in Section~\ref{sec:fast-transfer}. Across all settings, we vary the width $n$ as the scaling dimension and for each width we sweep the peak learning rate using $\mu$P. In Section~\ref{sec:llama_adam} we train Llama-style transformers on WikiText-103 using Adam. We find near-perfect learning-rate transfer, along with top-$k$ invariance and residual flatness as posited in our fast transfer conjecture. In Section~\ref{sec:llama_muon} we repeat the same setup but with Muon and observe that transfer is less stable and top-$k$ invariance is weaker. In Section~\ref{sec:mlp_sgd} we train two-layer MLPs on CIFAR-10 with momentum SGD, and interpret what the top and tail components are capturing by introducing a sample-wise version of the top-$k$ decomposition; we use this refined decomposition to connect the quality of transfer to the ``hardness'' of the samples.

\subsubsection{Llama with Adam}\label{sec:llama_adam}
We train a Llama-style transformer~\citep{touvron2023llama} using the Adam optimizer~\citep{kingma2014adam} with a warmup-stable-decay (WSD) learning rate schedule~\citep{hu2024minicpm} on WikiText-103~\citep{merity2016pointer} using $\mu$P, and sweep the peak learning rate as shown in Figure~\ref{fig:ema_llama_step_7185}; 
detailed setup can be found in Appendix \ref{app:llama_adam_lr}. 
In Appendix \ref{app:adam_betas}, we further examine the Adam $\beta_1$ and $\beta_2$ hyperparameters in this setting.

\begin{figure}[!htb]
\vspace{-2mm}  
    \centering
    \begin{subfigure}[t]{0.49\linewidth}
        \centering
        \includegraphics[width=\linewidth]{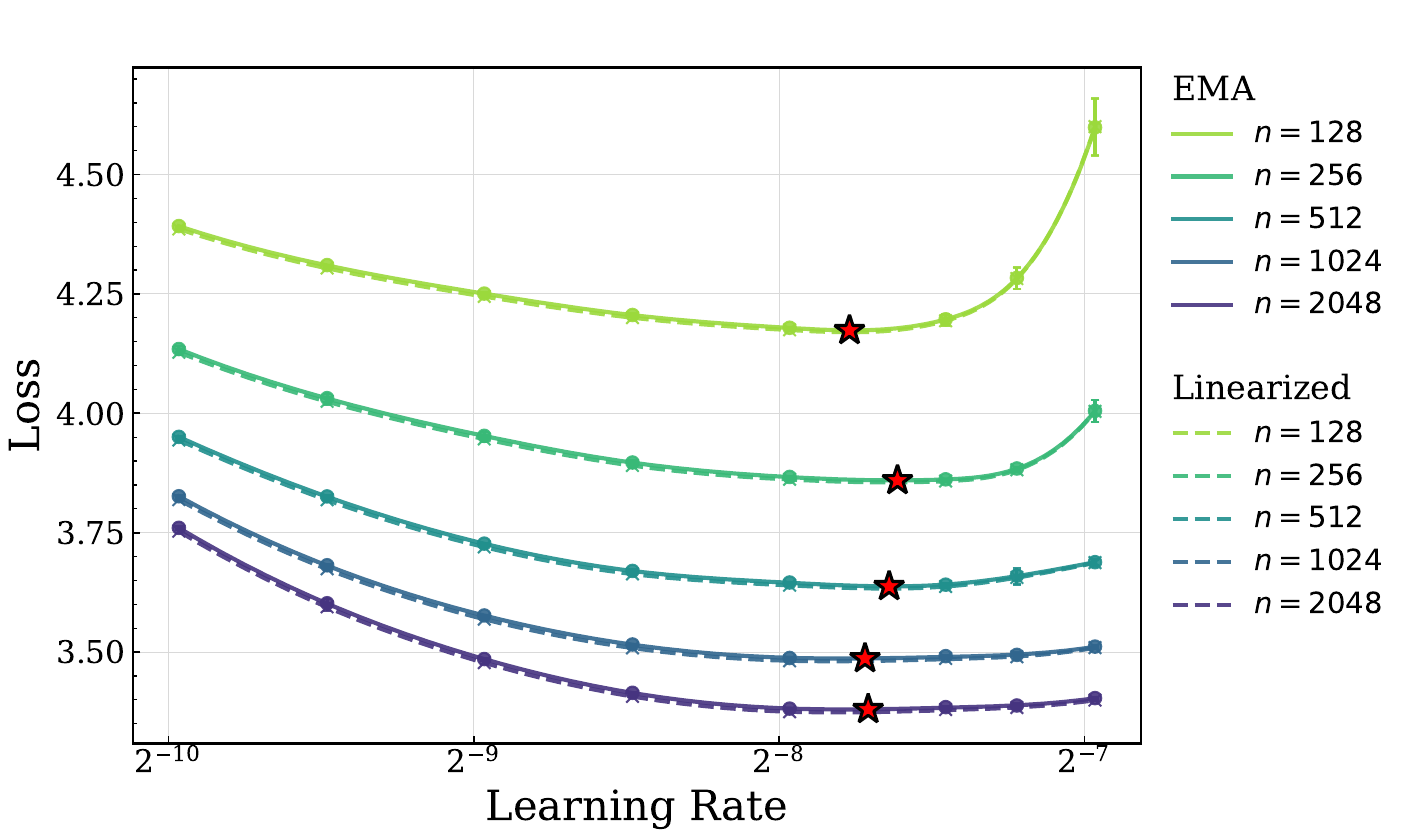}
        \caption{EMA loss (solid) and linearized loss (dashed).}
        \label{fig:ema_adam_lr_step_7185}
    \end{subfigure}
    \hspace{2.5mm}
     \begin{subfigure}[t]{0.4\linewidth}
        \centering
        \includegraphics[width=\linewidth]{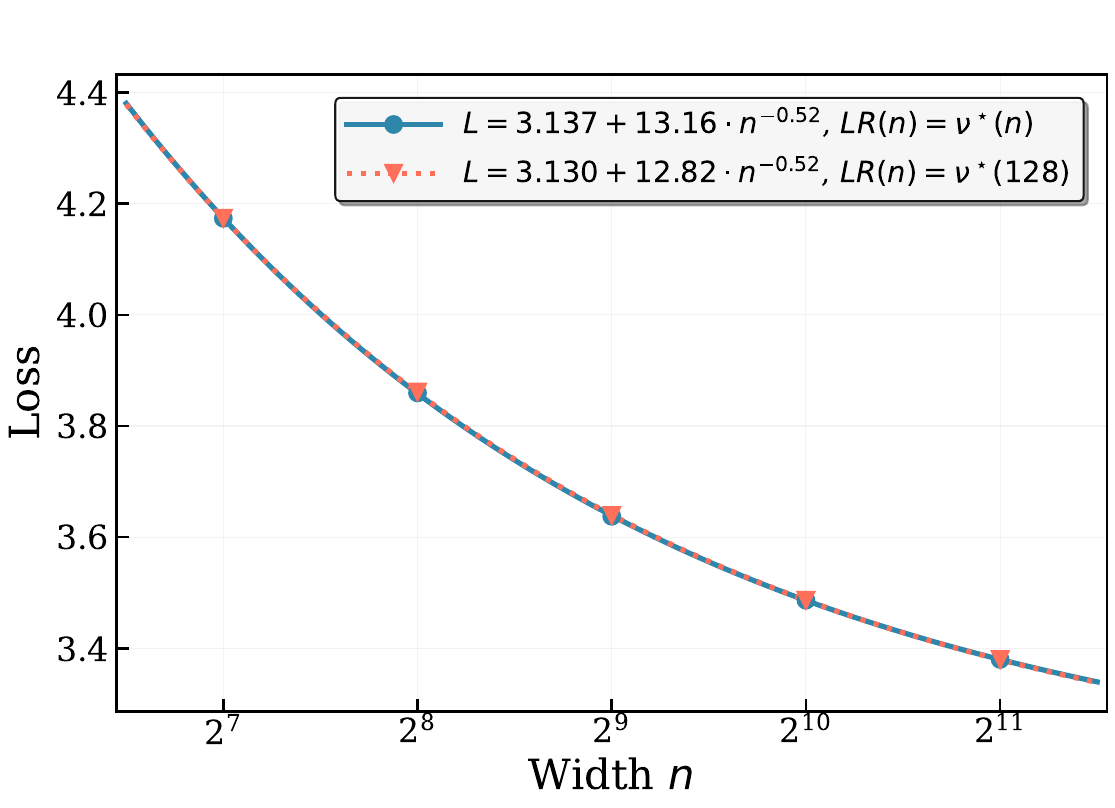}
        \caption{Scaling law for the EMA loss.}
        \label{fig:ema_adam_step_7185_scaling_law}
    \end{subfigure}
    \caption{\small Training a 4-layer Llama transformer with Adam optimizer on WikiText-103. \textbf{Left:} EMA and linearized losses nearly coincide, indicating small linearization error. \textbf{Right:} EMA loss for each width $n$ for two HP choices: 
    the width-dependent optimal learning rate $\hp^\star(n)$ [blue dots] and the  width-$128$ optimal learning rate $\hp^\star(128)$ [orange triangles]. Overlapping curves indicate perfect transfer across widths.
    }
    \label{fig:ema_llama_step_7185}
\end{figure}

\begin{wrapfigure}{r}{0.42\textwidth}  
\vspace{-0.8mm} 
    \centering
    \includegraphics[width=0.41\textwidth,keepaspectratio]{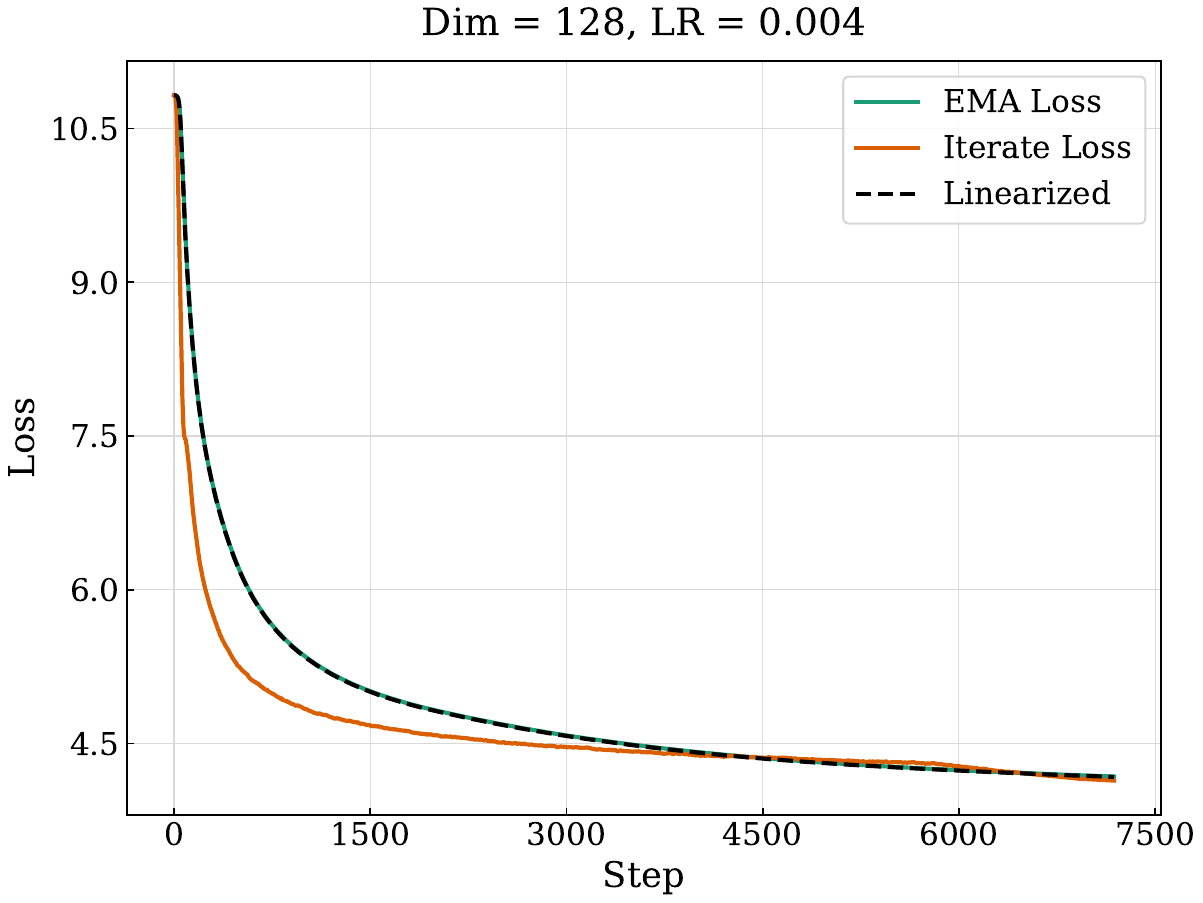}
    \vspace{-3.6mm}
    \caption{\small Transformer training on WikiText-103. The linearized and EMA losses are identical through training and close to the final iterate loss.}
    \label{fig:ema_iterate_lin}
    \vspace{-7mm}
\end{wrapfigure}

\paragraph{Fast Transfer and Linearization Faithfulness.}  As we can see from Figures \ref{fig:ema_llama_step_7185} and \ref{fig:ema_iterate_lin}, the EMA loss and the linearized loss $\phi$ are nearly indistinguishable, indicating that the EMA trajectory is sufficiently smooth. From Figure \ref{fig:ema_iterate_lin} we see that smoothing does not degrade the final loss. Our setting also clearly exhibits fast transfer since the optimal learning rate is converging rapidly with the width $n$ (see Fig.\ \ref{fig:ema_adam_lr_step_7185}) while the reducible loss improves more slowly, converging at a rate of $n^{-0.52}$ (see Fig.\ \ref{fig:ema_adam_step_7185_scaling_law}). Using the optimal learning rate obtained at width $n = 128$ for larger widths is essentially optimal, as indicated by the overlapping curves in Figure \ref{fig:ema_adam_step_7185_scaling_law}. We now further probe the optimization and scaling dynamics in this fast transfer setting through the lens of our decomposition.

\begin{figure}[b]
\vspace{-3.7mm} 
    \centering
    \includegraphics[width=0.95\linewidth]{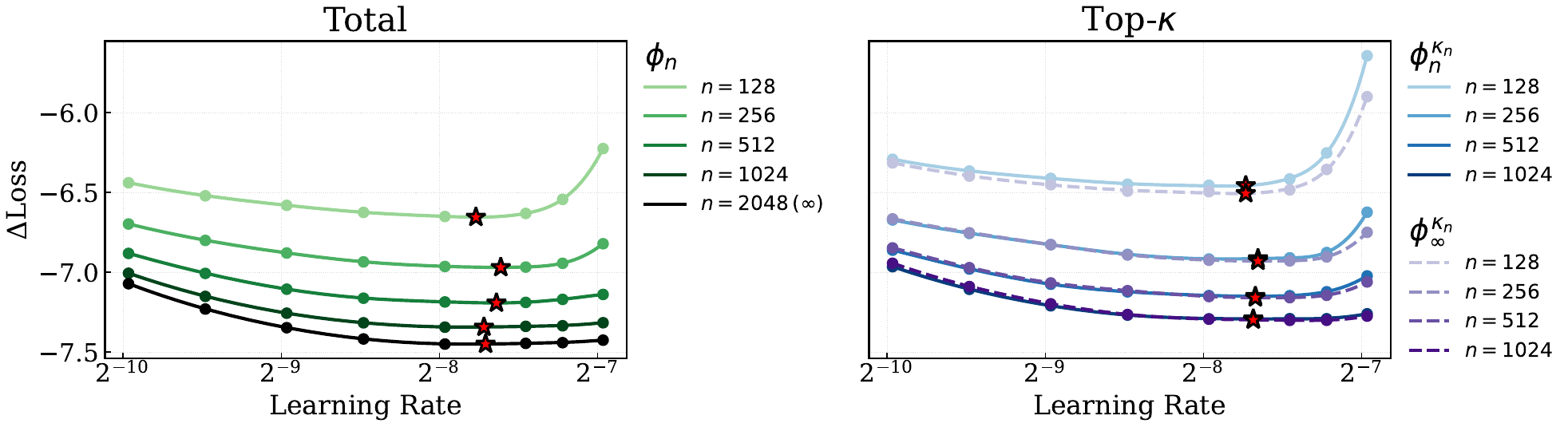}
    \vspace{-2mm}
    \caption{\small \textbf{Left:} Total loss curves $\phi_n$ across widths. \textbf{Right:} Top-$\hptrunc_n$ loss curve pairs $\phi_n^{\hptrunc_n}$ (blue dashed) and $\phi_\infty^{\hptrunc_n}$ (purple dashed). The top-$\hptrunc_n$ pairs nearly overlap, with minimizers close to those of the corresponding total losses.}
    \label{fig:top_k_across_widths}
\end{figure}
\begin{figure}[!htb]
\vspace{-2mm}
    \centering
    \includegraphics[width=0.95\linewidth]{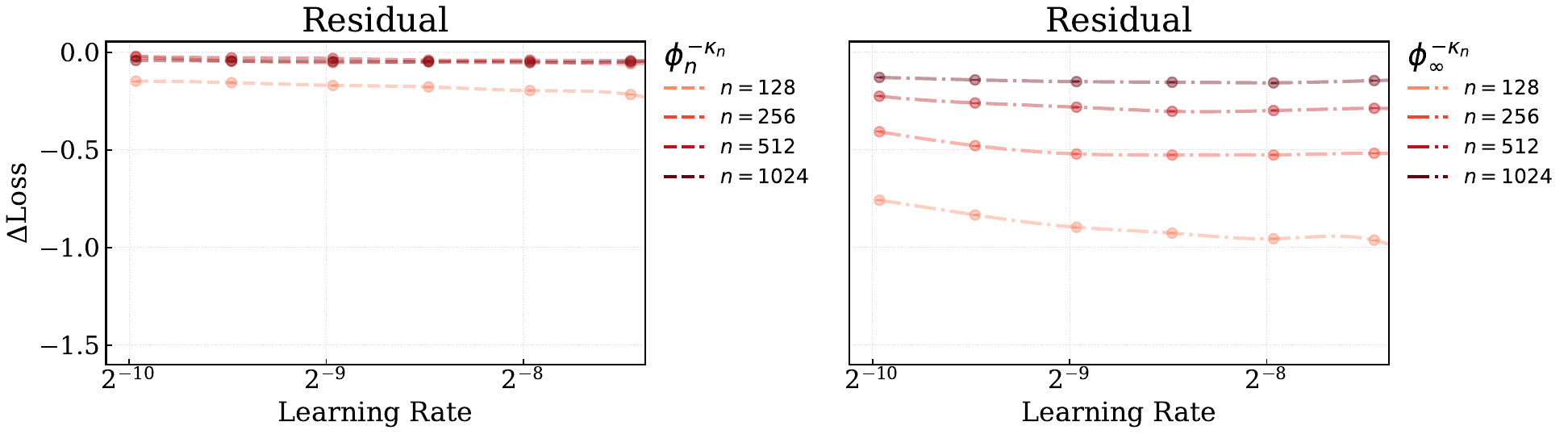}
    \vspace{-2mm}
    \caption{\small Residual losses are flat around the top-$\hptrunc_n$ minimizers, indicating less sensitivity to the learning rate.}
    \label{fig:resids_across_width}
    \vspace{-3.5mm}
\end{figure}

\paragraph{Decomposition Over Time.} Figure~\ref{fig:split-loss-over-time-step-7185} displays the top-$k$ loss with $k = 60$ and the residual loss over training, across multiple widths, at a fixed learning rate. We see that throughout training, the top-$k$ loss is nearly width-invariant and accounts for the majority of loss reduction. This indicates that the bulk of the improvement due to optimization comes from a low-dimensional subspace, and the benefit of width mostly comes from improving the residual loss. Accordingly, the ``width-dependent'' learning largely occurs in the tail components (and their contribution becomes more pronounced later in training). 

\paragraph{Decomposition Across Widths.} To evaluate our fast transfer conjecture, we apply our loss decomposition across widths and compute $\hptrunc_n = \hat{\hptrunc}(n)$ using Algorithm \ref{alg:compute-hptrunc-all}. We use the largest width $n_{\max} = 2048$ as an infinite-width proxy, and consider transfer from finite widths $n < n_{\max}$ to this proxy. In the right panel, we see that the top-$\hptrunc_n$ loss curves nearly overlap across learning rates, i.e., $\phi_n^{\hptrunc_n} \approx \phi_\infty^{\hptrunc_n}$, despite the large gap in the total losses $\phi_n$ and $\phi_\infty$ shown in the left panel. Furthermore, the minimizers of the total loss are largely determined by the minimizers of the respective top-$\hptrunc_n$ loss, in the sense that both $\bhp_{\hptrunc_n}^\star(n) \approx \bhp^\star(n)$ and $\bhp_{\hptrunc_n}^\star(\infty) \approx \bhp^\star(\infty)$ (see Eq.\ (\ref{eq:decomposition_quantities})). In Figure \ref{fig:resids_across_width} we plot the corresponding residuals and see that they are flatter than the top-$\hptrunc_n$ losses in a neighborhood of the corresponding top-$\hptrunc_n$ minimizer. As a result, the residuals contribute less to the determination of the overall optimal learning rate.

\begin{figure}[t]
\vspace{-3mm}
    \centering
    \includegraphics[width=0.88\linewidth]{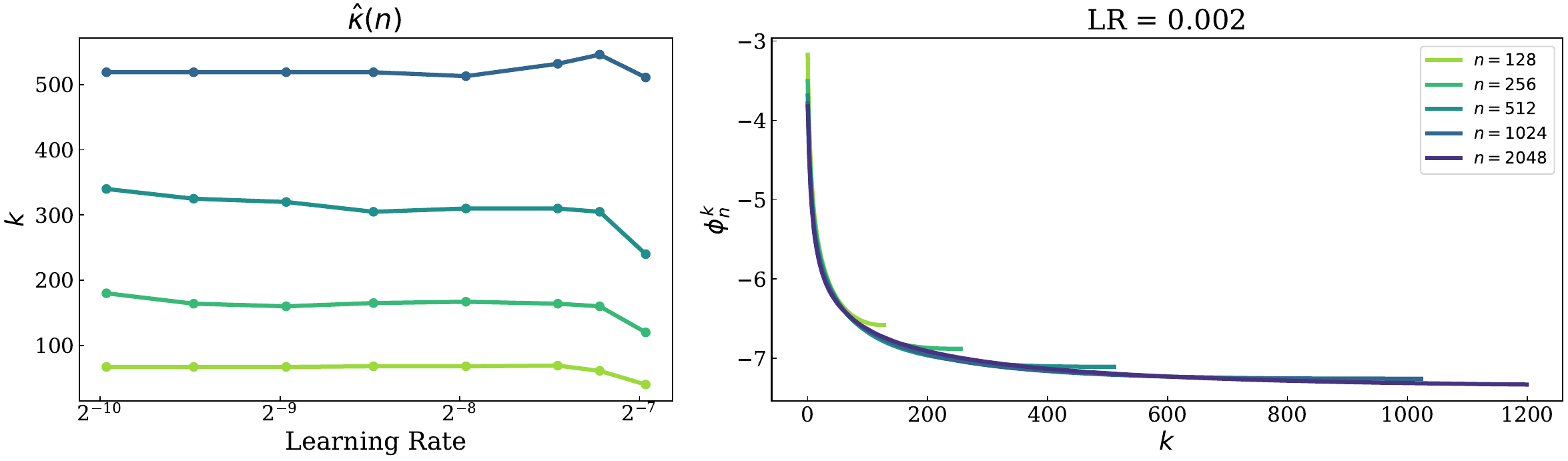}
    \vspace{-1.5mm}
    \caption{\small \textbf{Left:} Computed values of $\hat{\hptrunc}(n)$ using Algorithm \ref{alg:compute-hptrunc-all}. \textbf{Right:} Top-$k$ losses $\phi_n^k$ for $\mathrm{LR}=0.002$, which descend rapidly with $k$ and overlap across widths over an intermediate range where top-$k$ invariance holds.}
    \label{fig:adam_topk_profile}   
    \vspace{-1mm}
\end{figure}

\paragraph{Top-$k$ Profile.} To further interpret the decomposition and the computed values $\hat{\hptrunc}(n)$, we now examine Figure \ref{fig:adam_topk_profile}. The left panel shows the computed values of $\hat{\hptrunc}(n)$, which are approximately constant across learning rates and grow sublinearly with $n$. The right panel plots $\phi_n^k$ for a fixed learning rate as a function of $k$. The top-$k$ loss varies smoothly with $k$, dropping rapidly at first and then flattening out. Notably, we can see that $\hat{\hptrunc}(n)$ is chosen roughly at the index $k$ where the curve $\phi_n^k$ ``peels off'' from the $\phi_{\infty}^k$ curve. This index marks the transition where increasing $k$ starts to include width-sensitive directions and balances the tradeoff discussed at the end of Section \ref{sec:topk_decomposition}. We see in the figure that this transition point increases with $n$ since the finite-width model shares more converged components with the infinite-width proxy. Consequently, $\hat{\hptrunc}(n)$ increases with $n$, and the residual $\phi_\infty^{-\hptrunc_n}$ shown in the right panel of Figure~\ref{fig:resids_across_width} decreases in magnitude. This differs from Figure~\ref{fig:split-loss-over-time-step-7185}, where we fixed $k = 60$ to isolate a width-invariant index and the residual magnitude grew with $n$. Compared to the infinite-width residual, the finite-width residual $\phi_n^{-\hptrunc_n}$ is smaller since there are fewer tail components, hence the curves in the left panel of Figure~\ref{fig:resids_across_width} are closer to zero.

Overall, we can qualitatively see how our decomposition can account for fast transfer in the sense described in Section~\ref{sec:fast-transfer}, even when the convergence of the loss itself is much slower. The provides concrete evidence for our central hypothesis that there is a low-dimensional projection of the trajectory which remains nearly invariant across width and is responsible for deciding the learning rate. 
In Appendix \ref{app:gpt_adam_lr}, we observe similar fast-transfer behavior for GPT-2 architecture trained on FineWeb~\citep{penedo2024fineweb} using Adam. 

\begin{figure}[b]
\vspace{-6mm}
    \centering
    \begin{subfigure}[t]{0.49\linewidth}
        \centering
        \includegraphics[width=\linewidth]{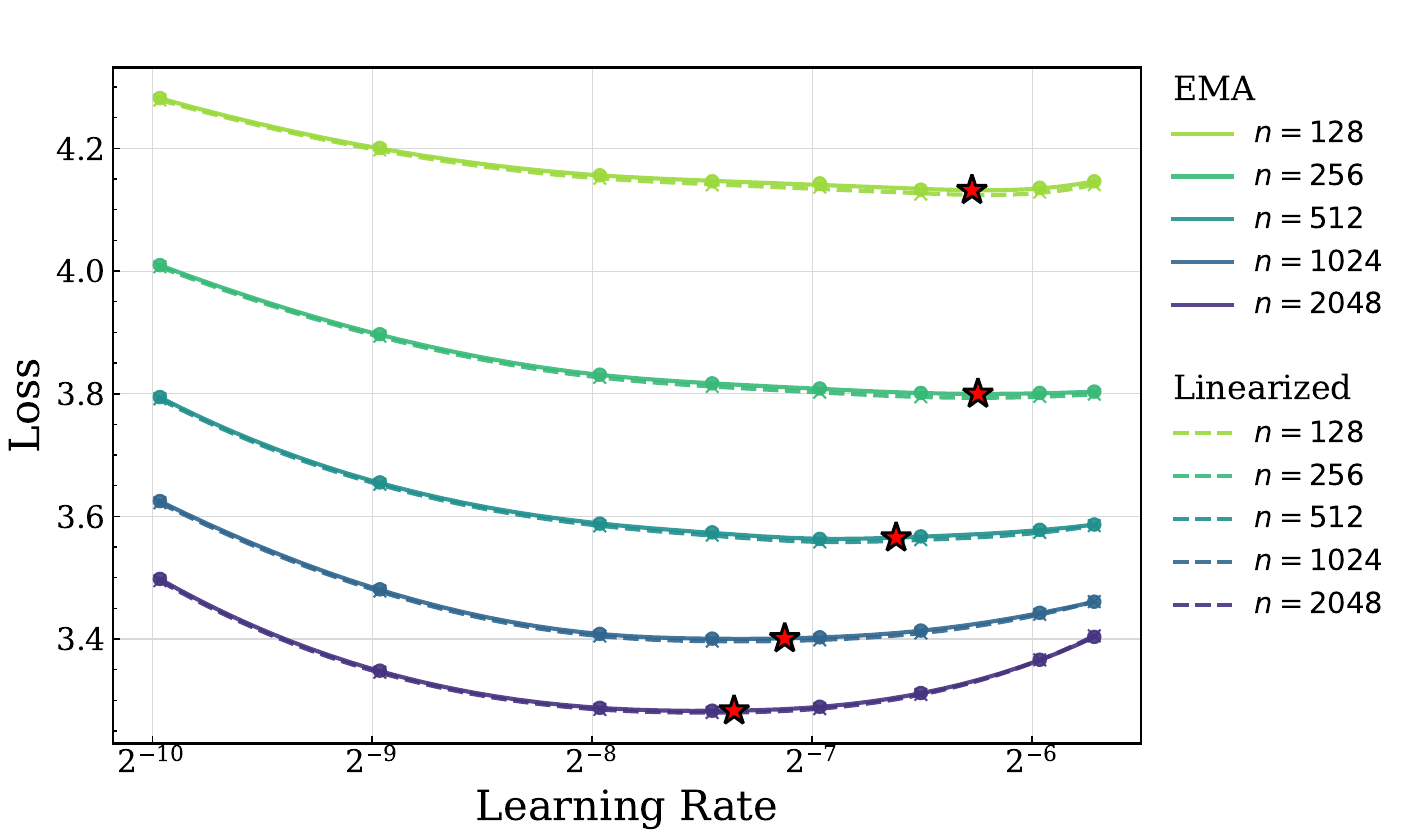}
        \caption{EMA loss (solid) and linearized loss (dashed).}
        \label{fig:ema_muon_lr_step_7185}
    \end{subfigure}
    \hspace{2.5mm}
     \begin{subfigure}[t]{0.4\linewidth}
        \centering
        \includegraphics[width=\linewidth]{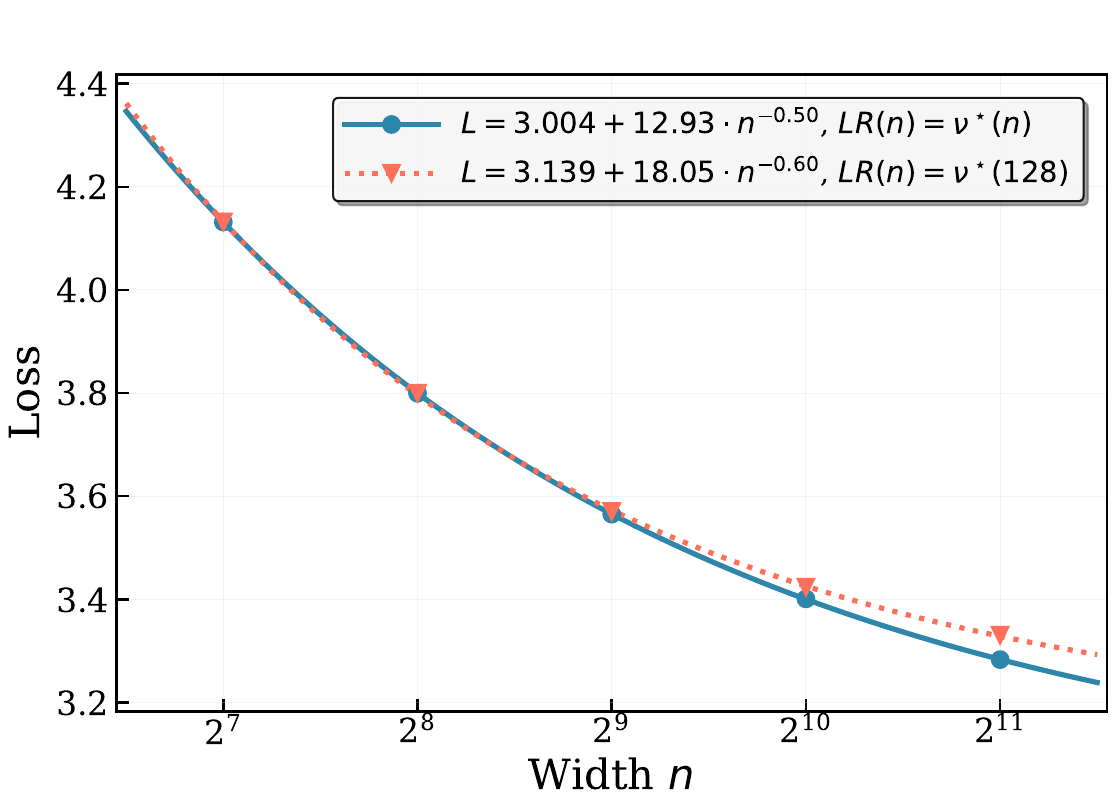}
        \caption{Scaling law for the EMA loss.}
        \label{fig:ema_muon_step_7185_scaling_law}
    \end{subfigure}
    \caption{\small Same model and dataset as Fig.\ \ref{fig:ema_llama_step_7185}, but trained with Muon. \textbf{Left:} EMA and linearized losses coincide. \textbf{Right:} EMA loss versus width $n$ for the learning rate choices $\hp^\star(n)$ [blue dots] and $\hp^\star(128)$ [orange triangles]. At large $n$, using $\hp^\star(128)$ becomes suboptimal, indicating imperfect transfer.}
    \vspace{-2mm}
    \label{fig:ema_llama_step_7185_muon}
\end{figure}

\subsubsection{Llama with Muon}\label{sec:llama_muon} 

We repeat the same experiments in Section \ref{sec:llama_adam} but using the recently popularized Muon optimizer~\citep{jordan2024muon}, which updates the model weights via the orthogonalized momentum gradients (see Appendix \ref{app:llama_muon} for details). Figure~\ref{fig:ema_muon_lr_step_7185} shows that the optimal learning rate shift is more pronounced with Muon than with Adam (see Figure \ref{fig:ema_adam_lr_step_7185}). This larger shift leads to suboptimal performance at larger widths when using transferred hyperparameters, as demonstrated in Figure~\ref{fig:ema_muon_step_7185_scaling_law}: the scaling law using the transferred hyperparameter diverges noticeably from the one using the optimal hyperparameters. While this transfer suboptimality is offset by the improved overall performance of Muon, it is still an interesting example of ``imperfect'' transfer, and our decomposition reveals distinctive properties of Muon's learning dynamics. 

\begin{wrapfigure}{r}{0.44\textwidth}  
\vspace{-3.6mm} 
    \centering
    \includegraphics[width=0.43\textwidth,keepaspectratio]{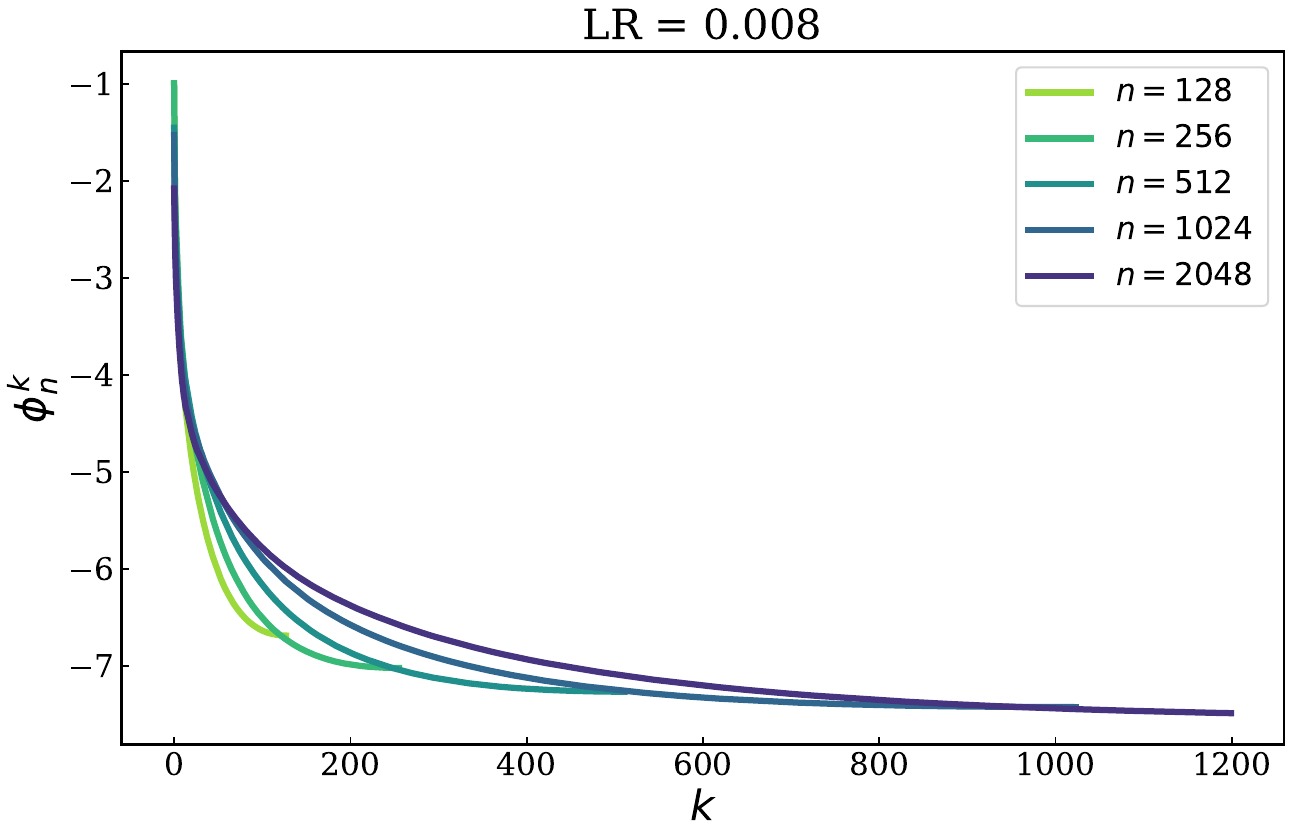}
    \vspace{-2mm}
    \caption{\small Top-$k$ losses $\phi_n^k$ for Muon. The curves descend and flatten out slowly with $k$, especially for large $n$, showing that top-$k$ invariance holds only for a narrow range of $k$ and the residual is significant.}
    \label{fig:muon_topk_profile}
    \vspace{-2mm}
\end{wrapfigure}

Figure \ref{fig:muon_topk_profile} reveals that top-$k$ invariance holds only for small $k$ (up to $k_0 \approx 10$). 
This is in sharp contrast with the decomposition for Adam (Figure~\ref{fig:adam_topk_profile}), in which we observe approximately invariant $\phi_n^k$ (up to $k \approx 60$) that explains a large fraction of the loss reduction. For $k > k_0$, we clearly see that $\phi_n^k$ increases with $n$, even though $\phi_n$ decreases with $n$. 
This indicates that under our layer-wise decomposition, Muon is ``spreading'' the loss decrease over more directions: each direction contributes less, but the cumulative decrease over all directions is larger since $\phi_n$ decreases with $n$. Intuitively speaking, this lack of low-dimensional invariance is connected to the ``whitening'' step in Muon which increases the effective rank of the gradient \citep{frans2025really,davis2025spectral}. Consequently, the low-rank structure in Adam and SGD updates that enables fast transfer may not be present in full-matrix preconditioned updates.

In Appendix \ref{app:gpt_muon_lr}, we report the learning rate transfer of Muon in a different problem setting: training GPT-2 on the FineWeb dataset. 
There again we find that Muon displays a less stable optimal learning rate across widths, but due to the flatness of the overall loss, the transfer performance is still nearly optimal. 
Interestingly, in Figure~\ref{fig:ema_muon_lr_multiple_topk_profile} we observe that the decomposition looks qualitatively different for different learning rates. In particular, while the small learning rate decomposition resembles that in our earlier Muon experiment (Figure~\ref{fig:muon_topk_profile}), at near-optimal learning rates we instead see more pronounced top-$k$ invariance that is closer to the Adam decomposition (Figure~\ref{fig:adam_topk_profile}). 
We speculate that this can happen when the dynamics of the layers trained with AdamW (e.g., the input and output layers, see Appendix~\ref{app:muon}) tend to dominate at certain learning rates, but we leave careful verification of this to future work. In any case, this example highlights that the correct notion of invariance for Muon is subtle and can be hyperparameter dependent. 

\paragraph{Experiment with Muon Variants.} 
To investigate the effect of whitening on learning rate transfer and the decomposition in a more controlled manner, in Appendix~\ref{app:dion} we perform the same experiment using the Dion optimizer~\citep{ahn2025dion}, which has a rank hyperparameter controlling the dimension of the whitened subspace. If we choose the Dion rank to be small, we expect a higher degree of invariance and less influence of the residual, which should lead to improved transfer. In Figure~\ref{fig:ema_llama_step_7185_dion_bounded_rank}, we see that setting the rank equal to $\min(n/2, 128)$ indeed greatly improves transfer compared to using a rank of $n/2$, which closely resembles the Muon results (Figure~\ref{fig:ema_llama_step_7185_muon}). Perhaps surprisingly, although Dion with bounded rank does exhibit more stable transfer, the resulting top-$k$ decomposition (Figure~\ref{fig:ema_dion_bounded_rank_lr_topk_profile}) is more Adam-like, but still looks qualitatively similar to that for Muon. We speculate that there exists a different notion of invariance\footnote{For example, it could make sense to consider invariance for the \emph{rescaled} top-$k$ loss $\tilde{\phi}_n^k := \phi_{n}^{m_n(k)}$ where $m_n(k) = \alpha k n^{\beta}$ for some $\alpha > 0$, $\beta \in (0, 1)$ and $k \in [n^{1-\beta} / \alpha]$, since this would capture an increasing number of directions with $n$. \vspace{-3.5mm} } for optimizers like Muon and Dion, obtained by modifying our proposed decomposition, that leads more cleanly to a qualitative picture similar to what we see for SGD and Adam; for such suitable notion of invariance, we expect the corresponding residual loss will be flatter for Dion with bounded rank compared to Muon. 

\subsubsection{MLP with SGD}\label{sec:mlp_sgd}
For a setting with a new data modality and architecture, we now consider learning rate transfer in two-layer MLPs trained on CIFAR-10 using momentum SGD. Additional details can be found in Appendix \ref{app:mlp_sgd}. 
This setting will have the benefit of being lightweight enough to support a detailed analysis and interpretation of our decomposition from a \textit{data-centric} point of view (similar in spirit to \citet{zhong2021larger,kaplun2022deconstructing,ilyas2022datamodels}). 
In Figure \ref{fig:ema_sgd_step_9695} we can see that our linearization described in Section \ref{sec:topk_decomposition} is accurate and the optimal HP is stable across width (though transfer is ``imperfect'' compared to Figure \ref{fig:ema_llama_step_7185}). 
We also observe that the benefit of width is less pronounced compared to the language setting and that loss convergence occurs at a faster rate of approximately $n^{-0.77}$. This aligns with the intuition that CIFAR-10 is a ``simpler'' task; in Appendix \ref{app:mlp_sgd} we further support this intuition through the top-$k$ decomposition (Fig.~\ref{fig:sgd_kvec}). 

\begin{figure}[t]
\vspace{-4mm}
    \centering
    \begin{subfigure}[t]{0.49\linewidth}
        \centering
        \includegraphics[width=\linewidth]{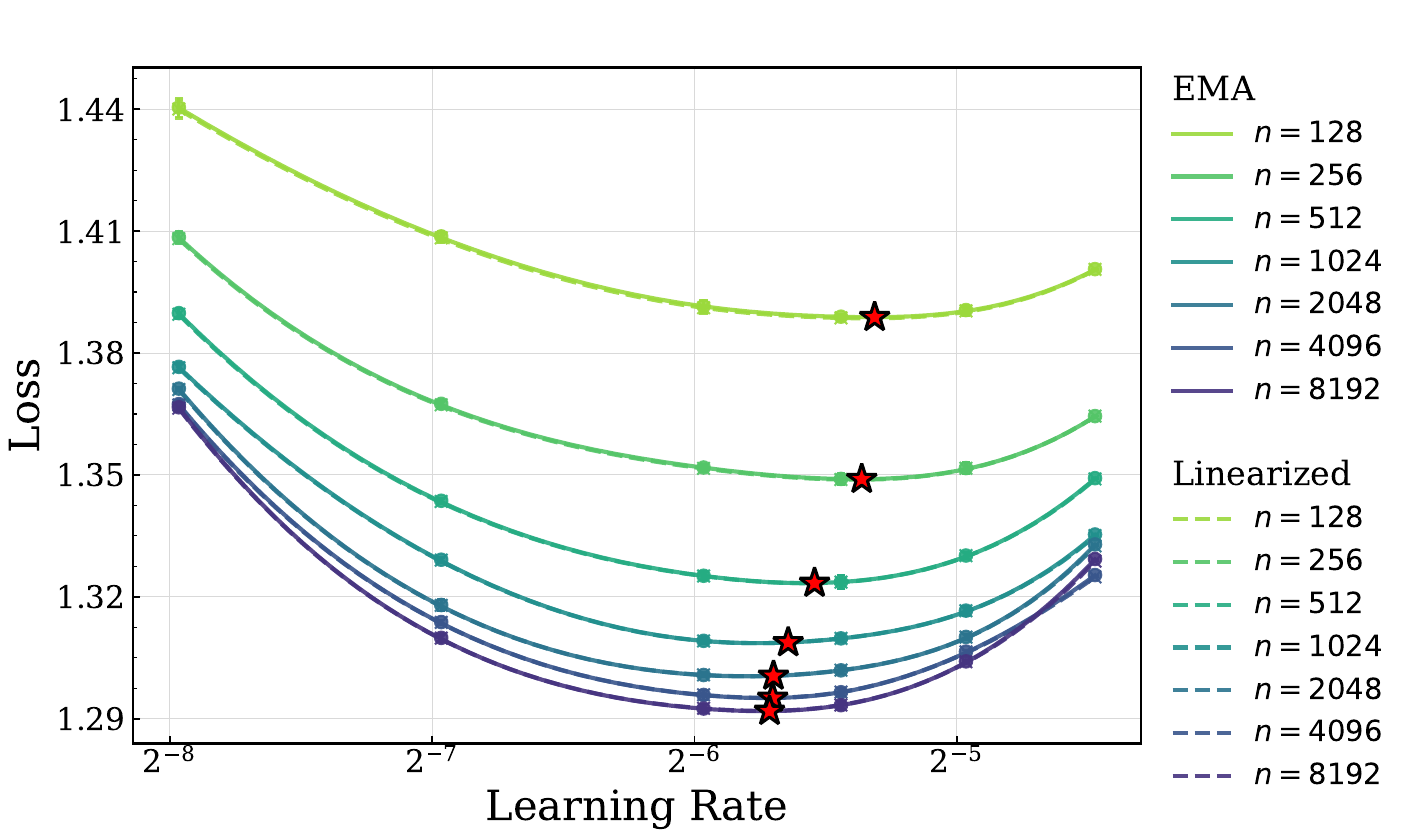}
        \caption{The final EMA loss (solid) and final linearized.}
        \label{fig:ema_sgd_lr_step_9695}
    \end{subfigure}
    \hspace{2.5mm}
    \begin{subfigure}[t]{0.4\linewidth}
        \centering
        \includegraphics[width=\linewidth]{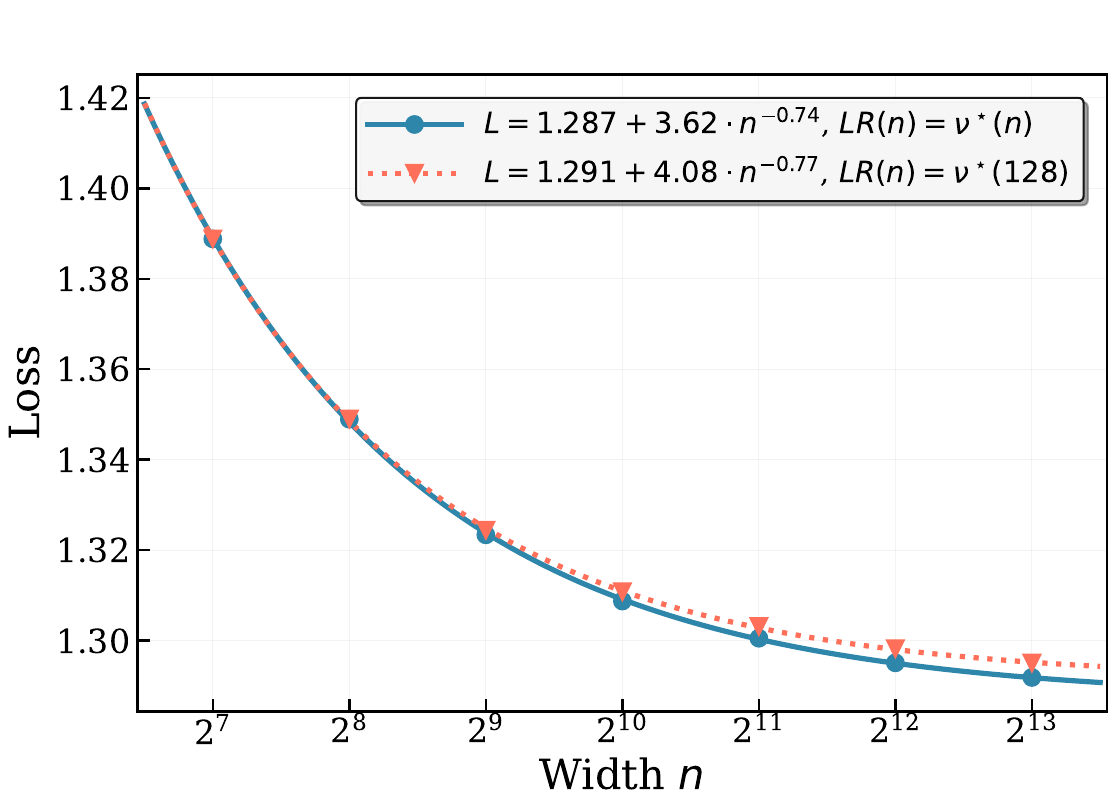}
        \caption{Scaling law for the EMA loss.}
        \label{fig:ema_sgd_lr_step_9695_scaling_law}
    \end{subfigure}
    \vspace{-1mm}
    \caption{\small Two-layer MLP trained on CIFAR-10 using SGD. We observe reliable but imperfect transfer. \textbf{Left:} The EMA and linearized losses coincide. \textbf{Right:} EMA loss across widths using the width-dependent optimal learning rate $\hp^\star(n)$ [blue dots] and the fixed width-128 choice $\hp^\star(128)$ [orange triangles].}
    \vspace{-1mm}
    \label{fig:ema_sgd_step_9695}
\end{figure}

\paragraph{Sample-wise Decomposition.} We have previously seen that the top-$k$ components account for the majority of loss decrease, while the tail components account for the improved learning at larger widths. However, this picture does not shed light on what structures the top vs.~tail components are actually learning. To gain a better qualitative understanding, we now apply our decomposition at a sample-wise level and examine how different components affect the loss on each example. We conjecture the following structure:
\begin{itemize}[leftmargin=*,topsep=1mm]
    \item \textbf{Easy Examples:} Examples that are almost entirely learned by the \textit{top components} are ``easy'' examples. The loss on these examples will show fast transfer and essentially determine the optimal HPs.

    \item \textbf{Hard Examples:} Examples that rely on the \textit{tail components} are ``hard'' examples. These examples are learned differently across widths because tail components are not width-invariant, hence slowing transfer.
\end{itemize}

\begin{figure}[b]
\vspace{-1.25mm}
    \centering
    \begin{subfigure}[t]{0.37\linewidth}
        \centering
        \includegraphics[width=\linewidth]{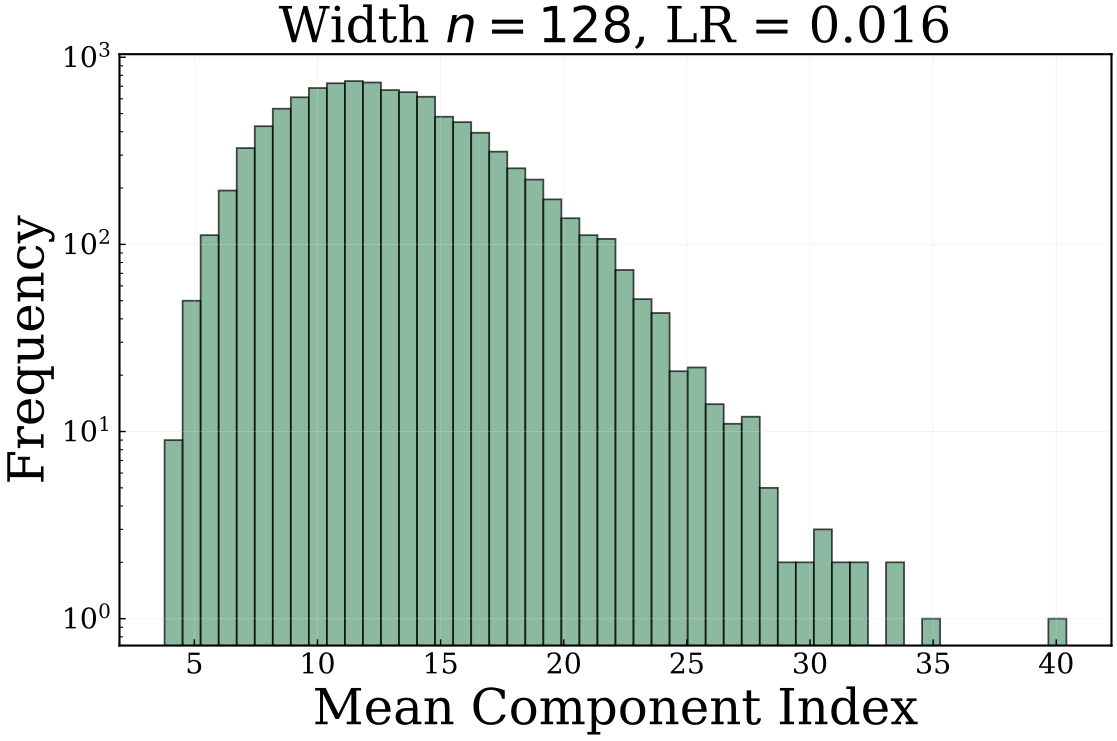}
        \caption{Histogram of MCI (CIFAR-10).}
        \label{fig:wp_128_histogram}
    \end{subfigure}
    \hspace{2mm}
    \begin{subfigure}[t]{0.6\linewidth}
        \centering
        \includegraphics[width=\linewidth]{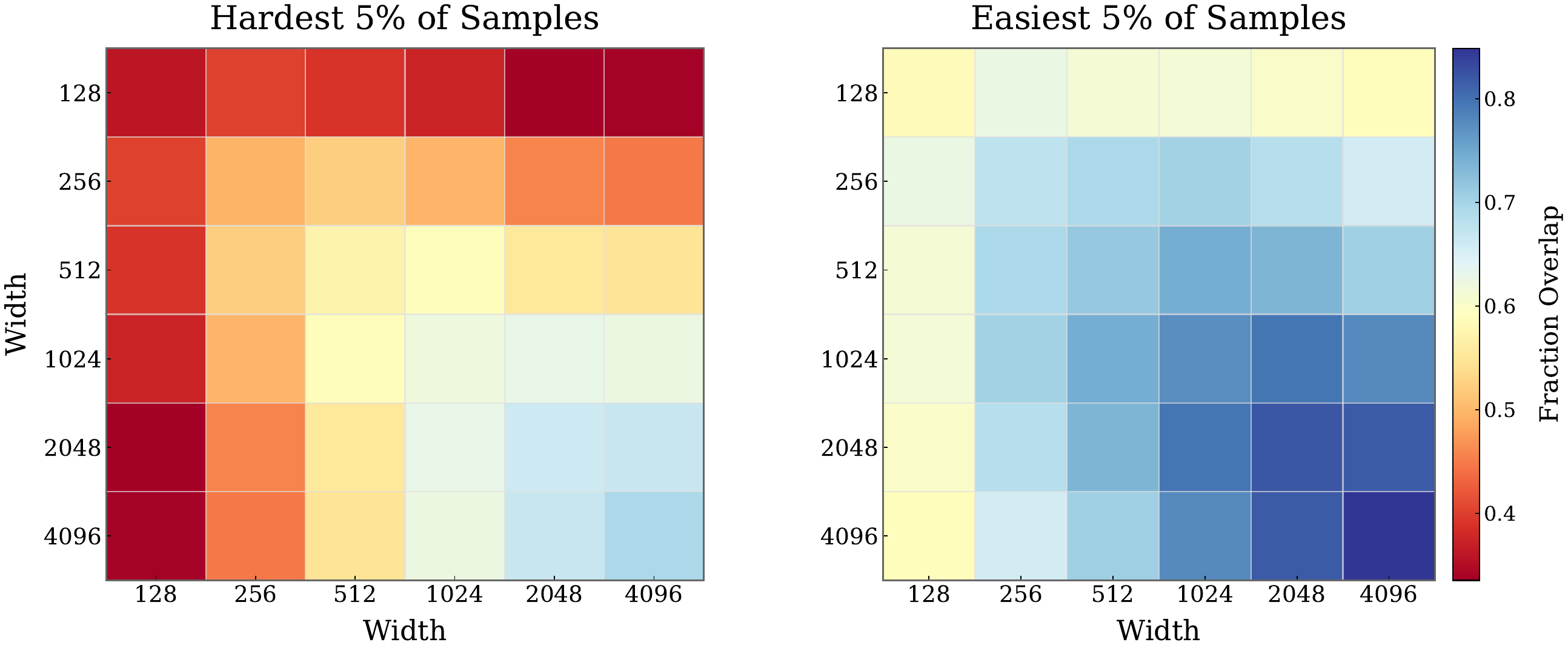}
        \caption{Consistency of top 5\% hardest and easiest examples.}
        \label{fig:sample_overlap}
    \end{subfigure}
    \vspace{-1mm}
    \caption{\small \textbf{Left:} $\MCI$ computed on CIFAR-10 for $n = 128$ and learning rate $0.016$. The distribution is left-skewed with a small number of large outliers, indicating that most examples are ``easy''. \textbf{Right:} For each pair of widths and random seeds, we compare the sets of top or bottom 5\% of samples ranked by $\MCI_i$ and plot the average fraction of shared samples. Diagonal values need not be 1 since entries compare different seeds at the same width. Rankings are more stable for low-MCI (easy) examples and larger widths, and less stable for high-MCI examples and smaller widths.
    }
    \vspace{-3mm}
    \label{fig:histogram_overlap}
\end{figure}

\paragraph{Mean Component Index.} To make these notions precise, we now introduce a per-example statistic, the \textit{Mean Component Index} (MCI), which quantifies how much of the loss change for an example occurs in top versus tail components. 
Examples with small MCI correspond to the ``easy'' examples above and those with high MCI correspond to the ``hard'' examples. Let us recall the setup in Section \ref{sec:topk_decomposition}. For simplicity, let us consider a single layer $\bW$ with gradient $\bG$ and update $\delta \bW$ at time $t$, since everything will extend to multiple layers by summing over each layer. As before, let $\sym(\bG, \delta \bW)$ from Eq.\ (\ref{eq:alignmat}) be the alignment matrix between $\bG$ and $\delta \bW$ with eigenvalues $\lambda_1, \ldots, \lambda_n$ ordered such that $|\lambda_1| \geq \dots \geq |\lambda_n|$ and corresponding eigenvectors $\bu_1, \ldots, \bu_n$. The eigenvectors $\bu_j$ define the component directions in our decomposition. 
If we have $P$ data points and the loss $\mathcal{L}$ is the average over pointwise losses $\ell_i$ with gradients $\bG_i$, then $\bG$ is the average over $\bG_i$ and we can decompose $\delta \phi(\bW) = \inner{\bG}{\delta \bW}$ into
\[
    \delta \phi(\bW) = \inner{\bG}{\delta \bW} = \frac{1}{P} \sum\limits_{i=1}^P \inner{\bG_i}{\delta \bW} = \frac{1}{P} \sum\limits_{i=1}^P \sum\limits_{j=1}^n \bu_j^\top \bG_i^\top \delta \bW \bu_j = \frac{1}{P} \sum\limits_{i=1}^P \sum\limits_{j=1}^n (\delta \psi)_{ij},
\]
where we define $(\delta \psi)_{ij} := \bu_j^\top \bG_i^\top \delta \bW \bu_j$ to be the instantaneous linearized loss change of sample $i$ in the $j$ component from updating parameter $\bW$. By summing $(\delta \psi)_{ij}$ over time steps $t$ and matrix layers we can define $\psi_{ij}$ to be the linearized loss change of sample $i$ in the $j$ component over the entire trajectory. The mean component index $\MCI_i \in [1, n]$ for example $i$ is then defined to be the quantity
\begin{equation}\label{eq:mean_component_index}
    \MCI_i := \sum\limits_{j = 1}^n j p_j,~~p_j := \textstyle\frac{|\psi_{ij}|}{\sum\limits_{k=1}^n |\psi_{ik}|}~~\text{ and }~~\psi_{ij} := \sum\limits_{\ell \in [L]} \sum\limits_{t=0}^{T-1} (\delta \psi)_{ij}[\bW_t^{(\ell)}].
\end{equation}
We use $\MCI_i$ as a scalar index placing example $i$ along the easy-to-hard spectrum described above. 

\begin{figure}[b]
\vspace{-1mm}
\centering
\begin{subfigure}[t]{0.5\linewidth}
\centering
{\includegraphics[width=0.98\textwidth]{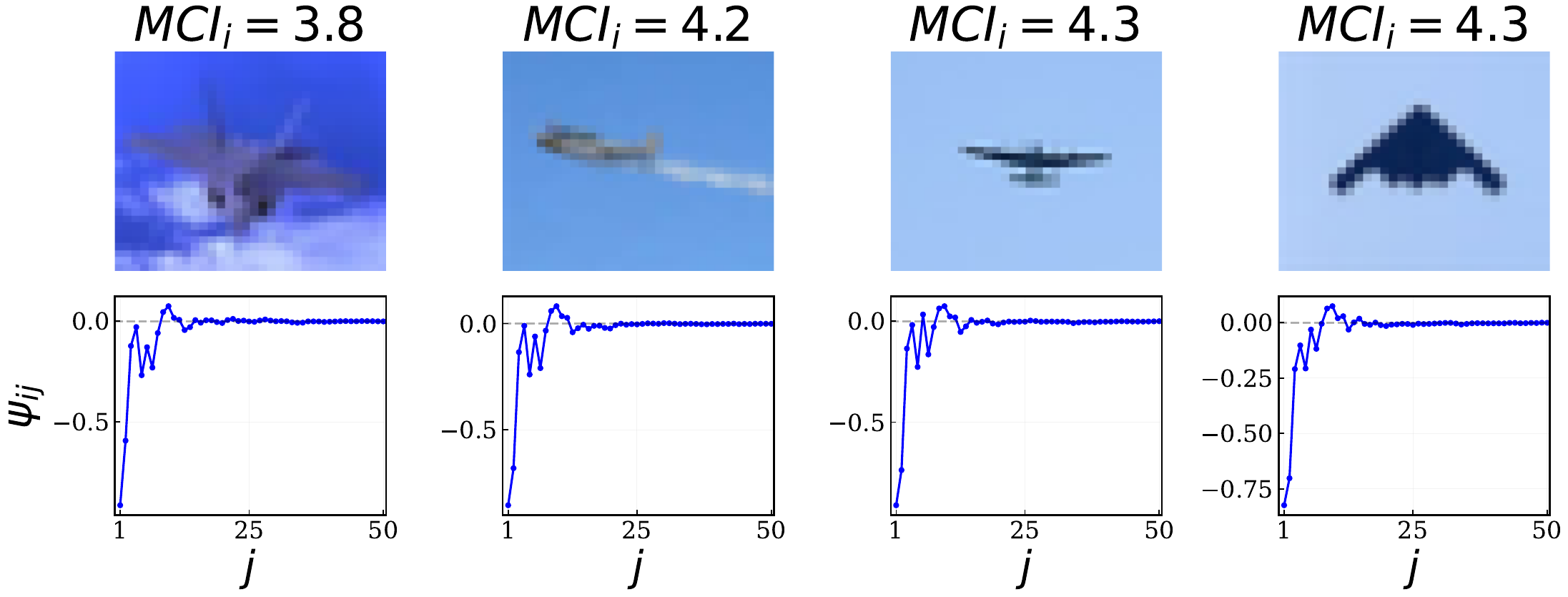}}  \vspace{-1.mm}
\caption{\small Lowest Mean Component Index (``easy'' samples).}
\label{fig_sample_128_easy}
\end{subfigure}%
\begin{subfigure}[t]{0.5\linewidth} 
\centering 
{\includegraphics[width=0.98\textwidth]{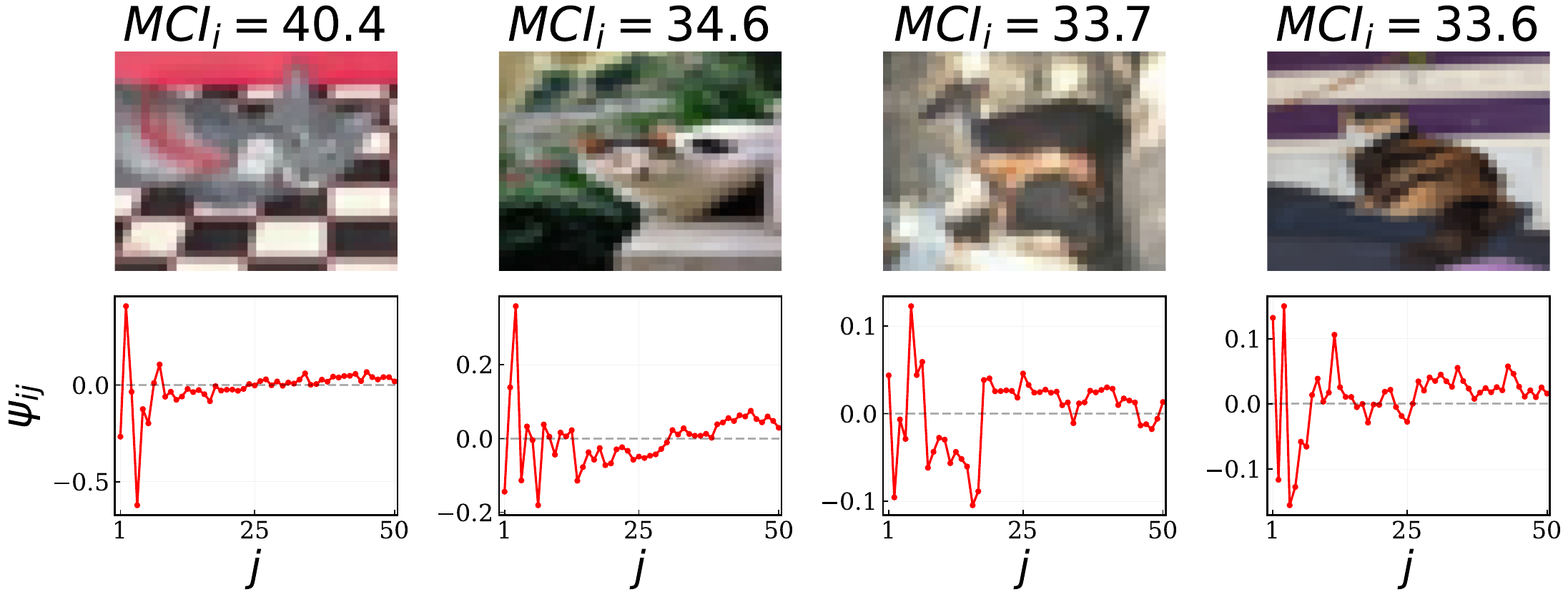}} \vspace{-1.mm}
\caption{\small Highest Mean Component Index  (``hard'' samples).} 
\label{fig_sample_128_hard}
\end{subfigure}%
\vspace{-1mm}
    \caption{\small The five examples in CIFAR-10 validation set with lowest (left) and highest (right) MCI for a two-layer MLP of width $n = 128$ trained with SGD using LR = 0.016. 
    \textbf{Top row:} We visualize the samples and the corresponding MCI. \textbf{Bottom row:} We plot the $\psi_{ij}$ values for the first $50$ components $j$ (see Eq.\ (\ref{eq:mean_component_index})). 
    Note that the low-MCI examples turn out to be simple images from the airplane class which are learned early in training using simple features, while the high-MCI images are complex examples from different classes which are not learned well by the model.}
    \label{fig:sample_128}
    \vspace{-1.5mm}
\end{figure}

\paragraph{Empirical Findings.} In Figure \ref{fig:wp_128_histogram} we show the distribution of $\MCI_i$ for a network of width $n = 128$. The average $\MCI_i$ is about $12.9$, which is much smaller than $n$, and the distribution is left-skewed. This indicates that most samples are learned mainly through the top components, consistent with Figures \ref{fig:sgd_kvec} and \ref{fig:sgd_resids_across_widths}. 

In Figure \ref{fig:sample_128} we visualize the four examples with the smallest and largest MCI. Observe that examples with smallest MCI shown in Figure~\ref{fig_sample_128_easy} are visually simple (top panel) and are essentially learned by the first component (bottom panel) in our decomposition \eqref{eq:mean_component_index}, whereas the large MCI examples shown in Figure~\ref{fig_sample_128_hard} are more complex and require non-trivial contributions from tail components (i.e., $j>10$).  

Figure \ref{fig:sample_overlap} quantifies how similar the rankings of $\MCI_i$ over samples $i$ are across widths and random seeds. For widths $n_1, n_2$ and random seeds $s_1, s_2$, we take the top or bottom $5\%$ of samples ranked by MCI for each width and seed pair, giving us two subsets $S_1$ and $S_2$ of $[P]$. For a given pair of widths we compute the fraction of overlap $|S_1 \cap S_2| / 0.05P$ and average over pairs of seeds $s_1$ and $s_2$, ignoring $s_1 = s_2$ if $n_1 = n_2$ since this would be trivially equal to one. We observe less agreement when $\min(n_1, n_2)$ is smaller and we restrict to samples with higher MCI, suggesting that easy (low-MCI) examples are learned more consistently at different widths and random seeds than hard (high-MCI) examples. 

\begin{figure}[!htb] 
\vspace{-5.5mm}
    \centering
    \begin{subfigure}[t]{0.48\linewidth}
        \centering
        \includegraphics[width=\linewidth]{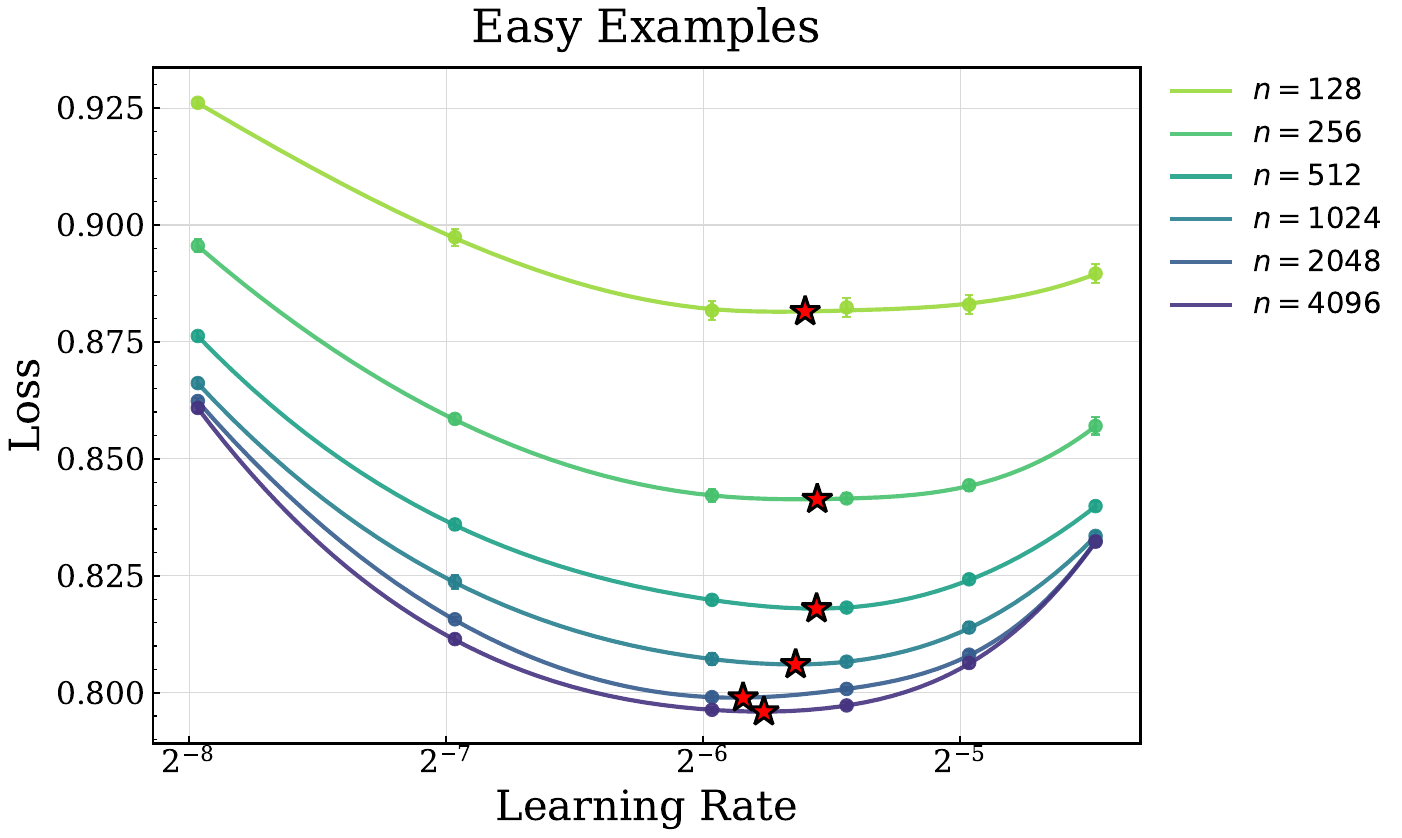}\vspace{-1mm}
        \caption{The final EMA loss (easy examples).}
        \label{fig:easy_examples_transfer}
        \vspace{2mm}
    \end{subfigure}
    \hspace{2.5mm}
    \begin{subfigure}[t]{0.4\linewidth}
        \centering
        \includegraphics[width=\linewidth]{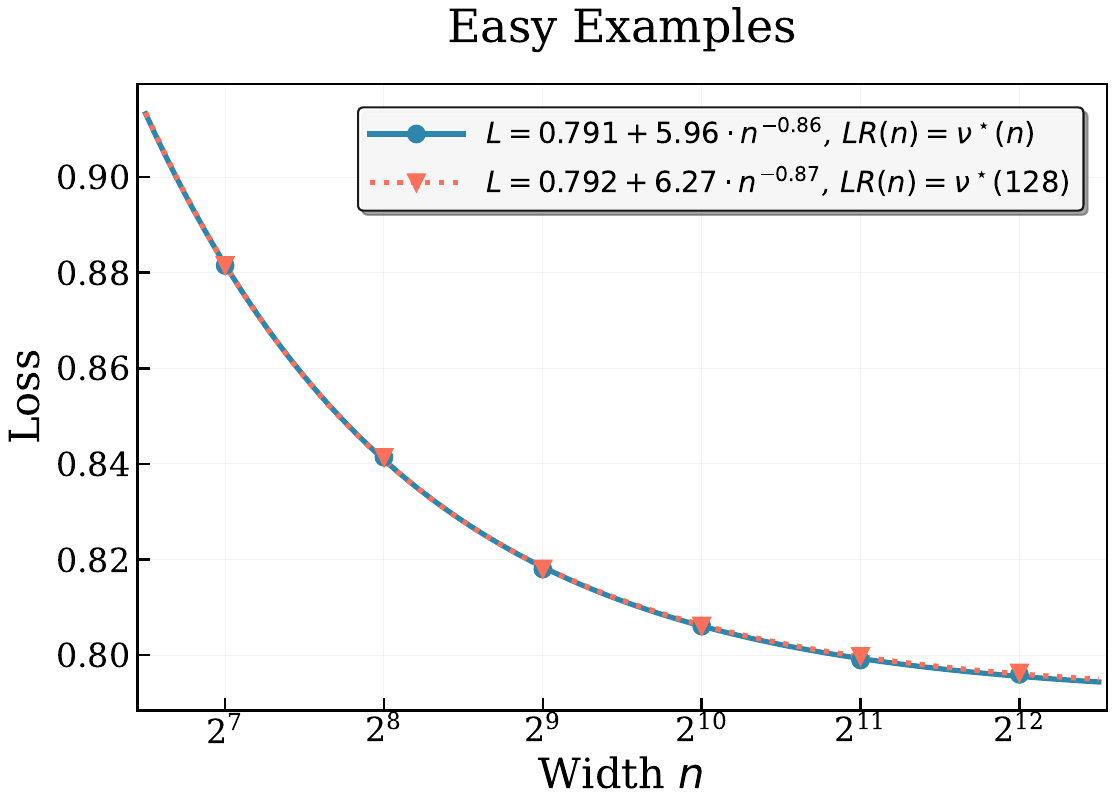}\vspace{-1mm}
        \caption{Scaling law for EMA loss (easy examples).}
        \label{fig:easy_examples_scaling_law}
            \vspace{2mm}
    \end{subfigure}
        \begin{subfigure}[t]{0.48\linewidth}
        \centering
        \includegraphics[width=\linewidth]{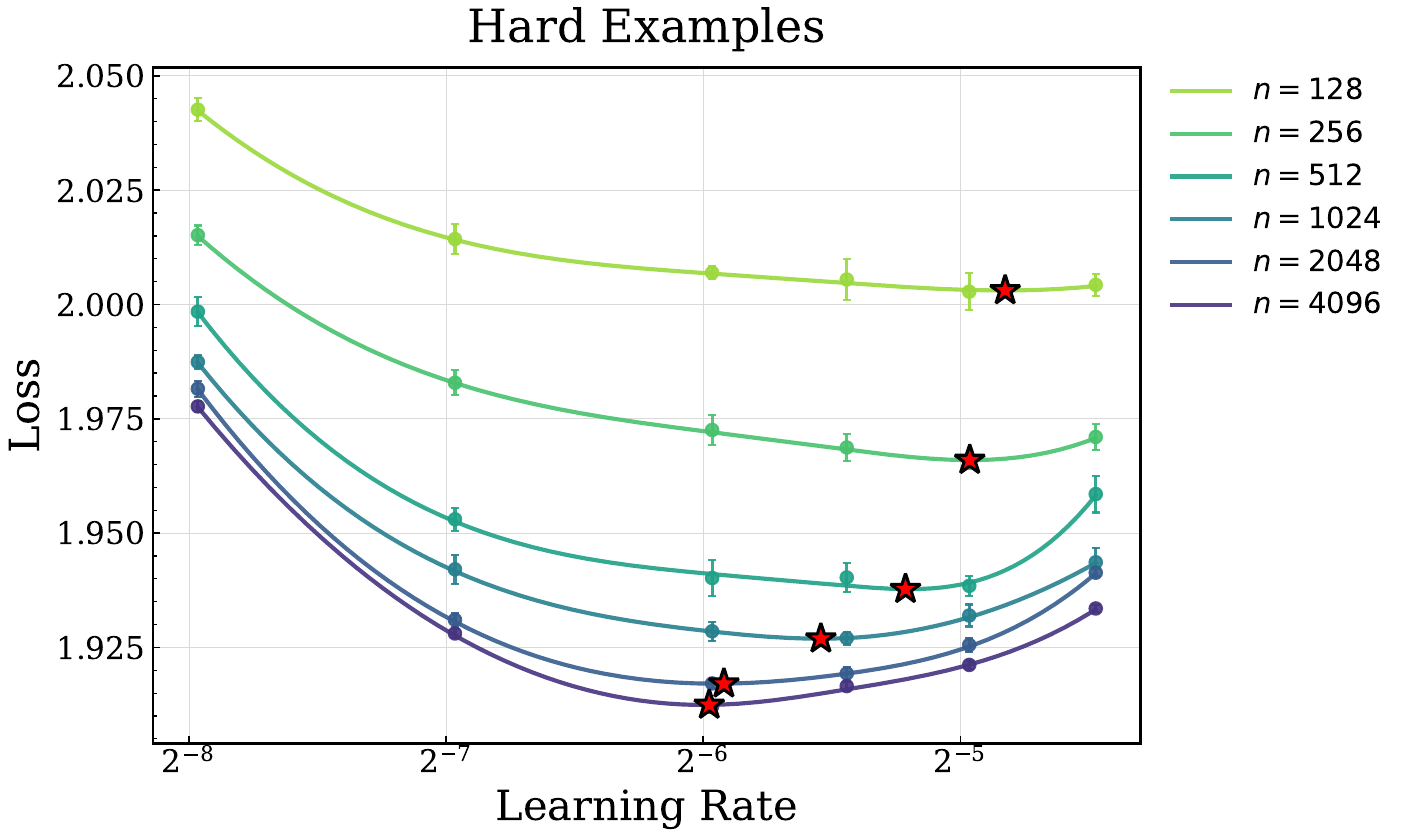}\vspace{-1mm}
        \caption{The final EMA loss (hard examples).}
        \label{fig:hard_examples_transfer}
    \end{subfigure}
    \hspace{2.5mm}
    \begin{subfigure}[t]{0.4\linewidth}
        \centering
        \includegraphics[width=\linewidth]{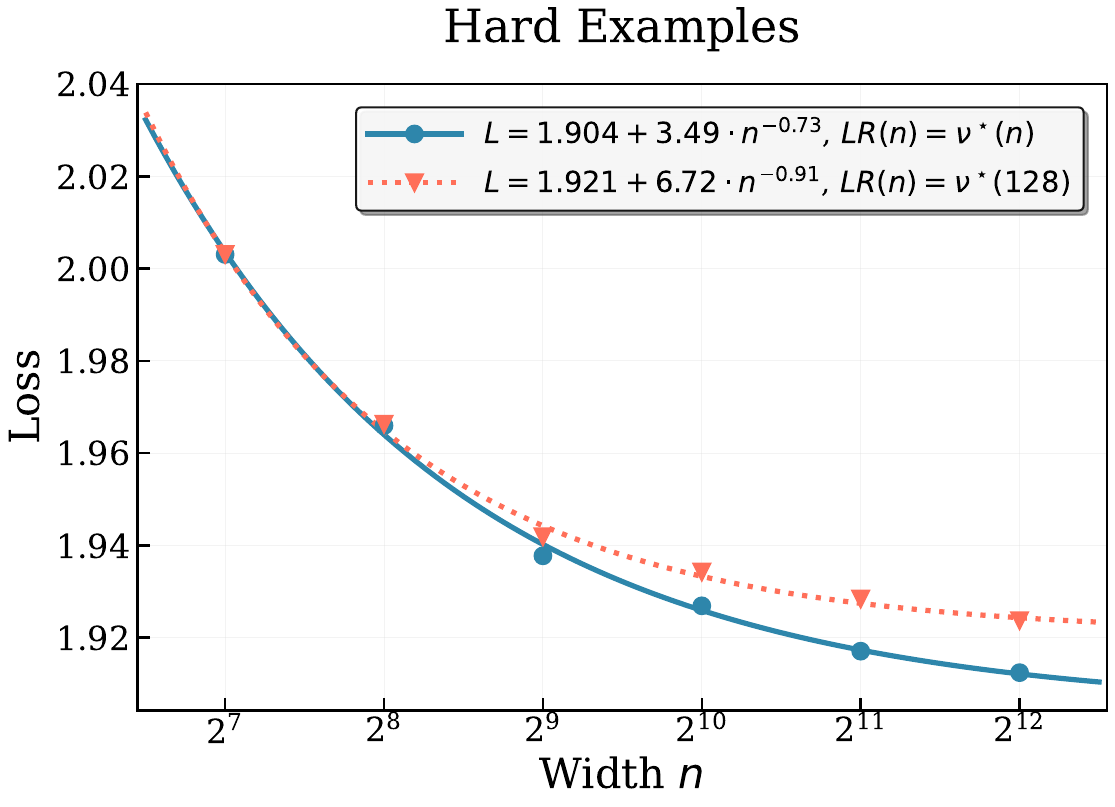}\vspace{-1mm}
        \caption{Scaling law for EMA loss (hard examples).}
        \label{fig:hard_examples_scaling_law}
    \end{subfigure}
    \vspace{-1mm}
    \caption{\small EMA loss on the 25\% easiest \textbf{(Top)} and 5\% hardest \textbf{(Bottom)} examples in the CIFAR-10 validation set, based on MCI computed on the width $n = 128$, LR = 0.016 model. Observe that for EMA loss on the easy subset, the optimal learning rates are well-aligned across widths and match the (large width) optimal HP on the full validation set (Figure \ref{fig:ema_sgd_lr_step_9695}), whereas the optimal learning rates for the hard subset shift significantly and yields suboptimal transfer. }
    \label{fig:examples_scaling_law}
    \vspace{-0.9mm} 
\end{figure}

Due to the consistency across scale observed in Figure~\ref{fig_sample_128_easy}, we intuitively expect the optimal learning rate for these easy examples to be stable and transferrable. 
Hence in Figure~\ref{fig:examples_scaling_law} we probe the effect of easy versus hard examples on learning rate transfer. Up to this point, all losses have been computed on the full validation set. To isolate the effect of individual data points, we now evaluate learning-rate sweeps on subsets of the validation set consisting only of easy or hard examples, based on the previously computed MCI values from training a single model of with $n = 128$. 
On the easy subset (lowest 25\% MCI) we see near-perfect transfer (Figures \ref{fig:easy_examples_transfer} and \ref{fig:easy_examples_scaling_law}), and the optimal learning rate is very close to the one obtained on the full validation set reported in Figure \ref{fig:ema_sgd_step_9695}; this suggests that the optimal HP on easy examples -- learned by the top components and remains stable across width -- approximately decides the optimal HP for the full loss. 
In contrast, on the hard subset (highest 5\% MCI) we instead see a clear leftward shift in the optimal learning rate as width increases and noticeably worse transfer (Figures \ref{fig:hard_examples_transfer} and \ref{fig:hard_examples_scaling_law}). 
This illustrates a potential failure mode of HP transfer in practice and shows how our sample-based decomposition can help reveal such failures: if the downstream evaluation metric $\phi$ is ``out-of-distribution'' and concentrated on high-MCI examples, then HPs transferred from a small model can yield suboptimal performance. We leave further investigation of this sample-wise decomposition, for example in language model settings, as future work.  

\section{Conclusion}
This work introduces a novel conceptual framework to reason about hyperparameter transfer and its underlying mechanisms. We posit that a basic form of HP transfer, which we refer to as \emph{weak transfer}, can hold generically as a consequence of the asymptotics of loss and hyperparameter scaling. In synthetic settings, we demonstrate that this asymptotic condition alone does not imply the substantial computational benefit of transfer often observed in practice. We conjecture that \emph{fast \& useful transfer} instead requires non-trivial low-dimensional structure in the optimization dynamics. We make this idea concrete by introducing a decomposition of the dynamics based on a linearization of an EMA-smoothed training trajectory. This decomposition provides an operational way to describe the relevant low-dimensional structure and motivates an empirically testable sufficient condition for useful transfer. Our experiments suggest that this structure appears in practice for common optimizers such as SGD and Adam, and that it may underlie the empirical success of hyperparameter transfer across scales. We hope this perspective motivates further work on identifying when efficient transfer should be expected and on developing a deeper understanding of optimization dynamics across scale.

\bigskip


\subsection*{Acknowledgment}

The authors thank Blake Bordelon, Lénaïc Chizat, Jeremy Cohen, Soufiane Hayou, Jason D.~Lee, Yan Shuo Tan, Atlas Wang, and Greg Yang for discussion and feedback. The symbolic computation in Appendix~\ref{app:rf-regression} was assisted by GPT5-Pro. 

\bigskip

{

\fontsize{10}{11}\selectfont 

\bibliographystyle{plainnat}
\bibliography{refs}

}

\allowdisplaybreaks

\newpage
\appendix

\section{Background}\label{app:background}

\paragraph{Function Class Regularity} 
Let \( \searchspace \) be a compact metric space, and let \( C(\searchspace) \) denote the space of real-valued continuous functions on \( \searchspace \), equipped with the uniform norm:
\[
    \supnorm{f} := \sup_{\bhp \in \searchspace} |f(\bhp)|.
\]
We say a collection \( \family \subset C(\searchspace) \) is:
\begin{itemize}
    \item \emph{uniformly bounded} if \( \sup_{f \in \family} \supnorm{f} \leq K \) for some \( K < \infty \),
    \item \emph{uniformly equicontinuous} if for every \( \varepsilon > 0 \) there exists \( \delta > 0 \) such that
    \[
        \norm{\bhp - \bhp'} < \delta \implies |f(\bhp) - f(\bhp')| < \varepsilon \quad \text{for all } f \in \family.
    \]
\end{itemize}

We denote by \( C^k(\searchspace) \) the space of \( k \)-times continuously differentiable functions. For \( f \in C^k(\searchspace) \), the \( k \)-th derivative is written \( f^{(k)} \). We define $f^{(0)} \equiv f$. In multivariate settings, this refers to the \( k \)-th total derivative.

\begin{theorem}[Arzelà--Ascoli]\label{thm:arzela-ascoli}
Any uniformly bounded and uniformly equicontinuous collection \( \family \subset C(\searchspace) \) is relatively compact in the uniform norm topology.
\end{theorem}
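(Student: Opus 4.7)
The plan is to prove this classical Arzelà--Ascoli result by producing, from an arbitrary sequence $(f_n) \subset \family$, a uniformly convergent subsequence with limit in $C(\searchspace)$. Since $C(\searchspace)$ with $\supnorm{\cdot}$ is a complete metric space, it suffices to extract a subsequence that is uniformly Cauchy. The two structural properties will play complementary roles: uniform boundedness furnishes compactness of the codomain needed for pointwise extraction, and uniform equicontinuity upgrades pointwise Cauchy behavior on a dense set to uniform Cauchy behavior on all of $\searchspace$.

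First I would exploit the fact that $\searchspace$ is a compact metric space, hence separable, and fix a countable dense subset $D = \{\bhp_k\}_{k \ge 1} \subset \searchspace$. For each fixed $k$, the sequence $(f_n(\bhp_k))_{n \ge 1}$ lies in the bounded interval $[-K, K]$ by uniform boundedness, and so by Bolzano--Weierstrass admits a convergent subsequence. A standard diagonal extraction then yields a single subsequence, which I will still denote by $(f_{n_j})$, such that $f_{n_j}(\bhp_k)$ converges in $\R$ for every $k$. This step only uses uniform boundedness and separability.

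Next I would promote this pointwise-on-$D$ convergence to uniform convergence on $\searchspace$ using equicontinuity together with total boundedness of $\searchspace$ (which follows from compactness). Given $\eps > 0$, choose $\delta > 0$ from the equicontinuity definition so that $\norm{\bhp - \bhp'} < \delta$ implies $|f(\bhp) - f(\bhp')| < \eps/3$ for all $f \in \family$. Cover $\searchspace$ by finitely many open balls of radius $\delta$ centered at points $\bhp_{k_1}, \ldots, \bhp_{k_N} \in D$. Since $(f_{n_j})$ converges at each of these finitely many points, there exists $J$ such that $|f_{n_j}(\bhp_{k_i}) - f_{n_{j'}}(\bhp_{k_i})| < \eps/3$ for all $j, j' \ge J$ and all $i \in [N]$. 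For an arbitrary $\bhp \in \searchspace$, pick $i$ with $\norm{\bhp - \bhp_{k_i}} < \delta$ and apply the triangle inequality to bound
\[
    |f_{n_j}(\bhp) - f_{n_{j'}}(\bhp)| \le |f_{n_j}(\bhp) - f_{n_j}(\bhp_{k_i})| + |f_{n_j}(\bhp_{k_i}) - f_{n_{j'}}(\bhp_{k_i})| + |f_{n_{j'}}(\bhp_{k_i}) - f_{n_{j'}}(\bhp)| < \eps,
\]
uniformly in $\bhp$. Taking supremum over $\bhp$ shows $\supnorm{f_{n_j} - f_{n_{j'}}} \le \eps$ for $j, j' \ge J$, i.e., $(f_{n_j})$ is uniformly Cauchy. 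Completeness of $C(\searchspace)$ then produces a uniform limit $f \in C(\searchspace)$.

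The main technical obstacle, though classical, is orchestrating the diagonal extraction cleanly so that one subsequence works simultaneously at all countably many dense points; this is purely bookkeeping and standard. The only conceptual step worth flagging is the implicit use of total boundedness of $\searchspace$ (to reduce the uncountable cover to a finite subcover by $\delta$-balls centered in $D$); without compactness this reduction would fail and uniform Cauchy control would not follow from pointwise control on $D$. Everything else is triangle-inequality glue.
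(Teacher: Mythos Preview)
Your argument is correct and is the standard textbook proof of Arzel\`a--Ascoli via diagonal extraction on a countable dense set followed by an $\eps/3$ upgrade using total boundedness and uniform equicontinuity. The paper itself does not prove this statement; it is recorded in the background appendix as a classical fact and simply invoked where needed, so there is no alternative approach to compare against.
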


\begin{proposition}\label{prop:derivative_uniform_convergence}
Let \( \{f_n\} \subset C^1(\searchspace) \) such that \( f_n' \to g \) uniformly and \( f_n(\bhp_0) \to L \) for some \( \bhp_0 \in \searchspace \) and $L \in \R$. Then \( f_n \to f \) uniformly for some \( f \in C^1(\searchspace) \), and \( f' = g \).
\end{proposition}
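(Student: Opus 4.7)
The plan is to use the fundamental theorem of calculus along straight-line segments, which is available because in the paper's setup $\searchspace$ is an $h$-dimensional box and hence convex. Concretely, I would fix an arbitrary $\bhp \in \searchspace$, parametrize the segment from $\bhp_0$ to $\bhp$ by $\gamma(t) = \bhp_0 + t(\bhp - \bhp_0)$ for $t \in [0,1]$, and write
\[
    f_n(\bhp) = f_n(\bhp_0) + \int_0^1 f_n'(\gamma(t)) \cdot (\bhp - \bhp_0)\, dt.
\]
Since $f_n(\bhp_0) \to L$ and $f_n' \to g$ uniformly on $\searchspace$, the integrand converges uniformly in $t$ (for fixed $\bhp$), so I can exchange the limit with the integral to define
\[
    f(\bhp) := L + \int_0^1 g(\gamma(t)) \cdot (\bhp - \bhp_0)\, dt,
\]
and obtain pointwise convergence $f_n(\bhp) \to f(\bhp)$.

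Next I would upgrade pointwise convergence to uniform convergence. Bounding $|f_n(\bhp) - f(\bhp)| \le |f_n(\bhp_0) - L| + \supnorm{f_n' - g} \cdot \mathrm{diam}(\searchspace)$ (using the triangle inequality inside the integral and $\norm{\bhp - \bhp_0} \le \mathrm{diam}(\searchspace) < \infty$ by compactness), the right-hand side is independent of $\bhp$ and vanishes as $n \to \infty$. This gives $f_n \to f$ uniformly on $\searchspace$.

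Finally, I would verify that $f \in C^1(\searchspace)$ with $f' = g$. Since $g$ is a uniform limit of continuous functions $f_n'$, $g$ is continuous on the compact set $\searchspace$. The representation of $f$ as $L$ plus a path integral of a continuous vector field along straight segments, combined with the multivariate fundamental theorem of calculus applied coordinatewise (differentiating with respect to each component of $\bhp$ and using continuity of $g$ to pass limits through), yields that $f$ is differentiable at every interior point with derivative $g$, and the identity extends to the boundary by continuity. The main obstacle will be the interchange of limit and derivative, but this is precisely what uniform convergence of $f_n'$ is designed to license, and it reduces to the uniform convergence bound above together with continuity of $g$; no genuinely new work beyond the standard one-dimensional argument is needed since the box geometry lets us reduce to segments.
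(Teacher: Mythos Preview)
The paper states this proposition as a standard background result in Appendix~A and does not supply its own proof, so there is nothing to compare against directly. Your argument is the standard textbook proof and is correct: the integral representation along segments is valid because $\searchspace$ is a box (hence convex), the uniform bound $|f_n(\bhp) - f(\bhp)| \le |f_n(\bhp_0) - L| + \mathrm{diam}(\searchspace)\,\supnorm{f_n' - g}$ yields uniform convergence, and the coordinatewise reduction to the one-dimensional case gives $f' = g$.

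One small caveat on the final step: differentiating the integral representation of $f$ directly with respect to $\bhp$ would require differentiating $g$, which you do not know is possible. Your parenthetical remark that ``no genuinely new work beyond the standard one-dimensional argument is needed since the box geometry lets us reduce to segments'' is the right way out: freeze all but one coordinate, apply the one-dimensional theorem (where $f(\bhp) = f(\bhp_0) + \int g$ along a coordinate segment immediately gives $\partial_j f = g_j$ by the fundamental theorem of calculus), and then use continuity of $g$ to conclude $f \in C^1$. Making that explicit would remove the only vagueness in your write-up.
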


\begin{proposition}\label{prop:equi_deriv_bound}
If \( \{f_n\} \subset C^1(\searchspace) \) and the derivatives \( f_n' \) are uniformly bounded, then \( \{f_n\} \) is uniformly equicontinuous.
\end{proposition}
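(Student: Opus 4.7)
The plan is to use the mean value theorem (or equivalently, the fundamental theorem of calculus along line segments) to upgrade the pointwise bound on $f_n'$ to a uniform Lipschitz bound on $f_n$, which immediately yields uniform equicontinuity. The setting of the paper (recall that $\searchspace$ is a $\numhps$-dimensional box, hence in particular a convex subset of $\R^\numhps$) makes this approach clean, since any two points in $\searchspace$ can be joined by a line segment entirely contained in $\searchspace$.

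First I would fix the uniform bound $M := \sup_n \supnorm{f_n'} < \infty$ given by hypothesis, where for multivariate $f_n$ we interpret $\norm{\cdot}$ as the operator norm of the total derivative (equivalently, the Euclidean norm of the gradient). For any $\bhp, \bhp' \in \searchspace$, convexity of $\searchspace$ guarantees that $\gamma(s) := (1-s)\bhp + s\bhp' \in \searchspace$ for all $s \in [0,1]$. Applying the fundamental theorem of calculus to the scalar function $s \mapsto f_n(\gamma(s))$ and the chain rule gives
\begin{equation*}
    f_n(\bhp') - f_n(\bhp) = \int_0^1 f_n'(\gamma(s))(\bhp' - \bhp)\, ds,
\end{equation*}
so by Cauchy--Schwarz and the uniform bound, $|f_n(\bhp') - f_n(\bhp)| \le M \norm{\bhp' - \bhp}$ for every $n$.

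Given $\varepsilon > 0$, the choice $\delta := \varepsilon / (M+1)$ then ensures that $\norm{\bhp - \bhp'} < \delta$ implies $|f_n(\bhp) - f_n(\bhp')| < \varepsilon$ uniformly in $n$, which is exactly uniform equicontinuity of $\{f_n\}$.

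There is no substantive obstacle here; the only mild subtlety is that the proposition tacitly assumes $\searchspace$ lives in a setting where $f_n'$ makes sense and where line segments are available, both of which are satisfied by the box search space used throughout the paper. If one wanted a version for a general compact metric space $\searchspace$, one would need to assume $\searchspace$ is a length space and interpret $f_n'$ appropriately; but for the paper's purposes, the convex-subset-of-$\R^\numhps$ version above suffices.
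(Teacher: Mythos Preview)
Your proof is correct and is exactly the standard argument one would expect: a uniform derivative bound plus convexity of the box $\searchspace$ yields a common Lipschitz constant via the mean value inequality, and uniform Lipschitz continuity trivially implies uniform equicontinuity. The paper itself states this proposition as a background fact in the appendix without proof, so there is nothing further to compare against.
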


\subsection{Scaling Limits and Tensor Programs}\label{app:tensor_programs}
In this section we will recall some simplified background from \citep{yang2021tensor, yang2023tensor}. For concreteness, we will fix the architecture to a $L$-hidden layer MLP, but all statements can be extended to a much more generic architectures (see Section 2.9.1 in \citep{yang2023tensor}). An $L$-hidden layer MLP of width $n$ with nonlinearity $\phi : \R \to \R$ and no biases is parameterized by weight matrices $\bW^1 \in \R^{n \times d}$, $\bW^2, \ldots, \bW^L \in \R^{n \times n}$, and $\bW^{L+1} \in \R^{1 \times n}$. On an input $\bx \in \R^d$, the network computes
\begin{equation}\label{eq:mlp}
    \bh^\ell(\bx) = \bW^{\ell} \bz^{\ell}(\bx) \in \R^n,~~\bz^{\ell}(\bx) = \phi(\bh^{\ell}(\bx)) \in \R^n,~~\text{for } \ell = 1, \ldots, L,
\end{equation}
and the output is $f(\bx) = \bW^{L+1} \bz^{L}(\bx) \in \R$. Given $N$ inputs $\bx_1, \ldots, \bx_N$, we will abbreviate
\begin{align*}
    \bh^\ell &:= [\bh^\ell(\bx_1) \mid \cdots \mid \bh^\ell(\bx_N)] \in \R^{n \times N}, \\
    \bz^\ell &:= [\bz^\ell(\bx_1) \mid \cdots \mid \bz^\ell(\bx_N)] \in \R^{n \times N}, \\
    \boldf   &:= (f(\bx_1), \ldots, f(\bx_N)) \in \R^N.
\end{align*}
\paragraph{abc-parameterization} 
Assume that we train the network using SGD. We recall the definition of abc-parameterization from \citep{yang2021tensor} (see \cite{geiger2020disentangling,chizat2024feature} for similar findings). An abc-parameterization is a width-aware HP scaling specified by a set of HPs $\bhp = \{\alpha_\ell, \sigma_\ell, \eta_\ell\}_{\ell \in [L+1]}$ and HP scaling exponents $\bhpexp = \{a_\ell, b_\ell, c_\ell\}_{\ell \in [L+1]}$ such that
\begin{enumerate}[label=(\alph*)]
    \item The weights $\bW^{\ell}$ receive a multiplier $\alpha_\ell n^{-a_\ell}$,
    \item We initialize each $W_{\alpha \beta}^\ell \sim \cN(0, \sigma^2_\ell n^{-2b_\ell})$, and
    \item The SGD learning rate in layer $\ell$ is $\eta_\ell n^{-c_\ell}$.
\end{enumerate}

\paragraph{Asymptotic Notation} 
Given a sequence $\bx = \{\bx(n)\}_{n=1}^\infty$ of random tensors we write $\bx = \Theta(n^{-a})$ and say that $\bx$ has coordinates of size $\Theta(n^{-a})$ if there exist constants $A, B > 0$ such that, almost surely for sufficiently large $n$,
\[
A \leq \frac{1}{\#\bx(n)} \sum_{\alpha} \bx(n)_\alpha^2 \leq B,
\]
where $\#\bx(n)$ denotes the number of entries in $\bx(n)$. We use $O(n^{-a})$ and $\Omega(n^{-a})$ similarly.

\paragraph{Dynamical Dichotomy Theorem.} We recall some definitions from \citep{yang2023tensor}. To reflect the network after $t$ steps of training we add a subscript $t$ to the quantities in Eq.\ (\ref{eq:mlp}). We use $\Delta$ to denote a one-step difference of a time-dependent quantity. We say an abc-parameterization is
\begin{enumerate}
    \item \textit{stable at initialization} if 
    \[
        \bh_0^\ell, \bz_0^\ell = \Theta(1),\, \forall \ell \in [L],~~\text{and } \boldf_0 = O(1).
    \]

    \item \textit{stable during training} if for any time $t \geq 0$ we have for any training routine
    \[
        \Delta \bh_t^\ell, \Delta \bz_t^\ell = O(1),\, \forall \ell \in [L],~~\text{and } \Delta \boldf_t = O(1).
    \]

    \item \textit{trivial} if for any time $t \geq 1$ and training routine, $\boldf_t - \boldf_0 \to 0$ almost surely as $n \to \infty$. We say the parameterization is \textit{non-trivial} otherwise.

    \item is in the \textit{kernel regime} if there exists $\cK : \R^N \to \R^N$ such that for every $t \geq 0$ and training routine, as $n \to \infty$,
    \[
        \boldf_{t+1} - \boldf_t - \eta \cK(\boldf_t) \to 0.
    \]

    \item is \textit{feature learning} if $\Delta \bz_t^L = \Omega(1)$ for some training routine and $t \geq 0$.
\end{enumerate}
\begin{theorem}[Dynamical Dichotomy \citep{yang2021tensor}]\label{thm:dynamical_dichotomy}
A nontrivial and stable abc-parametrization
either admits feature learning or is in the kernel regime  but not both. The kernel regime does not admit feature learning, that is, for any training routine $\Delta \bz_t^L \to 0$ for all $t \geq 0$.
\end{theorem}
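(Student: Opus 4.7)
The plan is to parameterize the abc-scaling by aggregate exponents and reduce the theorem to a combinatorial statement about which exponent inequalities saturate. Define $r_\ell := a_\ell + b_\ell$ so that the effective weight $\alpha_\ell n^{-a_\ell} \bW^\ell$ has entries of order $n^{-r_\ell}$ at initialization, and treat $(r_\ell, c_\ell)_{\ell=1}^{L+1}$ as the free vector of exponents. The first step is to derive, by induction on $\ell$ and $t$ using the Tensor Program forward/backward calculus, scaling laws for $\bh^\ell_t, \bz^\ell_t, \Delta \bh^\ell_t, \Delta \bz^\ell_t$ and $\Delta \boldf_t$. Since $\Delta \bW^\ell_t = -\eta_\ell n^{-c_\ell} \bg^\ell_t$ and $\bg^\ell_t$ is assembled by backpropagating the output error through the forward graph, each forward increment decomposes into a sum of ``paths'': a path contributes a monomial in $n$ whose exponent is a specific linear combination of $r_j$'s (for weights traversed) and $c_j$'s (for updated layers). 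Translating stability at initialization into the conditions that $\bh^\ell_0, \bz^\ell_0 = \Theta(1)$ and $\boldf_0 = O(1)$ yields the usual pinning $r_1, \ldots, r_{L+1}$ on the stability boundary, and stability during training turns into a collection of $\ge 0$ inequalities on the path exponents.

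Next, I would isolate the two critical paths governing $\Delta \boldf_t$. Path (A) updates only $\bW^{L+1}$ and contributes $\Delta \boldf_t$ at an exponent $E_A$ that is linear in $r_{L+1}$ and $c_{L+1}$; path (B) updates some earlier $\bW^\ell$ ($\ell \le L$), thereby producing $\Delta \bz^L_t$ with a nontrivial exponent $E_B^{(\ell)}$ that is then propagated through the unchanged $\bW^{L+1}$. Stability forces $E_A, E_B^{(\ell)} \ge 0$ for all $\ell$; nontriviality forces $\min(E_A, \min_\ell E_B^{(\ell)}) = 0$. The crux is a short algebraic check that $E_A = 0$ and $E_B^{(\ell)} = 0$ cannot hold simultaneously: if both held, then $\Delta \bz^L_t$ is $\Theta(1)$ and its linear image through the initial $\bW^{L+1}$ already yields a $\Theta(1)$ output change, while path (A) independently contributes another $\Theta(1)$ change; combined with the additional cross term arising from updating $\bW^{L+1}$ while $\bz^L_t$ itself moves by $\Theta(1)$, one obtains $\Delta \boldf_t = \omega(1)$, violating stability. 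Hence exactly one family saturates: saturation of some $E_B^{(\ell)}$ gives $\Delta \bz^L_t = \Theta(1)$ (feature learning), while saturation only at $E_A$ gives $\Delta \bz^L_t = o(1)$ (kernel regime). This establishes the dichotomy and the ``not both'' claim.

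For the second sentence of the theorem, that the kernel regime implies $\Delta \bz^L_t \to 0$, I would observe that being in the kernel regime pins the exponents so that every $E_B^{(\ell)}$ is strictly positive. In particular $\Delta \bh^L_t$ is an $n \to \infty$ vanishing quantity coordinate-wise, and since $\phi$ is (pseudo-)Lipschitz this carries over to $\Delta \bz^L_t \to 0$. In this same regime one further checks that $\Delta \boldf_t$ is governed entirely by path (A), which is the inner product $\eta_{L+1} n^{-c_{L+1}} \langle \bz^L_0, \bz^L_0 \rangle \cdot (\partial \mathcal{L}/\partial \boldf_t)$ up to $o(1)$ corrections, and this is exactly a fixed kernel $\mathcal{K}(\boldf_t)$.

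The main obstacle is justifying that the heuristic ``path'' computation is actually tight: one must rule out accidental cancellations between contributions of the same order and show that the leading-order normalized inner products have nonzero deterministic limits. This is exactly what the Master Theorem of Tensor Programs \citep{yang2021tensor} is designed for: it provides almost-sure Gaussian-conditional limits for every coordinate process appearing in the forward and backward passes, so that $n^{-1}$-normalized bilinear forms converge to explicit deterministic quantities whose nonvanishing can be verified on a case-by-case basis (e.g., positive-definiteness of the limiting feature Gram matrix under the usual nondegeneracy assumptions on $\phi$). Once these limiting quantities are shown to be nonzero generically, the exponent-saturation analysis becomes rigorous and the two claims of the theorem follow.
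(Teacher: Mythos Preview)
The paper does not prove this theorem; it is stated in Appendix~\ref{app:tensor_programs} as background and attributed to \citet{yang2021tensor}, so there is no in-paper argument to compare your proposal against.

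On its own merits, your path-exponent framework is broadly the right one, but the ``not both'' step has a genuine gap. You claim that $E_A = 0$ and some $E_B^{(\ell)} = 0$ cannot coexist because summing the two $\Theta(1)$ contributions together with the cross term would force $\Delta \boldf_t = \omega(1)$. This does not follow: a finite sum of $\Theta(1)$ quantities is still $O(1)$, and $\mu$P is precisely a stable, nontrivial parametrization in which the direct output-layer contribution $\Delta \bW^{L+1}_t \bz^L_t$, the propagated feature change $\bW^{L+1}_t \Delta \bz^L_t$, and the cross term $\Delta \bW^{L+1}_t \Delta \bz^L_t$ are each $O(1)$. The instability you invoke does not occur, so your identification of the dichotomy with ``exactly one of $E_A, E_B$ saturates'' is incorrect.

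The exclusivity argument in \citet{yang2021tensor} proceeds differently: if $\Delta \bz^L_t = \Omega(1)$ for some training routine, one shows that the limiting map $\boldf_t \mapsto \boldf_{t+1} - \boldf_t$ depends on the evolved features $\bz^L_t$ (equivalently, the effective tangent kernel moves by $\Theta(1)$), and therefore cannot equal $\eta\,\cK(\boldf_t)$ for a fixed $\cK$. Your exponent calculus is adequate for ``all $E_B^{(\ell)} > 0 \Rightarrow$ kernel regime,'' but for ``feature learning $\Rightarrow$ not kernel regime'' you must work from the definition of the kernel regime rather than search for a stability violation.
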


 The $\mu$P and NTK parameterization are the maximal feature learning and kernel parameterizations respectively. All other such parameterizations can be obtained from one of these by setting some initialization or learning rate to zero (see Section 5.3 in \citep{yang2021tensor} for more discussion). For adaptive optimizers such as Adam it is possible to extend the definitions to abcd-parameterizations (see Section 2.2 in \citep{yang2023tensor} for more details), for which a similar Dynamical Dichotomy theorem exists.

\subsection{Muon Optimizer}\label{app:muon}
Here we recall the Muon optimizer introduced by \citet{jordan2024muon}. For transformers typically the embedding and output layers, as well as vector and scalar parameters are trained using Adam. For the remaining matrix parameters, the update is built around the \emph{matrix sign function} $\mathrm{msgn}(\cdot)$, which for a full-rank matrix $\bX$ with SVD $\bX = \bU \bSigma \bV^\top$ is defined as
\[
  \mathrm{msgn}(\bX) = \bU \bV^\top.
\]
For parameter $\bW \in \R^{m \times n}$ with gradient $\bG \in \R^{m \times n}$, at step $t$ we form a momentum matrix
\[
  \bM_{t+1} = \beta \bM_t + (1 - \beta)\, \bG_t,
\]
with momentum parameter $\beta \in [0,1)$, and then update the weights by
\[
  \bW_{t+1} = \bW_t - \eta \sqrt{\frac{m}{n}}\, p(\bM_{t}) \approx \bW_t - \eta \sqrt{\frac{m}{n}}\, \mathrm{msgn}(\bM_{t}),
\]
where $\eta$ is the learning rate, the factor $\sqrt{m/n}$ ensures $\mu$P scaling, and $p$ is a composition of low-degree matrix polynomial chosen to approximate the function $\mathrm{msgn}$. This approximation comes from an iterative Newton-Schulz method where the number of iterations gives the number of polynomial compositions in $p$. 

\subsection{Dion Optimizer}\label{app:dion}
Dion \citep{ahn2025dion} is an orthonormalization-based optimizer designed to retain the benefits of Muon-style matrix updates while remaining compatible with sharded weight layouts in large-scale LLM training. Instead of applying a Newton--Schulz iteration on each weight matrix, Dion operates on a momentum buffer and uses an amortized power iteration to obtain an orthonormal low-rank update, controlled by a rank hyperparameter and stabilized by error feedback.

For a matrix parameter $\bW \in \mathbb{R}^{m \times n}$ with gradient $\bG \in \R^{m \times n}$ at step $t$, Dion maintains a residual momentum matrix $\bM \in \mathbb{R}^{m \times n}$ and a right factor $\bQ \in \mathbb{R}^{n \times r}$, where $r$ is the rank hyperparameter. Each step begins by accumulating the new gradient into the residual momentum,
\begin{equation*}
  \widetilde{\bM}_{t+1} = \bM_t + \bG_t.
\end{equation*}
An approximate leading left subspace is then extracted using the current $\bQ_t$,
\begin{equation*}
  \bP_t = \mathrm{orth}\bigl(\widetilde{\bM}_{t+1} \bQ_t\bigr),
  \qquad
  \bR_t = \widetilde{\bM}_{t+1}^\top \bP_t,
\end{equation*}
where $\mathrm{orth}(\cdot)$ returns a matrix with orthonormal columns (e.g.\ via a QR decomposition). The right factor is updated by column-wise normalization,
\begin{equation*}
  \bQ_{t+1}^{(j)} = \frac{\bR_t^{(j)}}{\|\bR_t^{(j)}\|_2 + \varepsilon},
  \qquad j = 1,\dots,r,
\end{equation*}
with a small $\varepsilon > 0$ for numerical stability. When $\bQ_t$ spans the dominant right-singular subspace of $\widetilde{\bM}_{t+1}$, this amortized power iteration drives $\bP_t$ and $\bQ_{t+1}$ toward approximate left and right singular vectors of $\widetilde{\bM}_{t+1}$, with both factors having unit-norm columns.

The residual momentum is updated with error feedback,
\begin{equation*}
  \bM_{t+1} = \widetilde{\bM}_{t+1} - (1 - \mu)\, \bP_t \bR_t^\top,
\end{equation*}
where $\mu \in [0,1)$ controls how aggressively the dominant rank-$r$ component is removed. In the idealized regime where $\bP_t \bR_t^\top$ captures the leading rank-$r$ part of $\widetilde{\bM}_{t+1}$, this update geometrically damps that component by a factor $\mu$ while retaining higher-rank and previously truncated directions in $\bM_{t+1}$. These retained components can subsequently re-enter the leading low-rank subspace, so the residual acts as an error buffer that compensates for the information lost by low-rank truncation.

The weight update Dion uses is
\begin{equation*}
  \bW_{t+1}
  = \bW_t
    - \eta \sqrt{\frac{m}{n}}\, \bP_t \bQ_{t+1}^\top,
\end{equation*}
where the learning rate is $\eta$ and the factor $\sqrt{m/n}$ ensures $\mu$P scaling.

\newpage

\section{Theoretical Results}
\subsection{Weak Transfer}\label{app:weak_transfer}
In the following we provide a minimal set of technical conditions for the concept of HP transfer to be well-defined and align with empirical observations. We say that a function is \textit{locally strongly convex} (LSC) with parameters $\growthconst, \growthrad > 0$, if for every minimizer $\bhp^\star \in \argmin f$,
\[
    |f(\bhp) - f(\bhp^\star)| \geq \frac{\growthconst}{2} \norm{\bhp - \bhp^\star}^{2},~ \text{for all } \bhp \text{ such that } \norm{\bhp - \bhp^\star} \leq \delta.
\]
Recall that we assume that the HP search takes place over a \emph{search space} $\searchspace$:
\[
   \searchspace = \prod\limits_{i=1}^{\numhps} [\lb_i, \ub_i],~~\interior{\searchspace} := \prod\limits_{i=1}^{\numhps} (\lb_i, \ub_i),
\]
which is a $\numhps$-dimensional box with bounds $\lb_i < \ub_i$ and interior $\interior{\searchspace}$. 
\begin{definition}[HP Transfer]\label{def:hp_transfer}
    The scaling $\bhpexp$ admits \textit{HP transfer} for a training procedure $\algo$ and metric $\phi$ over a search space $\searchspace$ if the following hold almost surely
    \begin{enumerate} 
        \item $\phi_n \in C^2(\searchspace)$ is convex and has a unique minimizer $\bhp^\star(n)$.

        \item There exists LSC deterministic $\phi_\infty$ such that $\phi_n \to \phi_{\infty}$ and $\argmin \phi_\infty \subseteq \interior{\searchspace}$.

        \item The family $\{\phi''_n\}$ is uniformly bounded and uniformly equicontinuous.
    \end{enumerate}    
\end{definition}
This definition implies the following desirable consequences. 
\begin{proposition}[HP Transfer Properties]\label{prop:hp_transfer_properties}
    In the context of Definition \ref{def:hp_transfer}, the following are true
    \begin{enumerate}
        \item $\phi_\infty \in C^2(\searchspace)$ is convex and has a unique minimizer $\bhp^\star(\infty)$
        
        \item $\phi_n \to \phi_\infty$, $\phi'_n \to \phi'_\infty$, $\phi''_n \to \phi''_\infty$ uniformly and $\bhp^\star(n) \to \bhp^\star(\infty)$.
    \end{enumerate}
\end{proposition}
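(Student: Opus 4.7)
The plan is to first establish uniform convergence of $\phi''_n$ via Arzelà--Ascoli, then bootstrap to uniform convergence of $\phi'_n$ and $\phi_n$ using Proposition~\ref{prop:derivative_uniform_convergence}, and finally deduce convexity, uniqueness of $\bhp^\star(\infty)$, and convergence of minimizers by standard compactness arguments.

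Since $\{\phi''_n\}$ is uniformly bounded and uniformly equicontinuous by assumption, Theorem~\ref{thm:arzela-ascoli} implies that every subsequence has a further sub-subsequence that converges uniformly in $C(\searchspace)$. The strategy is to show that every such uniformly convergent subsequence has the same limit (namely $\phi''_\infty$, identified from $\phi_\infty$), which then forces the full sequence $\phi''_n$ to converge uniformly. The key bootstrapping step is to anchor: fix any $\bhp_0 \in \interior{\searchspace}$, and observe that convexity of each $\phi_n$ gives the subgradient inequality $\phi'_n(\bhp_0)^\top \bv \leq \phi_n(\bhp_0 + \bv) - \phi_n(\bhp_0)$, valid for all $\bv$ small enough that $\bhp_0 \pm \bv \in \searchspace$. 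Combined with pointwise convergence $\phi_n \to \phi_\infty$, this uniformly bounds $\norm{\phi'_n(\bhp_0)}$ in $n$.

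Now given any subsequence $\phi''_{n_k} \to g$ uniformly, pass to a further sub-subsequence along which $\phi'_{n_k}(\bhp_0) \to L \in \R^{\numhps}$. Proposition~\ref{prop:derivative_uniform_convergence} then gives uniform convergence $\phi'_{n_k} \to h$ for some $h \in C^1(\searchspace)$ with $h' = g$. Applying Proposition~\ref{prop:derivative_uniform_convergence} once more, with anchor $\phi_{n_k}(\bhp_0) \to \phi_\infty(\bhp_0)$, yields uniform convergence $\phi_{n_k} \to f$ with $f' = h$; uniqueness of pointwise limits forces $f = \phi_\infty$, so $\phi_\infty \in C^2(\searchspace)$ with $\phi'_\infty = h$ and $\phi''_\infty = g$. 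Since the limit $g = \phi''_\infty$ is subsequence-independent, the full sequence $\phi''_n \to \phi''_\infty$ uniformly, and by the same argument $\phi'_n \to \phi'_\infty$ and $\phi_n \to \phi_\infty$ uniformly. Convexity of $\phi_\infty$ is preserved under pointwise limits, and uniqueness of the minimizer follows because the minimizer set of a convex function is convex, while local strong convexity around any minimizer rules out a second minimizer along the joining segment.

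Finally, to show $\bhp^\star(n) \to \bhp^\star(\infty)$, use compactness of $\searchspace$: any subsequential limit $\tilde{\bhp}$ of $\{\bhp^\star(n_k)\}$ satisfies $\phi_\infty(\tilde{\bhp}) = \lim \phi_{n_k}(\bhp^\star(n_k)) \leq \lim \phi_{n_k}(\bhp) = \phi_\infty(\bhp)$ for all $\bhp \in \searchspace$ by uniform convergence, so $\tilde{\bhp} = \bhp^\star(\infty)$ by uniqueness, and the full sequence converges. The main obstacle is the anchor step: without a uniform bound on $\phi'_n(\bhp_0)$, Proposition~\ref{prop:derivative_uniform_convergence} cannot be invoked and the Arzelà--Ascoli limit of $\phi''_n$ cannot be identified with $\phi''_\infty$. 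This is precisely where global convexity of each $\phi_n$ (combined with pointwise convergence) becomes essential; the rest of the argument is standard bookkeeping.
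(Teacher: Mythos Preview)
Your proof is correct and follows essentially the same approach as the paper: Arzel\`a--Ascoli on $\{\phi''_n\}$, bootstrap to lower derivatives via Proposition~\ref{prop:derivative_uniform_convergence}, and compactness plus uniqueness for the minimizers. Your treatment of the anchor step---using convexity plus pointwise convergence to bound $\norm{\phi'_n(\bhp_0)}$---is in fact more explicit than the paper's, which simply asserts equicontinuity of all $\phi_n^{(j)}$ from Proposition~\ref{prop:equi_deriv_bound} without spelling out why $\{\phi'_n\}$ is uniformly bounded.
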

\begin{proof}
By Proposition \ref{prop:equi_deriv_bound}, it follows that for all $j \in \{0, 1, 2\}$, ${\phi_n^{(j)}}$ is uniformly equicontinuous. Since $\phi_n$ has a limit, it must converge uniformly $\phi_\infty$. Now repeatedly using Proposition \ref{prop:derivative_uniform_convergence} after passing to subsequences and invoking Arzela-Ascoli (Theorem \ref{thm:arzela-ascoli}), we see that $\phi_n^{(j)} \to \phi_\infty^{(j)}$ uniformly for $j \in \{1, 2\}$. Now it follows that $\phi_\infty \in C^2(\searchspace)$ since the uniform limit of continuous functions is continuous and $\phi_\infty$ is convex since convexity is preserved under pointwise limits. Note that the local strong convexity condition implies that $\phi_\infty$ has a unique minimizer. Now because $\searchspace$ is compact, every subsequence of $\bhp^\star(n)$ has a convergent subsequence. By uniform convergence and the continuity of the $\phi_n$ and $\phi_\infty$, it follows that the limit of this subsequence is a minimizer of $\phi_\infty$. By the uniqueness of this minimizer, all the subsequences converge to $\bhp^\star(\infty)$ hence $\bhp^\star(n) \to \bhp^\star(\infty)$.
\end{proof}

This definition gives a minimal set of conditions under which the concept becomes mathematically coherent and aligns with observed empirical behavior. Requiring $\phi_n$ to have a unique minimizer removes ambiguity about which configuration should be transferred across scales. The convexity, smoothness, and equicontinuity conditions provide technical regularity that facilitates analysis and generally hold in practice.

The local strong convexity of $\phi_\infty$ ensures that performance meaningfully degrades away from the optimum. Without this condition, $\phi_\infty$ may be flat near its minimizer, making accurate hyperparameter selection potentially irrelevant in the large-$n$ limit. The assumption that $\bhp^\star(\infty)$ lies in the interior of the search space ensures that this optimum remains unchanged under any enlargement of the domain; this condition rules out the $n$-dependent drift of the HPs of interest due to a ``suboptimal'' scaling -- see Appendix~\ref{app:background} for discussions. 

\paragraph{``Optimal'' scaling limit.} 
In \citep{yang2022tensor, yang2023depth}, the authors remark that a key principle behind hyperparameter transfer is the ``optimality'' of the scaling. Heuristically if a scaling yields a suboptimal limit, then it cannot exhibit hyperparameter transfer since the HPs $\bhp$ need to undergo a $n$-dependent rescaling to ``convert'' the suboptimal scaling into the optimal scaling. The following proposition formalizes this intuition. The proposition requires $\bzero \in \searchspace$ to avoid uninteresting cases where the optimal HP is zero which will generally not occur for optimization HPs in normal neural network training.
\begin{theorem}\label{thm:transfer_optimality}
    Let $\bhpexp$ be a scaling that exhibits transfer over $\searchspace = [0, \ub_1] \times \cdots [0, \ub_k]$ and all $\searchspace'$ containing $\searchspace$. Any other scaling $\bhpexp' \neq \bhpexp$ with these properties must satisfy 
    \[
    \min_{\bhp \in \searchspace} \phi_\infty(\bhp; \bhpexp) = \min_{\bhp' \in \searchspace'} \phi_\infty(\bhp'; \bhpexp').
    \]
\end{theorem}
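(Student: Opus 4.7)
The plan is to exploit a reparameterization identity together with the global convexity implied by the hypothesis of transfer over arbitrarily large search boxes. Since $\algo$ depends on $(\bhp,\bhpexp)$ only through the scaled HPs $\scaledhps_n(\bhp,\bhpexp) = (\hp_i n^{-\hpexp_i})_i$, and since $\scaledhps_n(\bhp,\bhpexp) = \scaledhps_n(F_n(\bhp), \bhpexp')$ under the componentwise map $F_n(\bhp)_i := \hp_i n^{\hpexp'_i - \hpexp_i}$, one obtains the identity $\phi_n(\bhp; \bhpexp) = \phi_n(F_n(\bhp); \bhpexp')$ on $[0,\infty)^{\numhps}$. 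Because $F_n$ is a bijection of $[0,\infty)^{\numhps}$, the infima of $\phi_n(\cdot;\bhpexp)$ and $\phi_n(\cdot;\bhpexp')$ over the full orthant are equal for every $n$, and the remaining work is to identify these unconstrained infima with the constrained optima $\phi_n^\star(\bhpexp)$ and $\phi_n^\star(\bhpexp')$, and then to pass to the limit.

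First I would promote convexity from each enlargement $\searchspace'' \supseteq \searchspace$ to the full orthant $[0,\infty)^{\numhps}$: any two points lie in some sufficiently large box $\searchspace''$ on which Definition \ref{def:hp_transfer} supplies convexity of $\phi_n(\cdot;\bhpexp)$, so convexity extends globally, and the same holds for $\phi_n(\cdot; \bhpexp')$. Next, Proposition \ref{prop:hp_transfer_properties} yields $\bhp^\star(n;\bhpexp) \to \bhp^\star(\infty;\bhpexp) \in \interior{\searchspace}$, so for all large $n$ the constrained minimizer is an interior critical point of the globally convex $\phi_n(\cdot;\bhpexp)$, and hence an unconstrained minimizer over $[0,\infty)^{\numhps}$; likewise for $\bhpexp'$. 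Combining with the reparameterization identity gives $\phi_n^\star(\bhpexp) = \phi_n^\star(\bhpexp')$ for all large $n$, and passing to the limit using Proposition \ref{prop:hp_transfer_properties} yields $\min_{\bhp \in \searchspace} \phi_\infty(\bhp; \bhpexp) = \min_{\bhp' \in \searchspace'} \phi_\infty(\bhp'; \bhpexp')$ (the right-hand side being independent of the chosen $\searchspace' \supseteq \searchspace$ by the interior hypothesis).

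The main obstacle is this convexity-plus-interior step and its interaction with the reparameterization $F_n$. In general $F_n(\bhp^\star(n;\bhpexp))$ leaves every fixed bounded box, since its coordinates scale like $n^{\hpexp'_i - \hpexp_i}$, which diverges in the indices with $\hpexp'_i > \hpexp_i$. It is therefore essential to have $\phi_n(\cdot; \bhpexp')$ defined and convex arbitrarily far from $\searchspace$, which is exactly what the hypothesis that both $\bhpexp$ and $\bhpexp'$ admit transfer over every enlargement of $\searchspace$ provides; without it, the argument would have to split into the subcases $\bhpexp \leq \bhpexp'$, $\bhpexp \geq \bhpexp'$, and incomparable $\bhpexp, \bhpexp'$, tracking the limit of the reparameterized minimizer coordinatewise (which is delicate in the incomparable case, where some coordinates of $F_n(\bhp^\star(n;\bhpexp))$ diverge while others vanish).
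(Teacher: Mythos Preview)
Your proposal is correct. Both your argument and the paper's rest on the same reparameterization identity $\phi_n(\bhp;\bhpexp)=\phi_n(F_n(\bhp);\bhpexp')$, but the organization differs. The paper argues by contradiction: assuming $\phi_\infty(\bhp_\star;\bhpexp)>\phi_\infty(\bhp'_\star;\bhpexp')$, it pulls $\bhp'_\star$ back through $F_n^{-1}$ to the $n$-dependent point $\ol{\bhp}_\star=\bhp'_\star\odot(n^{\hpexp_i-\hpexp'_i})_i$ and observes that for large $n$ this point beats $\bhp_\star$ under $\phi_n(\cdot;\bhpexp)$, contradicting the minimality of $\bhp_\star$ (implicitly over arbitrarily large boxes). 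Your argument is direct: you first promote convexity to the full orthant, then use the interior-minimizer condition to identify the constrained optimum $\phi_n^\star$ with the unconstrained infimum, and finally equate the two unconstrained infima via the bijection $F_n$. Your route makes explicit the step the paper leaves implicit---namely why $\ol{\bhp}_\star$ beating $\bhp_\star$ is actually a contradiction, since $\ol{\bhp}_\star$ can escape any fixed box---and it handles the incomparable-$\bhpexp,\bhpexp'$ case uniformly without tracking which coordinates of $F_n$ diverge or vanish. The paper's version is shorter; yours is more self-contained.
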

\begin{proof}[Proof of Theorem \ref{thm:transfer_optimality}]
    For brevity define
    \[
        \bhp_\star := \argmin_{\bhp \in \searchspace} \phi_\infty(\bhp; \bhpexp) \text{ and } \bhp'_\star := \argmin_{\bhp' \in \searchspace} \phi_\infty(\bhp'; \bhpexp').
    \]
    For the sake of contradiction assume that $\phi_\infty(\bhp_\star; \bhpexp) > \phi_\infty(\bhp_\star'; \bhpexp')$. For large enough $n$, we will have $\phi_n(\bhp_\star; \bhpexp) > \phi_n(\ol{\bhp}_\star; \bhpexp)$ where $\ol{\bhp}_\star = \bhp'_\star \odot (n^{\hpexp_1 - \hpexp'_1}, \ldots, n^{\hpexp_k - \hpexp'_k})$ and $\bhp_\star \neq \ol{\bhp}_\star$ which is a contradiction. The case $\phi_\infty(\bhp_\star; \bhpexp) < \phi_\infty(\bhp_\star'; \bhpexp')$ follows analogously.
\end{proof}

The dynamical dichotomy theorem states that all scalings induced by abcd-parameterizations except for $\mu$P lead to optimization degeneracies or non-feature learning behavior. Therefore the proposition implies that if HP transfer is possible with $\mu$P and feature learning is advantageous, then it should be only possible using $\mu$P. Of course, it is still a challenging problem to rigorously characterize when $\mu$P will exhibit transfer and when feature learning is actually advantageous.

\subsection{Asymptotic Rates}\label{app:asymptotic_rates}
Recall the definitions of the quantities $a_n, b_n, c_n$ from Definition \ref{def:convergent_quantities}. We will need to reason about the following quantity which we call the \textit{uniform loss gap}.
\begin{equation}\label{eq:uniform_loss_gap}
    \bar{a}_n := \supnorm{\phi_n - \phi_\infty}. 
\end{equation}
Using the local strong convexity assumption, we will be able to directly relate the HP gap with the uniform loss gap. We first prove a convenient lemma which bounds the minimizer displacement in terms of the sup-norm of the perturbation.
\begin{lemma}\label{lem:sc_unif_perturb}
    Let $f: \searchspace \to \R$ and $g: \searchspace \to \R$ such that $f$ has a unique minimizer $\bx_f$ and $g$ has a unique minimizer $\bx_g$, and $g$ is $\growthconst$ strongly-convex
    \[
        g(\bx) - g(\bx_g) \geq \frac{\growthconst}{2} \norm{\bx - \bx_g}^2,\, \forall \bx \in \searchspace
    \]
    for some $\growthconst > 0$. If $\supnorm{f - g} \leq \varepsilon$, then
    \[
        \norm{\bx_f - \bx_g} \leq 2 \qty(\frac{\varepsilon}{\growthconst})^{1 / 2}.
    \]
\end{lemma}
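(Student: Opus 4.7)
The plan is to combine the strong convexity of $g$ at its minimizer with the optimality of $\bx_f$ for $f$, bridging the two functions via the uniform bound $\supnorm{f - g} \leq \varepsilon$. This is a classical stability-of-minimizers argument.

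First I would apply the uniform bound at the two relevant points to get $g(\bx_f) \leq f(\bx_f) + \varepsilon$ and $f(\bx_g) \leq g(\bx_g) + \varepsilon$. Next, since $\bx_f$ minimizes $f$ over $\searchspace$, I have $f(\bx_f) \leq f(\bx_g)$. Chaining these three inequalities gives the key sandwich
\[
g(\bx_f) - g(\bx_g) \;\leq\; f(\bx_f) + \varepsilon - g(\bx_g) \;\leq\; f(\bx_g) + \varepsilon - g(\bx_g) \;\leq\; 2\varepsilon.
\]
Then I invoke the strong convexity of $g$ at its minimizer $\bx_g$, which directly yields
\[
\frac{\growthconst}{2} \norm{\bx_f - \bx_g}^2 \;\leq\; g(\bx_f) - g(\bx_g) \;\leq\; 2\varepsilon,
\]
and rearranging plus taking square roots produces the claimed bound $\norm{\bx_f - \bx_g} \leq 2(\varepsilon/\growthconst)^{1/2}$.

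There is no real obstacle here since the strong convexity hypothesis in the statement is global on $\searchspace$ (so the quadratic lower bound applies at $\bx_f$ directly, with no need to verify that $\bx_f$ lies in a neighborhood of radius $\growthrad$ as in the locally-strongly-convex definition). The only thing to be mindful of is that uniqueness of $\bx_g$ is automatic from strong convexity and need not be separately invoked, and that the bound is symmetric in nothing but uses optimality of $\bx_f$ only for $f$ and strong convexity only for $g$, so the assumptions are used in the minimal way.
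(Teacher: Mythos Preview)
Your proposal is correct and follows essentially the same argument as the paper's proof: chain the uniform bound with the optimality of $\bx_f$ to obtain $g(\bx_f) - g(\bx_g) \leq 2\varepsilon$, then apply strong convexity of $g$ and rearrange.
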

\begin{proof}
    Note that $g(\bx_f) - \varepsilon \leq f(\bx_f) \leq f(\bx_g) \leq g(\bx_g) + \varepsilon$, hence
    \[
        g(\bx_f) - g(\bx_g) \leq 2 \varepsilon.
    \]
    By strong convexity, $2 \varepsilon \geq \frac{\growthconst}{2} \norm{\bx_f-\bx_g}^2$, which after rearranging gives the desired conclusion.
\end{proof}
The above lemma, along with Propositions \ref{prop:hp_transfer_properties} and Taylor's theorem immediately yields the following.
\begin{lemma}\label{lem:basic_rate_unif}
    Assume HP transfer holds (Def.\ \ref{def:hp_transfer}), then $b_n = O\qty(\bar{a}_n^{1 / 2})$ and $c_n = \Theta(b_n^2)$.
\end{lemma}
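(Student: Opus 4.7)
Both assertions will follow from combining Proposition~\ref{prop:hp_transfer_properties}, the preceding Lemma~\ref{lem:sc_unif_perturb}, and a second-order Taylor expansion of $\phi_\infty$ around its minimizer. The first step is a localization argument. Proposition~\ref{prop:hp_transfer_properties} guarantees $\bhp^\star(n) \to \bhp^\star(\infty)$, and Definition~\ref{def:hp_transfer} places $\bhp^\star(\infty)$ in $\interior{\searchspace}$; hence for $n$ large enough, both $\bhp^\star(n)$ and $\bhp^\star(\infty)$ lie in the closed ball $\bar{B}_{\growthrad}(\bhp^\star(\infty)) \subset \searchspace$ on which $\phi_\infty$ is $\growthconst$-strongly convex. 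Since $\bhp^\star(n)$ is the unique global minimizer of the convex function $\phi_n$ and eventually sits in the interior of this ball, it is also the unique minimizer of the restriction $\phi_n|_{\bar{B}_\growthrad}$.

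With the localization in place, the first bound is immediate: apply Lemma~\ref{lem:sc_unif_perturb} to the restrictions $f = \phi_n|_{\bar{B}_\growthrad}$ and $g = \phi_\infty|_{\bar{B}_\growthrad}$, using that $\sup_{\bhp \in \bar{B}_\growthrad} |\phi_n(\bhp) - \phi_\infty(\bhp)| \leq \supnorm{\phi_n - \phi_\infty} = \bar{a}_n$. The lemma yields $b_n \leq 2(\bar{a}_n / \growthconst)^{1/2}$, i.e., $b_n = O(\bar{a}_n^{1/2})$.

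For $c_n = \Theta(b_n^2)$, the lower bound follows directly from the definition of local strong convexity once $b_n \leq \growthrad$: plugging $\bhp = \bhp^\star(n)$ into the LSC inequality at $\bhp^\star(\infty)$ gives $c_n \geq (\growthconst / 2)\, b_n^2$. For the matching upper bound, Proposition~\ref{prop:hp_transfer_properties} gives $\phi_\infty \in C^2(\searchspace)$, and since $\bhp^\star(\infty)$ is an interior minimizer we have $\phi_\infty'(\bhp^\star(\infty)) = 0$. Taylor's theorem then gives
\[
c_n = \tfrac{1}{2}\, (\bhp^\star(n) - \bhp^\star(\infty))^\top\, \phi_\infty''(\xi_n)\, (\bhp^\star(n) - \bhp^\star(\infty))
\]
for some $\xi_n$ on the segment between $\bhp^\star(n)$ and $\bhp^\star(\infty)$. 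Continuity of $\phi_\infty''$ on the compact set $\searchspace$ bounds the quadratic form above by a constant multiple of $b_n^2$, yielding $c_n = \Theta(b_n^2)$.

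The only step requiring genuine care is the localization: because the strong convexity assumed in Definition~\ref{def:hp_transfer} is only local, one must argue that $\bhp^\star(n)$ enters the strong-convexity neighborhood of $\bhp^\star(\infty)$ before invoking Lemma~\ref{lem:sc_unif_perturb}, and that the restricted minimizer coincides with the global minimizer $\bhp^\star(n)$. Both are immediate from Proposition~\ref{prop:hp_transfer_properties}; once this is settled, the remainder is a routine application of Taylor's theorem and continuity of $\phi_\infty''$.
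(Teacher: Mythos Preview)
Your proof is correct and follows exactly the approach the paper indicates: the paper's proof is a one-line pointer to Lemma~\ref{lem:sc_unif_perturb}, Proposition~\ref{prop:hp_transfer_properties}, and Taylor's theorem, and you have simply unpacked these three ingredients (including the localization step needed to make the local strong-convexity applicable) in careful detail.
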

To relate the loss gap $a_n$ and the uniform loss gap $\bar{a}_n$ we must make further assumptions. Intuitively, we would like to capture the fact that typically the uniform loss gap is dominated by a fairly uniform, positive loss gap for HPs which are nearly optimal.
\begin{definition}\label{def:locally_tight}
    We will say that the uniform loss gap is locally tight if there exists some radius $\bar{r}$ and constants $0 < c \leq C$ such that for all $\bhp \in B(\bhp^\star(\infty), \bar{r})$, 
\[
    c \bar{a}_n \leq \phi_n(\bhp) - \phi_\infty(\bhp) \leq C \bar{a}_n.
\]
\end{definition}
This essentially states that the uniform loss gap tightly controls the convergence rate for any nearly optimal set of HPs. This corresponds with the empirical observation that nearly optimal hyperparameters obey identical scaling laws. Under this assumption it is easy to see that $a_n = \Theta(\bar{a}_n).$ 
\begin{lemma}\label{lem:unif_loss_gap}
    Assume HP transfer holds (Def.\ \ref{def:hp_transfer}). If the uniform loss gap $\bar{a}_n$ is locally tight then $a_n = \Theta(\bar{a}_n)$.
\end{lemma}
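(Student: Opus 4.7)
The plan is to bound $a_n$ above and below by $\bar{a}_n$ separately. The upper bound $a_n \le \bar{a}_n$ is essentially free from the definition of the sup norm: since $\phi_n^\star \le \phi_n(\bhp^\star(\infty)) \le \phi_\infty(\bhp^\star(\infty)) + \bar{a}_n = \phi_\infty^\star + \bar{a}_n$, and symmetrically $\phi_\infty^\star \le \phi_\infty(\bhp^\star(n)) \le \phi_n(\bhp^\star(n)) + \bar{a}_n = \phi_n^\star + \bar{a}_n$, we get $|\phi_n^\star - \phi_\infty^\star| \le \bar{a}_n$ and hence $a_n = O(\bar{a}_n)$.

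For the matching lower bound, I would combine HP transfer with local tightness. By Proposition~\ref{prop:hp_transfer_properties}, HP transfer yields $\bhp^\star(n) \to \bhp^\star(\infty)$, so for all sufficiently large $n$ the minimizer $\bhp^\star(n)$ lies in the tightness ball $B(\bhp^\star(\infty), \bar{r})$. Applying the lower half of Definition~\ref{def:locally_tight} at the point $\bhp^\star(n)$ gives
\[
\phi_n(\bhp^\star(n)) - \phi_\infty(\bhp^\star(n)) \ge c\, \bar{a}_n.
\]
Since $\phi_\infty(\bhp^\star(n)) \ge \phi_\infty(\bhp^\star(\infty)) = \phi_\infty^\star$ by optimality of $\bhp^\star(\infty)$, rearranging yields
\[
\phi_n^\star - \phi_\infty^\star \;=\; \phi_n(\bhp^\star(n)) - \phi_\infty^\star \;\ge\; c\, \bar{a}_n,
\]
so in particular the signed difference is nonnegative for large $n$ and $a_n \ge c\, \bar{a}_n$, giving $a_n = \Omega(\bar{a}_n)$.

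Combining the two directions gives $a_n = \Theta(\bar{a}_n)$, as desired. The only nontrivial ingredient is the lower bound, and the main (modest) obstacle there is ensuring that $\bhp^\star(n)$ eventually enters the local-tightness neighborhood; this is handled by invoking convergence of minimizers from Proposition~\ref{prop:hp_transfer_properties}. The rest is bookkeeping with the sup-norm and the optimality of $\bhp^\star(\infty)$.
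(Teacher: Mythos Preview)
Your proof is correct and follows essentially the same approach as the paper. The lower bound argument is identical (apply local tightness at $\bhp^\star(n)$ once it enters the ball, then use optimality of $\bhp^\star(\infty)$); for the upper bound you use the sup-norm directly to get $a_n \le \bar a_n$, whereas the paper uses the upper half of local tightness at $\bhp^\star(\infty)$ to get $\phi_n^\star - \phi_\infty^\star \le C\bar a_n$, but this is a cosmetic difference.
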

\begin{proof}
    Note that we have
    \begin{align*}
        \phi_n(\bhp^\star(n)) - \phi_\infty(\bhp^\star(\infty)) &= \phi_n(\bhp^\star(n)) - \phi_\infty(\bhp^\star(n)) + \phi_\infty(\bhp^\star(n)) - \phi_\infty(\bhp^\star(\infty))\\
        &\geq \phi_n(\bhp^\star(n)) - \phi_\infty(\bhp^\star(n)),\\
        \phi_n(\bhp^\star(n)) - \phi_\infty(\bhp^\star(\infty)) &= \phi_n(\bhp^\star(n)) - \phi_n(\bhp^\star(\infty)) + \phi_n(\bhp^\star(\infty)) - \phi_\infty(\bhp^\star(\infty))\\
        &\leq \phi_n(\bhp^\star(\infty)) - \phi_\infty(\bhp^\star(\infty)).
    \end{align*}
    Since $\bhp^\star(n) \to \bhp^\star(\infty)$ we can apply the inequalities in Definition \ref{def:locally_tight} for large enough $n$ to yield the claim.
\end{proof}
\begin{proof}[Proof of Proposition \ref{prop:basic_rate}]
    This follows directly by applying Lemmas \ref{lem:basic_rate_unif} and \ref{lem:unif_loss_gap}
\end{proof}

\subsection{Grid Search}\label{app:grid_search}
We now turn towards the connection between the previous asymptotic quantities and compute-optimal grid search. Define a \emph{grid} $\grid$ in a search space $\searchspace$ to be a collection of points $\{\bhp^{(1)}, \ldots, \bhp^{(M)}\}$ contained in $\searchspace$. The \emph{grid resolution} $\gridresval{\grid}{\searchspace}$ is defined as the largest distance of a point in $\searchspace$ to a point in $\grid$, that is
\[
    \gridresval{\grid}{\searchspace} := \sup_{\bhp \in \searchspace} \min_{\bhp' \in \grid} \norm{\bhp - \bhp'}.
\]
For a grid $\grid$ in the search space $\searchspace$, define $\bhp^\star(n, \grid) = \argmin_{\bhp \in \grid} \phi_n(\bhp)$. Let us assume that we are allocated a flops budget $\flops$ in order to perform hyperparameter search and produce a final model.  

Recall that for brevity we use $f(x) \sim g(x)$ to mean $f(x) = \Theta(g(x))$. For a grid $\grid$ of resolution $\gridres$, we will make the following convenience assumption for Theorem \ref{thm:grid_and_transfer_rate} .
\begin{assumption}[Grid proximity]\label{assump:grid_proximity}
For a grid $\grid$ of resolution $\gridres$, we assume that
\[
    \min_{\bhp \in \grid} \norm{\bhp - \bhp^\star(n)} \sim \gridres.
\]
\end{assumption}
This assumption is morally true if $\grid$ is not chosen with knowledge of the location of $\bhp^\star(\infty)$. If for a given $\grid$ we chose $\bhp^\star(\infty)$ uniformly at random within $\grid$, then this assumption holds on average. 
Suppose we have a compute budget of $\flops$ flops, and the amount of flops needed for a single training run scales with $n^{\compexp}$ for models of width $n$, where $\compexp = 2$ for standard optimization algorithms on standard architectures. For a scaling $\bhpexp$, we will evaluate the quality of a set of HPs $\bhp$ by performing a full training run for a certain width $n$ using the scaled hyperparameters $\scaledhps_n(\bhp, \bhpexp)$. Let us assume we perform a grid search over $\numhps$ HPs. We first consider compute optimal performance when directly tuning the HPs on a large model. 

\begin{proof}[Proof of Theorem \ref{thm:grid_and_transfer_rate}(a)]
    For a grid of resolution $\gridres = \gridresval{\grid}{\searchspace}$ we will have $|\grid| \sim \gridres^{-\numhps}$ and $\flops \sim n^\compexp \gridres^{-\numhps}$. Now observe that by uniform convergence of derivatives (Prop.\ \ref{prop:hp_transfer_properties}), for $n$ large enough $\phi_n$ will satisfy $(\growthconst', \growthrad')$-LSC for some constants $\growthconst', \growthrad' > 0$ and so $\phi_n(\bhp^\star(n, \grid)) - \phi_n(\bhp^\star(n)) \sim \gridres^2$ by Assumption \ref{assump:grid_proximity}. Therefore,
    \begin{align*}
        \phi_n(\bhp^\star(n, \grid)) - \phi_\infty^\star &= \phi_n(\bhp^\star(n, \grid)) - \phi_n(\bhp^\star(n)) + \phi_n(\bhp^\star(n)) - \phi_\infty^\star\\ 
        &\sim \gridres^{2} + n^{-\alpha}\\
        &\sim n^{2 \compexp/ \numhps} \flops^{-2 / \numhps} + n^{-\alpha}.
    \end{align*}
    We see the final expression is minimized by taking $n^\star \sim \flops^{\frac{2}{\numhps \alpha + 2 \compexp}}$ which yields the rate
    \[
        \phi_n(\bhp^\star(n, \grid)) - \phi_\infty^\star \sim \flops^{-\frac{2\alpha }{\numhps \alpha + 2\compexp }},
    \]
    as claimed.
\end{proof}
Now we consider the strategy of transferring the optimal HPs from a smaller model. We say that transfer is \textit{useful} if this strategy achieves a better loss scaling than directly tuning the large model under the same compute budget, as specified above. 

\begin{proof}[Proof of Theorem \ref{thm:grid_and_transfer_rate}(b)]
    Note that in this setting $\flops \sim n^\compexp \gridres^{-\numhps} + M^\compexp$. The performance scaling is
    \begin{align*}
        \phi_M(\bhp^\star(n, \grid)) - \phi_\infty^\star &= \phi_M(\bhp^\star(n, \grid)) - \phi_M(\bhp^\star(n))\\ 
        &\quad+ \phi_M(\bhp^\star(n)) - \phi_M(\bhp^\star(M))\\ 
        &\quad+ \phi_M(\bhp^\star(M)) - \phi_\infty^\star\\ 
        &\sim \rho^{2} + n^{-2\beta} + M^{-\alpha}\\
        &\sim \qty(\frac{n^\compexp}{\flops - M^\compexp})^{2/\numhps} + n^{-2\beta} + M^{-\alpha}.
    \end{align*}
    Since $\flops - M^r \sim \flops$, we should take $M^\star \sim \flops^{1/\compexp}$ in which case the above simplifies to
    \[
        \phi_M(\bhp^\star(n, \grid)) - \phi_\infty^\star \sim \frac{n^{2\compexp / \numhps}}{\flops^{2 / \numhps}}  + n^{-2\beta} + \flops^{-\alpha/r}
    \]
    which is minimized a $n^\star \sim \flops^{\frac{1}{\numhps \beta + \compexp}}$ and yields $ \phi_M(\bhp^\star(n, \grid)) - \phi_\infty^\star \sim \flops^{\frac{-2 \beta}{\numhps \beta + \compexp}} + \flops^{\frac{-\alpha}{\compexp}}$. Now note that $\frac{\alpha}{\compexp} > \frac{2 \alpha}{\numhps \alpha + 2 \compexp}$ and $\frac{2 \beta}{\numhps \beta + \compexp} > \frac{2\alpha}{\numhps \alpha + 2 \compexp}$ if and only if $\beta > \alpha / 2$, which is the condition for useful transfer.
\end{proof}

\bigskip

\subsection{Random Features Regression}
\label{app:rf-regression}

Consider the following data generating process where the labels come from a single-index model, and we train a random features ridge regression estimator on $N$ samples, 
\begin{align*}
    &y = \sigma_*(\langle\bx,\bbeta_*\rangle) + \varepsilon, \quad \text{where~ } \bx\sim\mathcal{N}(0,\bI_d), ~ \norm{\bbeta_*}=1, ~\mathrm{Var}(\varepsilon) = \sigma_\varepsilon^2. \\
    &f(\bx) = \langle\ba_\lambda, \sigma(\bW\bx)\rangle, 
    \quad \text{where~ } \bW\in\R^{d\times n}, ~
    [\bW]_{i,j}\sim \mathcal{N}(0,1/d), \\
    &\textstyle\ba_\lambda := \mathrm{argmin}_{\ba\in\R^n} \sum_{i=1}^N (y_i - \langle\ba, \sigma(\bW\bx_i)\rangle)^2 + \lambda\norm{\ba}_2^2. 
\end{align*}
We aim to select the optimal regularization parameter $\lambda$ that minimizes the prediction risk (generalization error) $\cR=\E_{\bx} [(y - f(\bx))^2]$. We make the following assumptions. 
\begin{assumption}\,
\begin{itemize}[leftmargin=*,topsep=0mm,itemsep=0mm]
    \item \textbf{Proportional limit.}  $N,d,n\to\infty, N/d\to\psi_1, n/d\to\psi_2$ where $\psi_1,\psi_2\in (0,\infty)$. 
    \item \textbf{Normalized activation.} Both the student and teacher nonlinearities are normalized such that $\E[\sigma],\E[\sigma_*]=0$, $\norm{\sigma}_\gamma,\norm{\sigma_*}_\gamma=1$, and also $\norm{\sigma'}_\gamma,\norm{\sigma_*'}_\gamma\neq 0$. We further require that $\sigma$ is a nonlinear odd function with bounded first three derivatives, and $\sigma_*$ is $\Theta(1)$-Lipschitz.
\end{itemize}
\label{assumption:RF}
\end{assumption}
\begin{remark}
The above assumptions are standard in the high-dimensional asymptotic analysis of random features models, see e.g., \cite{mei2022generalization,gerace2020generalisation}. The non-zero expectation of $\sigma',\sigma_*'$ is necessary for the RF model to outperform the null estimator in the proportional regime. The assumption of odd $\sigma$ simplifies the Gaussian equivalence computation --- see \cite{hu2022universality,ba2022high}. 
\end{remark}

\paragraph{Asymptotic prediction risk.} 
Under Assumption~\ref{assumption:RF}, following \cite{hu2022universality}, we know that the asymptotic prediction risk is given as by the following implicit equations,  
\begin{align*}
    \lim_{n,d,N\to\infty} \E[(y-f(\bx))^2] \overset{\P}{\to} \cR(\lambda) := - (\mu_2^{*2} + \sigma_\varepsilon^2)\cdot \frac{m_1'(\lambda)}{m_1(\lambda)^2} - \mu_1^{*2} \cdot \frac{m_2'(\lambda)}{m_1(\lambda)^2},
\end{align*}
where the Hermite coefficients $\mu_1^* = \E_{z\sim\cN(0,1)}[\sigma_*'(z)],  \mu_2^{*2} = 1-\mu_1^{*2}$, and the coupled Stieltjes transforms $m_1(z)$ and $m_2(z)\in\mathbb{C}^+\cup \R_{+}$ are uniquely defined by the following self-consistent equations for $z\in\mathbb{C}^+\cup \R_{+}$,
\begin{align}
    \frac{1}{\psi_1}(m_1(z)-m_2(z))(\mu_2^2m_1(z)+\mu_1^2m_2(z))+\mu_1^2m_1(z)m_2(z)\left(zm_1(z)-1\right)=& ~0, \label{eq:stieltjes1}\\
    \frac{\psi_2}{\psi_1}\left(\mu_1^2m_1(z)m_2(z)+\frac{1}{\psi_1}(m_2(z)-m_1(z))\right)+\mu_1^2m_1(z)m_2(z)\left(zm_1(z)-1\right)=& ~0, \label{eq:stieltjes2}
\end{align}
where $\mu_1 = \E_{z\sim\cN(0,1)}[\sigma'(z)], \mu_2^2 = 1-\mu_1^2$. Note that $\sigma$ being nonlinear implies $\mu_2\neq 0$. We omit the argument in $m_1(\lambda),m_2(\lambda)$ except when tracking the $\lambda$-dependence. $m_1',m_2'$ stand for derivative with respect to $\lambda$. 
To further simplify the exposition, we define  $\eta:=\psi_1/\psi_2$, and write the asymptotic prediction risk at width $\psi_2$ and ridge penalty $\lambda$ as $\cR_{\psi_2}(\lambda) = \cR_{\psi_1/\eta}(\lambda)$. 

\paragraph{Large-width limit.} First we consider the test performance of the ``infinite-width'' model, which corresponds to taking $\psi_2=n/d\to\infty$ or $\eta\to 0$. Note that the prediction risk in this limit is well-defined and has been computed in prior works (see e.g., \cite{bartlett2021deep}). First recall that $m_1,m_2>0$ and $z m_1(z)-1$ remains uniformly bounded for any $1/\eta$, hence from \eqref{eq:stieltjes2} we know that at the large-width limit, 
\begin{align*}
    \mu_1^2 m_1m_2 + \psi_1^{-1} (m_2-m_1) =:\sT_1(m_1,m_2) = 0, \quad \lambda m_1 + \mu_2^2 m_1 + \mu_1^2 m_2 - 1 =: \sT_2(m_1,m_2,\lambda) = 0. 
\end{align*}
Reparameterize $t:= 1+\mu_1^2\psi_1 m_1>1$, we have $\lambda(t) = \frac{\mu_1^2\psi_1}{t-1} - \frac{\mu_1^2}{t} - \mu_2^2$. 
By the chain rule, 
\[
\partial_t m_1 = \frac{1}{\mu_1^2\psi_1}, \quad
\partial_t m_2 = \frac{1}{\mu_1^2\psi_1t^2}, \quad
\partial_t \lambda = -\frac{\mu_1^2S(t)}{t^2(t-1)^2}, 
\]
where we defined $S(t) := \psi_1t^2 - (t-1)^2$. Hence at $\eta=0$ we have
\begin{align*}
    \frac{m_1'}{m_1^2} = -\frac{\psi_1 t^2}{S(t)},\quad
    \frac{m_2'}{m_1^2} = -\frac{\psi_1}{S(t)}.
\end{align*}
Therefore, 

\[
\cR_\infty(\lambda(t)):=\lim_{\psi_2\to\infty}\cR_{\psi_2}(\lambda(t)) = \frac{\psi_1 ((\mu_2^{*2} + \sigma_\varepsilon^2) t^2 + \mu_1^{*2})}{S(t)}. 
\]
Differentiating the risk yields the closed-form expression of the optimal ridge penalty (consistent with \cite{dobriban2018high,wu2020optimal}), 
\begin{align}
\lambda^*(\infty) =  \frac{\mu_1^2(\sigma_{\varepsilon}^2+\mu_2^{*2})}{\mu_1^{*2}} - \mu_2^2. \label{eq:optimal-lambda}
\end{align}
We restrict ourself to the setting where non-vanishing regularization is needed at the large-width limit, i.e., $\lambda^*(\infty)>0$. 
Denote $t_*$ as the corresponding optimal value of $t=1+\mu_1^2\psi_1m_1$, the optimality condition $\mu_1^{*2} = \frac{(\mu_2^{*2} + \sigma_\varepsilon^2) t_*(t_*-1)}{\psi_1t_*-t_*+1}$ implies that
\[
\psi_1t_*-t_*+1>0, \quad S(t_*) = (\psi_1t_*-t_*+1) t_* + (t_*-1) > 0.
\]
Hence we have the following characterization of the curvature
\begin{align}
\cR_\infty''(\lambda^*(\infty)) = \frac{2(\mu_2^{*2} + \sigma_\varepsilon^2)\psi_1}{S(t_*) \lambda'(t_*)^2 (\psi_1 t_*-t_*+1)} > 0, 
\quad
\lambda'(t_*) = -\frac{\mu_1^2S(t_*)}{t_*^2(t_*-1)^2} < 0. 
\label{eq:RF-curvature}
\end{align}
Note that \eqref{eq:RF-curvature} validates the local strong convexity of $\cR_\infty$. 

\paragraph{Finite-width sensitivity.}
Now consider the system given by \eqref{eq:stieltjes1}\eqref{eq:stieltjes2}
\[
E(m_1,m_2,\lambda,\eta) := \begin{bmatrix}
    \sT_2(m_1,m_2,\lambda) \\ \sT_1(m_1,m_2) + \eta \sT_3(m_1,m_2,\lambda)
\end{bmatrix},
\]
where $\sT_3 = \mu_1^2m_1m_2(\lambda m_1-1)$. Differentiate $E=0$ with respect to $\eta$ and evaluate at $\eta=0$,
\begin{align}
J_0\begin{bmatrix}
    \partial_{\eta} m_1 \\ \partial_{\eta} m_2
\end{bmatrix}
= - \begin{bmatrix}
    0 \\ T
\end{bmatrix} \Bigg|_{\eta=0}, \quad \text{where} \quad
J_0 := \partial_{m_1,m_2}(\sT_2,\sT_1) = \begin{bmatrix}
    \lambda+\mu_2^2\quad &\mu_1^2 \\ \mu_1^2m_2-\psi_1^{-1}  &\mu_1^2m_1+\psi_1^{-1}
\end{bmatrix}. \label{eq:jacobian}
\end{align}
Recall that $\sT_2=0$ yields $\lambda m_1-1 = -(\mu_2^2m_1+\mu_1^2m_2)$, and hence $T|_{\eta=0} = -\mu_1^2m_1m_2(\mu_2^2m_1+\mu_1^2m_2)$. 
On the other hand, by direct computation
\begin{align}
\mathrm{det}J_0 = \frac{\mu_1^2S(t)}{\psi_1 t(t-1)}, \quad \Rightarrow \quad
\mathrm{det}J_0(t_*)>0.
\label{eq:jacobian-invertibility}
\end{align}

Solving the linear system \eqref{eq:jacobian} yields
\begin{align*}
    \partial_\eta m_1 =& -\frac{\mu_1^4 m_1m_2(\mu_2^2m_1+\mu_1^2m_2)}{\mathrm{det} J_0} = -\frac{(t-1)^4(\mu_1^2+\mu_2^2t)}{\mu_1^4\psi_1^2 t S(t)}, \\
    \partial_\eta m_2 =& -\frac{(\lambda+\mu_2^2)\mu_1^2m_1m_2(\mu_2^2m_1+\mu_1^2m_2)}{\mathrm{det} J_0} = \frac{(t-1)^3(\mu_1^2+\mu_2^2t)(\psi_1t-t+1)}{\mu_1^4\psi_1^2t^2 S(t)}. 
\end{align*}

Differentiating the prediction risk with respect to $\eta$ and evaluate at the large-width limit $\eta=0$,
\[
\partial_{\eta=0} \cR_{\infty}(\lambda) = -(\mu_2^{*2} + \sigma_\varepsilon^2)\left(\frac{\partial_\eta m_1'}{m_1^2} - \frac{2m_1' \partial_\eta m_1}{m_1^3}\right) -\mu_1^{*2}\left(\frac{\partial_\eta m_2'}{m_1^2} - \frac{2m_2'\partial_\eta m_1}{m_1^3}\right),  
\]
where $\partial_{\eta=0} \cR_{\infty}(\lambda) = \partial_\eta \cR_{\psi_1/\eta}(\lambda) \big|_{\eta=0}$. 
A similar determinant calculation yields
\[
\partial_\eta m_1' = -\frac{\mu_1^2\sS}{\mathrm{det}J_0}, \quad \partial_\eta m_2' = -\frac{(\lambda+\mu_2^2)\sS}{\mathrm{det}J_0},
\]
where 
\begin{align*}
\sS :=& \mu_1^2 m_2(2\lambda m_1-1) m_1' + \mu_1^2 m_1(\lambda m_1-1)  m_2' + \mu_1^2 m_1^2 m_2 
\\ 
=& \frac{(t-1)^3}{\mu_1^4\psi_1^2}\left[-\frac{t}{S(t)}
+\frac{(2t+1)}{S(t)}\cdot\frac{t-1}{\mu_1^2\psi_1}\cdot\frac{\mu_1^2+\mu_2^2t}{t}\right]
+\frac{(t-1)^3}{\mu_1^4\psi_1^3\,t}.
\end{align*}

Using the above, a tedious algebraic calculation gives the following expression of the sensitivity of the prediction risk (at fixed $\lambda$) with respect to $\eta$:
\begin{align}
    \partial_{\eta=0} \cR_{\infty}(\lambda) = \frac{(t-1)^2 Q(t)}{\mu_1^2S(t)^2}, \label{eq:R-derivative}
\end{align}
where $Q(t)=\sum_{k=0}^2 c_k t^k$ with coefficients
\begin{align*}
c_2&=\mu_1^2(\mu_2^{*2} + \sigma_\varepsilon^2) - \mu_2^2(\mu_2^{*2} + \sigma_\varepsilon^2) + 2\mu_2^2\mu_1^{*2}\,(\psi_1-1),\\
c_1&=-3\mu_1^2(\mu_2^{*2} + \sigma_\varepsilon^2) + \mu_1^2\mu_1^{*2}(\psi_1-1) + \mu_2^2(\mu_2^{*2} + \sigma_\varepsilon^2) + \mu_2^2\mu_1^{*2}(\psi_1+3),\\
c_0&=2\mu_1^2(\mu_2^{*2} + \sigma_\varepsilon^2) + \mu_1^{*2}\,(2\mu_1^2\psi_1+2\mu_1^2-1).
\end{align*}
Importantly, under the optimal $\lambda$ defined in \eqref{eq:optimal-lambda}, we have the factorization
\[
Q(t_*) = \frac{2(\mu_2^{*2} + \sigma_\varepsilon^2)(\mu_1^2+\mu_2^2t_*)(t_*-1)S(t_*)}{\psi_1 t_*-t_*+1}, 
\]
and since $(t_*-1), S(t_*), (\psi_1 t_* -t_*+1)>0$, we conclude the derivative \eqref{eq:R-derivative} at $t_*$ is strictly positive
\begin{align}
    \partial_{\eta=0} \cR_{\infty}(\lambda_*(\infty)) = \frac{2(\mu_2^{*2} + \sigma_\varepsilon^2) (\mu_1^2+\mu_2^2t_*)(t_*-1)^3}{\mu_1^2S(t_*)(\psi_1 t_*-t_*+1)} =: C_\eta > 0. 
    \label{eq:C-eta}
\end{align}

\paragraph{Putting things together.} Recall that the asymptotic prediction risk $\cR$ is $C^2$ in $\lambda$ and $C^1$ in $\eta$. 
Given the Jacobian invertibility \eqref{eq:jacobian-invertibility} and local strong convexity \eqref{eq:RF-curvature}, the implicit function theorem (IFT) implies that there exists a neighborhood defined by some $\eta_0>0$ and a unique $C^1$ map $\bar{\lambda}^*: [0,\eta_0)\to\R_+$, such that $\bar{\lambda}^*(0) = \lambda^*(\infty)$, $\partial_{\lambda} \cR_{\eta}(\bar{\lambda}^*(\eta))=0$, and $\partial_{\lambda}^2 \cR_{\eta}(\bar{\lambda}^*(\eta)) > 0$. Consequently, we may take a first-order expansion and conclude (setting $\eta=\psi_1/\psi_2$ under with $\psi_1$)
\begin{align}
    \lambda^*(\psi_2) = \lambda^*(\infty) + \partial_{\eta\to 0_+}\bar{\lambda}^*(\eta)\cdot \frac{\psi_1}{\psi_2} + o(\psi_2^{-1}), \quad
    \partial_{\eta\to 0_+}\bar{\lambda}^*(\eta) :=  -\frac{\partial_{\lambda}\partial_{\eta}\cR_{\infty}(\lambda^*(\infty))}{\partial_{\lambda}^2\cR_{\infty}(\lambda^*(\infty))} =: C_\lambda. 
    \label{eq:RF-bn}
\end{align}
Note that the denominator in $C_\lambda$ is strictly positive by \eqref{eq:RF-curvature}. Moreover, since $\lambda'(t_*)\neq 0$, we may write $\partial_{\lambda}\partial_{\eta}\cR_{\infty}(\lambda) = \frac{\partial_t(\partial_{\eta=0}\cR(\lambda(t)))}{\partial_t\lambda(t)}$, and compute
$
C_\lambda = \frac{(3\psi_1-4\mu_1^2\psi_1+1)t_*^2 + 2(2\mu_1^2\psi_1-1)t_* +1}{2\psi_1 t_*^2}. 
$ 
This confirms that the hyperparameter gap vanishes at a rate of $O(\psi_2^{-1})$. 

For the loss gap, denote $\delta_{\eta} = \lambda^*(\psi_1/\eta) - \lambda^*(\infty)$ for $\eta\in [0,\eta_0)$, the IFT and Taylor expansion gives
\begin{align*}
\cR_{\psi_1/\eta}(\lambda^*(\infty) + \delta_\eta) 
=&\, \cR_{\infty}(\lambda^*(\infty)) + \partial_{\lambda=\lambda^*(\infty)} \cR_{\infty}(\lambda^*(\infty)) \delta_\eta + \partial_{\eta=0} \cR_{\infty}(\lambda^*(\infty)) + O(\delta_\eta^2 + \eta^2) \\
\overset{(i)}{=}&\,
\cR_{\infty}(\lambda^*(\infty)) + C_\eta\cdot \frac{\psi_1}{\psi_2} + o(\psi_2^{-1}),
\end{align*}
where $(i)$ is due to the stationarity condition $\partial_{\lambda=\lambda^*(\infty)} \cR_{\infty}(\lambda^*(\infty))=0$ and $\delta_\eta = O(\eta)$ from \eqref{eq:RF-bn}, and $C_\eta>0$ is explicitly given in \eqref{eq:C-eta}. The strict positivity of $C_\eta$ ensures that the loss gap scales exactly as $\Theta(\psi_2^{-1})$. Hence by Proposition~\ref{prop:basic_rate} and Theorem~\ref{thm:grid_and_transfer_rate} we know that the ridge penalty in RF regression exhibits \textit{fast and useful transfer}, i.e., the suboptimality gap $\abs{\cR_\infty(\lambda^*(\psi_2)) - \cR_\infty(\lambda^*(\infty))} \sim \psi_2^{-2} \ll \abs{\cR_{\psi_2}(\lambda^*(\psi_2)) - \cR_{\infty}(\lambda^*(\infty))}$, which aligns with the observations in Figure~\ref{fig:RFF} and concludes Theorem~\ref{thm:RF-rate}. 

\bigskip 

\subsection{Decomposition-aware Fast Transfer}\label{app:fast_transfer}
In this section we formalize our quantitative bound on the HP gap $b_n$ in terms of the top-$\hptrunc$ invariance and residual flatness arising from our decomposition. For the sake of simplicity we will assume that $\searchspace$ is small enough so that the local strong convexity condition holds globally.
 \begin{definition}\label{def:curvature}
     For $f \in C^2(\searchspace)$, define the curvature $\mu(f)$ and the Lipschitz constant $\Lip(f)$ as:
\[
\mu(f) := \inf_{\bhp \in \searchspace} f''(\bhp),~~\Lip(f) := \sup_{\bhp \in \searchspace} |f'(\bhp)|.
\]
 \end{definition}

\begin{definition}[Decomposition Rate]\label{def:decomposition_rate}
Let $\phi_n$ and $\searchspace$ be as in Definition~\ref{def:hp_transfer}, and $\phi_n^\hptrunc$ and $\bhp^\star_\hptrunc(n)$ as in Eq.\ (\ref{eq:decomposition_quantities}). We define the following quantities associated with the decomposition:
\begin{itemize}
    \item \textbf{Top-$\hptrunc$ invariance gap:} $ \epsinv(n, \hptrunc) := \frac{\supnorm{\phi_n^\hptrunc - \phi_\infty^\hptrunc}}{\mu(\phi_n^\hptrunc) \vee \mu(\phi_\infty^\hptrunc)}$

    \item \textbf{Residual flatness gap:} $ \epsflat(n, \hptrunc) := \frac{\Lip(\phi_n^{-\hptrunc})}{\mu(\phi_n)} + \frac{\Lip(\phi_\infty^{-\hptrunc})}{\mu(\phi_\infty)}$

    \item \textbf{Decomposition objective:} $\decompobj(\hptrunc) := 2 \sqrt{\epsinv(n, \hptrunc) } + \epsflat(n, \hptrunc)$
    
    \item \textbf{Decomposition HP gap:}  $ t_n := \min_{\hptrunc} \decompobj(\hptrunc)$ $\mathrm{s.t.}$ $\mu(\phi_n^\hptrunc) \wedge \mu(\phi_\infty^\hptrunc) > 0$.
\end{itemize}
\end{definition}
The decomposition HP gap $t_n$ is defined to be a natural upper bound on the HP gap $b_n$ as we show in Proposition \ref{prop:fast_transfer} which makes use of Lemmas \ref{lem:sc_unif_perturb} and \ref{lem:sc_deriv_perturb}. This upper bound is obtained by choosing an optimal HP dependent truncation index $\hptrunc^\star_n$ minimizing $\decompobj(\hptrunc)$ such that $\epsinv(n, \hptrunc^\star_n)$ which quantifies top-$\hptrunc$ invariance and $\epsflat(n, \hptrunc^\star_n)$ which quantifies residual flatness are both appropriately small. Note that $t_n$ is well-defined for $n$ large enough because we can take $\hptrunc \equiv n$ and from the assumptions of Definition \ref{def:hp_transfer} both $\phi_n$ and $\phi_\infty$ are strongly convex.
\begin{proposition}\label{prop:fast_transfer}
Assume the setting of Definition \ref{def:hp_transfer} where the local strong convexity is global. The decomposition HP gap $t_n$ in Definition \ref{def:decomposition_rate} satisfies $t_n \geq b_n$ where $b_n$ is the HP gap from Definition \ref{def:convergent_quantities}.
\end{proposition}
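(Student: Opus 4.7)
The plan is to bound $b_n$ via a three-term triangle inequality that routes through the top-$\hptrunc$ minimizers, and then apply two distinct strong-convexity perturbation lemmas: a sup-norm one (Lemma~\ref{lem:sc_unif_perturb}) for the ``invariance'' comparison between $\phi_n^\hptrunc$ and $\phi_\infty^\hptrunc$, and a gradient-perturbation one (the unstated Lemma~\ref{lem:sc_deriv_perturb}) for the ``flatness'' comparison between $\phi_n^\hptrunc$ and $\phi_n$ (and likewise at infinity). Fix any $\hptrunc$ that is feasible in the definition of $t_n$, i.e.\ $\mu(\phi_n^\hptrunc)\wedge \mu(\phi_\infty^\hptrunc)>0$; feasibility guarantees that $\bhp^\star_\hptrunc(n)$ and $\bhp^\star_\hptrunc(\infty)$ are well defined. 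Writing the elementary inequality
\[
    \norm{\bhp^\star(n)-\bhp^\star(\infty)} \leq \norm{\bhp^\star(n)-\bhp^\star_\hptrunc(n)} + \norm{\bhp^\star_\hptrunc(n)-\bhp^\star_\hptrunc(\infty)} + \norm{\bhp^\star_\hptrunc(\infty)-\bhp^\star(\infty)},
\]
I would then bound each of the three terms in turn.

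\paragraph{Middle term.} Since $\phi_n^\hptrunc$ and $\phi_\infty^\hptrunc$ both have strong-convexity modulus at least $\mu(\phi_n^\hptrunc)\vee \mu(\phi_\infty^\hptrunc)$, I apply Lemma~\ref{lem:sc_unif_perturb} with $f=\phi_n^\hptrunc$, $g=\phi_\infty^\hptrunc$ (and a second time with the roles swapped, using the stronger modulus each time), obtaining
\[
    \norm{\bhp^\star_\hptrunc(n)-\bhp^\star_\hptrunc(\infty)} \leq 2\sqrt{\frac{\supnorm{\phi_n^\hptrunc-\phi_\infty^\hptrunc}}{\mu(\phi_n^\hptrunc)\vee \mu(\phi_\infty^\hptrunc)}} = 2\sqrt{\epsinv(n,\hptrunc)}.
\]

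\paragraph{Outer terms.} For the first term, I view $\phi_n=\phi_n^\hptrunc + \phi_n^{-\hptrunc}$ as a perturbation of $\phi_n^\hptrunc$ by a Lipschitz function. The standard first-order argument (what I expect Lemma~\ref{lem:sc_deriv_perturb} to state) is: if $g=f+h$ with $f$ $\mu$-strongly convex on $\searchspace$ having interior minimizer $\bx_f$, and $h$ is Lipschitz with constant $L$, then $\norm{\bx_g-\bx_f}\leq L/\mu$ (proof: $\nabla f(\bx_g)=-\nabla h(\bx_g)$ has norm $\leq L$ and strong convexity gives $\norm{\nabla f(\bx_g)}\geq \mu\norm{\bx_g-\bx_f}$). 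Applied with $f=\phi_n$ (strongly convex with modulus $\mu(\phi_n)$) and the perturbation being $-\phi_n^{-\hptrunc}$, this yields
\[
    \norm{\bhp^\star(n)-\bhp^\star_\hptrunc(n)} \leq \frac{\Lip(\phi_n^{-\hptrunc})}{\mu(\phi_n)},
\]
and by the symmetric application at $n=\infty$, the third term is bounded by $\Lip(\phi_\infty^{-\hptrunc})/\mu(\phi_\infty)$. Summing the two gives exactly $\epsflat(n,\hptrunc)$.

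\paragraph{Conclusion.} Combining the three bounds yields
\[
    \norm{\bhp^\star(n)-\bhp^\star(\infty)} \leq 2\sqrt{\epsinv(n,\hptrunc)} + \epsflat(n,\hptrunc) = \decompobj(\hptrunc),
\]
for every feasible $\hptrunc$. Taking the infimum over feasible $\hptrunc$ on the right-hand side gives $b_n\leq t_n$. The main obstacle is just ensuring that the two perturbation lemmas can be applied with the exact constants appearing in Definition~\ref{def:decomposition_rate}; in particular, one must verify that the minimum-of-two-moduli convention in $\epsinv$ is indeed the best usable constant (for which applying Lemma~\ref{lem:sc_unif_perturb} with whichever of $\phi_n^\hptrunc,\phi_\infty^\hptrunc$ has the larger modulus is the right move). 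Care must also be taken that all minimizers lie in $\searchspace$ so that the sup/Lipschitz quantities of Definition~\ref{def:curvature} genuinely control the perturbation; under the global strong-convexity assumption stated in the proposition this is automatic.
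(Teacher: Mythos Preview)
Your proposal is correct and follows essentially the same approach as the paper's proof: the same three-term triangle inequality, Lemma~\ref{lem:sc_unif_perturb} for the middle term, and Lemma~\ref{lem:sc_deriv_perturb} (whose content you guessed correctly) for the two outer terms, followed by minimizing over feasible $\hptrunc$. One minor slip: you write ``minimum-of-two-moduli'' in the final paragraph, but $\epsinv$ uses the maximum $\mu(\phi_n^\hptrunc)\vee\mu(\phi_\infty^\hptrunc)$, which you handle correctly elsewhere.
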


We remark that we introduce the quantity $t_n$ primarily as a theoretical quantity for conceptual purposes. The quantity $t_n$ will be small when top-$\hptrunc$ invariance and residual flatness holds and since $t_n \geq b_n$ this will imply $b_n$ is small as well. We also note that it is natural to chose the optimal truncation index $\hptrunc^\star_n$ used in $t_n$ to be a function of the width $n$. This is because as $n \to \infty$ we expect $b_n \to 0$ and so it is desirable that $t_n \to 0$ as well which will not be the case if we used a fixed $\hptrunc$ because $\epsflat$ in $\decompobj(\hptrunc)$ will converge to a non-zero value. Overall, one can view $t_n$ being small\footnote{Relative to the bound on $b_n$ implied solely by $a_n$ (see Proposition \ref{prop:basic_rate}).} as an explicit sufficient condition for fast transfer. We conjecture that (some version of) this condition holds in practice when training neural networks on natural data with optimizers like Adam or SGD. For future work, it would be interesting to provide natural settings where such a formal statement is provably true.

To prove Proposition \ref{prop:fast_transfer} we will first need a perturbation result similar to Lemma \ref{lem:sc_unif_perturb}.
\begin{lemma}\label{lem:sc_deriv_perturb}
    Let $f : \searchspace \to \R$ and $g : \searchspace \to \R$ such that $f$ is $\growthconst$ strongly-convex and $\sup_{\bx \in \searchspace} |g'(\bx)| \leq \varepsilon$. Let $\bx^\star = \argmin_{\bx \in \searchspace} f(\bx)$ and $\tilde{\bx} = \argmin_{\bx \in \searchspace} f(\bx) + g(\bx)$. Then
    \[
        \norm{\bx^\star - \tilde{\bx}} \leq \frac{\varepsilon}{\growthconst}.
    \]
\end{lemma}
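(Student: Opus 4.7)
The plan is to derive the bound from the first-order optimality conditions on the convex set $\searchspace$, using strong convexity of $f$ to turn a linear inequality into a quadratic one. Since we are minimizing over a convex domain (rather than on an open set), I would use the variational inequality form of optimality rather than setting $f'=0$ directly: for $\bx^\star = \argmin f$ and $\tilde\bx = \argmin (f+g)$ we have $\langle f'(\bx^\star), \by - \bx^\star\rangle \ge 0$ and $\langle f'(\tilde\bx) + g'(\tilde\bx), \by - \tilde\bx\rangle \ge 0$ for all $\by \in \searchspace$.

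Plugging $\by = \tilde\bx$ into the first inequality and $\by = \bx^\star$ into the second, then adding the two resulting inequalities, yields
\[
    \langle f'(\tilde\bx) - f'(\bx^\star),\, \tilde\bx - \bx^\star\rangle \;\le\; -\langle g'(\tilde\bx),\, \tilde\bx - \bx^\star\rangle.
\]
The right-hand side is bounded by $\varepsilon \norm{\tilde\bx - \bx^\star}$ by Cauchy--Schwarz together with the hypothesis $\sup_{\bx \in \searchspace}|g'(\bx)| \le \varepsilon$. For the left-hand side, I would invoke the standard consequence of $\growthconst$-strong convexity of $f$ (monotonicity of $f'$):
\[
    \langle f'(\tilde\bx) - f'(\bx^\star),\, \tilde\bx - \bx^\star\rangle \;\ge\; \growthconst \norm{\tilde\bx - \bx^\star}^2.
\]
Combining the two bounds gives $\growthconst \norm{\tilde\bx - \bx^\star}^2 \le \varepsilon \norm{\tilde\bx - \bx^\star}$, and dividing through by $\norm{\tilde\bx - \bx^\star}$ (trivial if this norm is zero) yields the claim.

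There is essentially no substantive obstacle; the only subtle point is not assuming that minimizers lie in the interior of $\searchspace$, which is why I would use the variational-inequality form of the first-order conditions rather than $f'(\bx^\star) = 0$ and $f'(\tilde\bx) + g'(\tilde\bx) = 0$. This also explains the improved constant $\varepsilon/\growthconst$ compared to the naive function-value argument (optimality of $\tilde\bx$ plus strong-convexity quadratic lower bound on $f$) which only yields $2\varepsilon/\growthconst$. Note that no convexity or even continuity assumption on $g$ beyond the uniform gradient bound is used, in contrast with Lemma~\ref{lem:sc_unif_perturb} which measures perturbation in sup-norm.
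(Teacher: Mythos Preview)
Your proof is correct and follows essentially the same approach as the paper: use first-order optimality to bound $f'(\tilde\bx)-f'(\bx^\star)$ via $g'(\tilde\bx)$, then invoke the strong monotonicity of $f'$. The paper's version is terser, implicitly assuming interior minimizers so that $f'(\bx^\star)=0$ and $(f+g)'(\tilde\bx)=0$ directly; your variational-inequality formulation is a mild refinement that also covers boundary minimizers.
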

\begin{proof}
    By first order optimality we have $f'(\tilde{\bx}) + g'(\tilde{\bx})$, hence $|f'(\tilde{\bx})| = |g'(\tilde{\bx})| \leq \varepsilon$. By strong convexity $\growthconst \norm{\tilde{\bx} - \bx^\star} \leq |f'(\tilde{\bx})| \leq \varepsilon$ which gives the result.
\end{proof}
\begin{proof}[Proof of Proposition \ref{prop:fast_transfer}]
    For a given $n$ and $\hptrunc$ such that $\mu(\phi_n^\hptrunc) \wedge \mu(\phi_\infty^\hptrunc) > 0$,
    \begin{align*}
        b_n &= \norm{\bhp^\star(n) - \bhp^\star(\infty)}\\
        &=  \norm{\bhp^\star(n) - \bhp^\star_{\hptrunc}(n) + \bhp^\star_{\hptrunc}(n) - \bhp^\star_{\hptrunc}(\infty) + \bhp^\star_{\hptrunc}(\infty) - \bhp^\star(\infty)}\\
        &\leq  \norm{\bhp^\star(n) - \bhp^\star_{\hptrunc}(n)}  + \norm{\bhp^\star_{\hptrunc}(n) - \bhp^\star_{\hptrunc}(\infty)} + \norm{\bhp^\star_{\hptrunc}(\infty) - \bhp^\star(\infty)}\\
        &\leq 2 \sqrt{\epsinv(n, k)} + \norm{\bhp^\star(n) - \bhp^\star_{\hptrunc}(n)}  + \norm{\bhp^\star_{\hptrunc}(\infty) - \bhp^\star(\infty)}\\
        &\leq 2 \sqrt{\epsinv(n, \hptrunc)} + \epsflat(n, \hptrunc),
    \end{align*}
    where the first inequality is the triangle inequality, the second comes from Lemma \ref{lem:sc_unif_perturb}, and the last inequality comes from Lemma \ref{lem:sc_deriv_perturb}. Taking the minimum of the right hand side over valid $\hptrunc$ yields the claim $t_n \geq b_n$.
\end{proof}

\newpage
\section{Truncation Index Selection}\label{app:selection_process}
Empirically finding the minimizer $\hptrunc^\star(n)$ in the definition of $t_n$ (Def.\ \ref{def:decomposition_rate}) is not tractable due to complicated nature of the decomposition objective $\decompobj$. Instead of trying to minimize $\decompobj$, we will use a simpler surrogate process which we outline below. 

Note that given a finite grid of HPs $\{\bhp_1, \ldots, \bhp_{\gridsize}\}$, we only need to produce a truncation index for each $\bhp_i$ with $i \in [\gridsize]$.  Let $\bhptrunc = (\hptrunc_1, \ldots, \hptrunc_{\gridsize})$  represent the vector of these values where $\hptrunc_i = \hptrunc(\bhp_i)$ for $i \in [\gridsize]$. We define $\phi_n^{\bhptrunc}$ to be the set of pointwise evaluations $\{(\bhp_i, \phi_n^{\hptrunc_i}(\bhp_i))\}_{i \in [\gridsize]}$ and identify it with the function obtained from its linear interpolation. Our goal is to return a set of truncation index vectors $\hat{\bhptrunc}(n)$.

Let $n_{\max}$ denote the largest width model under consideration and fix a width $n < n_{\max}$. Consider the following proxy objective with parameters $\btau = (\tau_1, \tau_2)$ and $\tau_1, \tau_2 > 0$,
\begin{equation}\label{eq:decomp_objective_proxy}
    \decompobj_{\mathrm{proxy}}(\bhptrunc; \btau) := \frac{1}{\gridsize} \sum\limits_{i \in [\gridsize]} |\phi_n^{\hptrunc_i}(\bhp_i) - \phi_{n_{\max}}^{\hptrunc_i}(\bhp_i)| + \tau_1 \cdot\Lip(\phi_n^{-\bhptrunc}) + \tau_2 \cdot \Lip(\phi_{n_{\max}}^{-\bhptrunc}),
\end{equation}
and define its minimizer to be $\bhptrunc^\star(\btau) := \argmin_{\bhptrunc}~\decompobj_{\mathrm{proxy}}(\bhptrunc; \btau)$, which can be found approximately using coordinate descent (see Algorithm \ref{alg:minimize-proxy}). The objective $\decompobj_{\mathrm{proxy}}$ is similar to the objective $\decompobj$ in Definition \ref{def:decomposition_rate}, except that instead of a supnorm we use an average $\ell_1$-norm to promote tractability, we use $n_{\max}$ as an infinite-width proxy, and we absorb all the curvature based scalings $\mu(\cdot)$ into constants $\tau_1, \tau_2$. We will set $\hat{\bhptrunc}(n) = \bhptrunc^\star(\hat{
\btau})$ for a ``reasonable'' choice of $\hat{\btau}$. In particular, $\hat{\btau}$ will be chosen to be the smallest\footnote{For definiteness, we order $(\tau_1, \tau_2)$ by their sum, breaking ties in the first-coordinate.} $\btau$ so that $\phi_n^{\hptrunc^\star(\btau)}$ and $\phi_{n_{\max}}^{\hptrunc^\star(\btau)}$ are approximately convex and the minimizers are close to the minimizers of $\phi_n$ and $\phi_{n_{\max}}$ respectively, assuming such $\hat{\btau}$ exists. The full details of the process are given in Algorithm \ref{alg:compute-hptrunc-all} in Appendix \ref{app:selection_process}.

In this section we describe our procedure (Algorithm \ref{alg:compute-hptrunc-all}) for selecting $\hat{\bhptrunc}(n)$ (see Section \ref{sec:topk_decomposition}).  We do not claim this procedure is optimal in any sense and emphasize that we are just searching for a valid $\hat{\bhptrunc}(n)$ so that a certain sufficient condition holds qualitatively in order to support our conjecture for fast hyperparameter transfer. For convenience, we only consider the case where we sweep a single HP, although this can be straightforwardly extended. 

As part of our algorithm, we need to measure the convexity and flatness of a function $f$ given a set of pointwise evaluations $\{(\hp_i, y_i)\}_{i=1}^{\gridsize}$ where $\hp_1 < \cdots < \hp_\gridsize$ and $y_i = f(\hp_i)$. For each interior index $i = 2, \ldots, \gridsize - 1$, define the three-point second-derivative estimate
\[
\hat{f}''(\hp_i)
:= 2\!\left(
\frac{y_{i-1}}{(\hp_{i-1}-\hp_i)(\hp_{i-1}-\hp_{i+1})}
+ \frac{y_i}{(\hp_i-\hp_{i-1})(\hp_i-\hp_{i+1})}
+ \frac{y_{i+1}}{(\hp_{i+1}-\hp_{i-1})(\hp_{i+1}-\hp_i)}
\right).
\]
The \emph{convexity error} is the fraction of interior points with
negative curvature estimate:
\begin{equation}\label{eq:conv_error}
\ConvErr(\{(\hp_i, y_i)\}_{i=1}^{\gridsize})
:= \frac{1}{\gridsize-2} \sum_{i=2}^{\gridsize-1}
   \mathbf{1}\!\left\{ \hat{f}''(\hp_i) < 0 \right\}.
\end{equation}
The \emph{Lipschitz constant} is the maximum slope magnitude:
\begin{equation}\label{eq:dlip}
\Lip(\{(\hp_i, y_i)\}_{i=1}^{\gridsize})
:= \max_{1 \le i \le \gridsize-1}
   \left|\,\frac{y_{i+1}-y_i}{\hp_{i+1}-\hp_i}\,\right|.
\end{equation}

\begin{algorithm}[htbp]
\caption{Compute $\hat{\bhptrunc}(n)$}
\label{alg:compute-hptrunc-all}
\DontPrintSemicolon
\textbf{Input:}

\begin{tabular}{@{}l p{0.72\linewidth}}
\quad$\mathcal N$ & set of widths, with $n_{\max}=\max \mathcal N$ \\
\quad$\{\hp_i\}_{i=1}^{\gridsize}$ & grid of $\gridsize$ hyperparameter points \\
\quad$\mathcal T$ & candidate list of $\tau$ values \\
\quad$\varepsilon_{\mathrm{cvx}},\ \varepsilon_{\mathrm{amin}}$ & tolerances (convexity, argmin proximity) \\
\quad$\Phi_n \in \mathbb{R}^{\gridsize \times n}$ & arrays $\phi_n^{k}(\hp_i)$ for each $n \in \mathcal N$
\end{tabular}

\medskip
\textbf{Output:} $\hat{\bhptrunc}(n)$ truncation vectors in $[n]^{\gridsize}$ for $n<n_{\max}$, or \textsc{Fail}

\medskip

\begin{enumerate}
  \item \textbf{Grid:}\quad
  $\displaystyle
  \mathcal{G}_{\btau}
  := \{(\tau_1,\tau_2): \tau_1,\tau_2\in\mathcal T\}
  \ \text{sorted by}\ \tau_1+\tau_2\ 
  $
  \item \textbf{For each $n\in\mathcal N\setminus\{n_{\max}\}$:}
  \begin{enumerate}
    \item \textbf{For} $(\tau_1,\tau_2)\in \mathcal{G}_{\btau}$ (in order):
    \begin{enumerate}
      \item $\bhptrunc \leftarrow \textsc{MinimizeProxy}(\Phi_n, \Phi_{n_{\max}};\tau_1,\tau_2)$
            \tcp*{Alg.~\ref{alg:minimize-proxy}; minimizes Eq.~(\ref{eq:decomp_objective_proxy})}
      \item $E_{\mathrm{cvx}}:=\max\{\mathrm{ConvErr}(\phi_n^{\bhptrunc}),\ \mathrm{ConvErr}(\phi_{n_{\max}}^{\bhptrunc})\}$
            \tcp*{Eq.~(\ref{eq:conv_error})}
      \item $a_n:=\arg\min_{\hp} \phi_n^{\hptrunc}$,\quad
            $a_n^{\mathrm{tot}}:=\arg\min_{\hp} \phi_{n}$;\\
            \vspace{2mm}
            $a_{\infty}:=\arg\min_{\hp}\phi_{n_{\max}}^\hptrunc$,\quad
            $a_{\infty}^{\mathrm{tot}}:=\arg\min_{\hp} \phi_{n_{\max}}$.
            \\
            \vspace{2mm}
            
            $\Delta:=\max\{|a_n-a_n^{\mathrm{tot}}|,\ |a_{\infty}-a_{\infty}^{\mathrm{tot}}|\}$ \tcp*{argmin proximity}
      \item \textbf{If} $E_{\mathrm{cvx}}\le \varepsilon_{\mathrm{cvx}}$ \textbf{and} $\Delta\le \varepsilon_{\mathrm{amin}}$,
            \textbf{set} $\hat{\bhptrunc}(n)\gets \bhptrunc$ and \textbf{break} to the next $n$.
    \end{enumerate}
    \item \textbf{If} no pair in $\mathcal{G}_{\btau}$ is accepted, \textbf{set} $\hat{\bhptrunc}(n)\gets \textsc{Fail}$.
  \end{enumerate}
\end{enumerate}
\end{algorithm}

\begin{algorithm}[htbp]
\caption{Minimize Proxy Objective Eq.\ (\ref{eq:decomp_objective_proxy})}
\label{alg:minimize-proxy}
\DontPrintSemicolon
\textbf{Input:}

\begin{tabular}{@{}l p{0.72\linewidth}}
\quad$\Phi_n$, $\Phi_{n_{\max}}$& arrays $\phi_n^{k}(\hp_i)$ and $\phi_{n_{\max}}^{k}(\hp_i)$, shapes $\gridsize \times n$ and $\gridsize \times n_{\max}$\\
\quad$(\tau_1,\tau_2)$ & positive penalty weights
\end{tabular}

\medskip
\textbf{Output:} $\bhptrunc^\star \in [n]^{\gridsize}$ (approximate minimizer via coordinate descent)

\medskip

\begin{enumerate}
  \item \textbf{Initialize:} $\bhptrunc \leftarrow (n/2, \ldots, n/2)$
  \item \textbf{Repeat} until no coordinate changes:
    \begin{enumerate}    
    \item[] \textbf{For} $i=1,\ldots,\gridsize$:
    \begin{enumerate}
      \item \emph{Local scores for each $k\in[n]$:}
            \[
                \mathrm{score}(k) = \decompobj_{\mathrm{proxy}}(\tilde{\bhptrunc}) \text{ where } \tilde{\bhptrunc} \text{ is } \bhptrunc\text{ with } \hptrunc_i \text{ switched to } k
            \] 
      \item \emph{Coordinate update:} $\hptrunc_i \leftarrow \argmin_{k \in [n]}~\mathrm{score}(k)$.
    \end{enumerate}
    \end{enumerate}
  \item \textbf{Return} $\bhptrunc^\star :=(\hptrunc_i)_{i=1}^{\gridsize}$.
\end{enumerate}
\end{algorithm}

\newpage

\section{Experimental Details and Additional Experiments}
\label{app:exp_details}

\subsection{\texorpdfstring{$\mu$P MLP Experiments}{muP MLP Experiments}}
\label{app:two-layer-relu}
 
\paragraph{Experiment Setting.} 
We consider regression with shallow ReLU network: $f(\bx) = \sum_{i=1}^n a_i \sigma(\langle \bw_i,\bx\rangle + b_i)$, where we aim to tune the learning rate $\eta$ that minimizes the validation squared loss. 
We take the target function to be the following multi-index model on isotropic Gaussian data,
\[
y = \sqrt{\textstyle \frac{d}{4k} \sum_{i=1}^k x_i^2}, \quad 
\bx\sim\mathcal{N}(0,4d^{-1}\bI_d). 
\]
We set $k=15$, and $d=64, n=2^{15}$. 
We run the Adam optimizer for $T=2^{14}$ steps with batch size $256$ to minimize the squared loss. The initialization and learning rate are set according to $\mu$P. As shown in Figure~\ref{fig:counter-example}, the optimal learning rate steadily approaches $\sim 10^{-1}$ before an abrupt jump at width $n=8192$. 

\begin{figure}[!htb]
\centering
\begin{subfigure}[t]{0.49\linewidth}
\centering
{\includegraphics[height=0.64\textwidth]{figures/two_layer_ReLU_k_15_d_64_regression_loss_prefactor1.png}}  \vspace{-1mm}
\caption{\small Identical all-layer learning rate (Figure~\ref{fig:counter-example}).}
\label{fig:ReLU-same-lr}
\end{subfigure}%
\begin{subfigure}[t]{0.49\linewidth}
\centering 
{\includegraphics[height=0.64\textwidth]{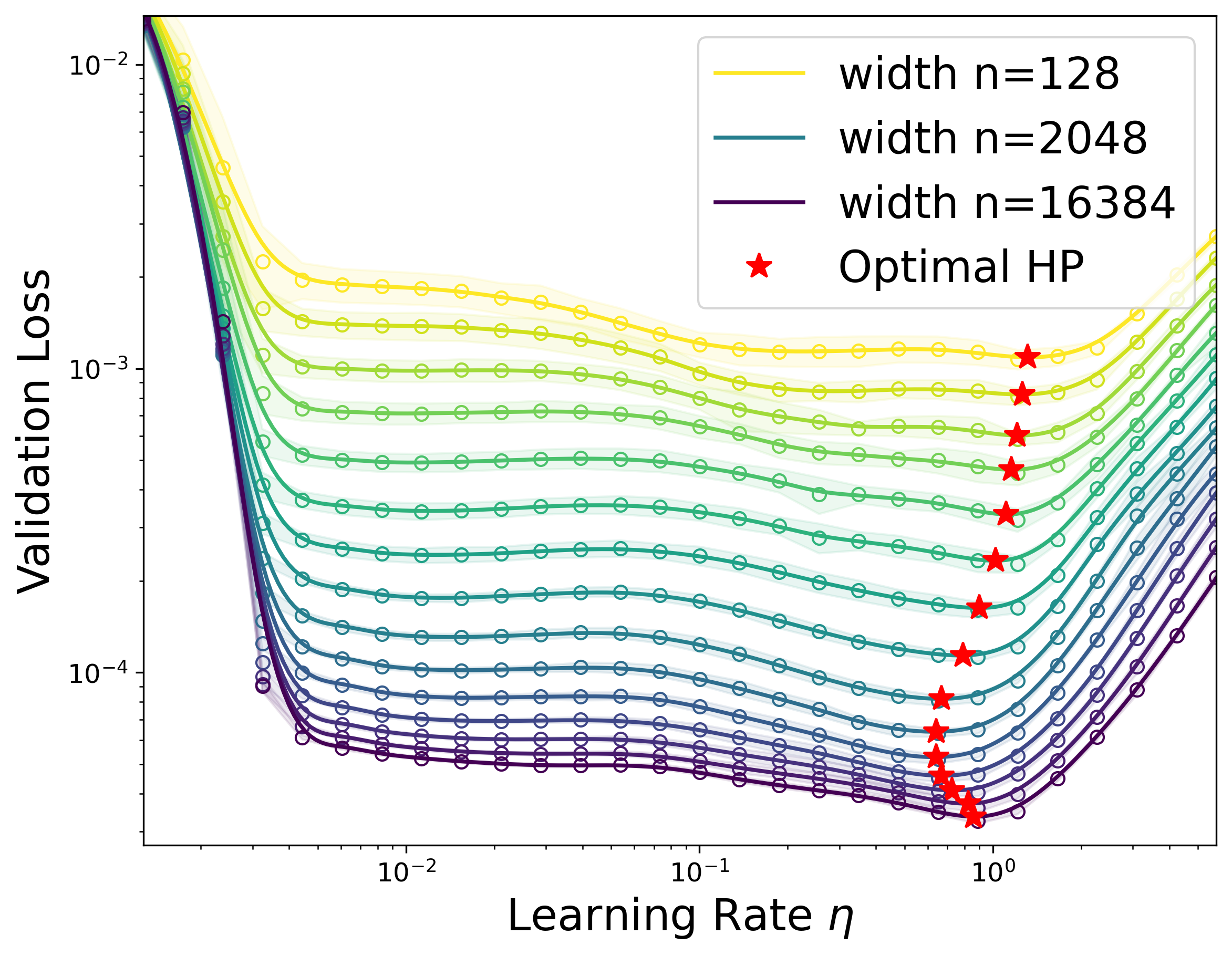}}  \vspace{-1mm}
\caption{\small Different layer-wise learning rate.} 
\label{fig:ReLU-layer-wise-lr}
\end{subfigure}%
\caption{\small Learning Gaussian $k$-index model with two-layer ReLU network under $\mu$P. The HP of interest is the Adam learning rate. \textbf{Left:} default $\mu$P implementation with the same learning rate for the first- and second-layer parameters. \textbf{Right:} layer-wise learning rate where the second-layer learning rate is divided by $50$. Observe that the original $\mu$P yields a ``bimodal'' HP curve, whereas layer-wise HP leads to more stable transfer.} 
\label{fig:two-layer-ReLU-MIM} 
\end{figure} 

\paragraph{Layer-wise Learning Rate.}
One possible explanation of the ``bimodal'' learning rate curves in Figure~\ref{fig:counter-example} (duplicated in Figure~\ref{fig:ReLU-same-lr}) is that there exist different mechanisms to lower the loss that may dominate learning at different widths and require separate HP tuning. In our two-layer ReLU network, while the first-layer neurons can rotate and grow in norm, the second-layer parameters can only affect the norm (e.g., see discussion on Wasserstein vs.~Fisher-Rao dynamics for mean-field neural networks \citep{chizat2022sparse}). There is no reason \textit{a priori} that the optimal learning rate for the two layers coincide under mean-field / $\mu$P, hence if one layer start to contribute more to the loss decrease at certain width, we may observe a shift in the optimal learning rate that reflects the shift in the dominant mechanism (e.g., for learning certain multi-index models, training the second-layer parameters can achieve low loss only at \textit{exponential} width). In such scenarios, we intuitively expect that this bimodal behavior can be remedied by the use of layer-wise learning rate.  

This reasoning is empirically supported by Figure~\ref{fig:ReLU-layer-wise-lr} where we divide the second-layer learning rate by 50. We observe that with the introduced multiplier, the learning rate curves now become roughly unimodal; moreover, the model achieves lower validation loss than the original tied learning rate setting (Figure~\ref{fig:ReLU-same-lr}). 
We leave a more systematic investigation of layer-wise learning rate as future work. 

\newpage
\subsection{Llama Experiments}\label{app:llama_exps}
For our Llama experiments we use the following Llama architecture.

\begin{table}[ht]
\centering
\small
\renewcommand{\arraystretch}{1.2}
\begin{tabular}{@{}ll@{}}
\toprule
\textbf{$\mu$P Base Dimension} & 128 \\
\addlinespace
\textbf{Normalization} & RMSNorm (prelayer norm) \\
\addlinespace
\textbf{Hidden Layers} & 4 \\
\addlinespace
\textbf{Attention Heads} & 8 \\
\addlinespace
\textbf{Feedforward Network} & SwiGLU\\
\addlinespace
\textbf{Feedforward Dimension} & 4$\times$ Embedding Dimension\\
\addlinespace
\textbf{Rotary Embedding} & $\theta = 10000$ \\
\bottomrule
\end{tabular}
\caption{Llama architecture configuration used in all experiments.}
\end{table}

We use the following schedule for the EMA to preserve performance and linearization accuracy.

\paragraph{EMA Warmup}  
We warm up the EMA decay coefficient \(\alpha_t\) linearly from \(\alpha_{\mathrm{start}}=0.98\) to \(\alpha_{\mathrm{end}}=0.9995\) over 2000 steps in the effective window \((1-\alpha_t)^{-1}\). To capture early-training variation without increasing linearization error, we subsample the EMA trajectory every \(\tau\) steps, with \(\tau\) itself warmed up linearly from \(\tau_{\mathrm{start}}=2\) to \(\tau_{\mathrm{end}}=10\) over the same period.

\subsubsection{Llama Adam LR}\label{app:llama_adam_lr}
\paragraph{Llama Adam Configuration}
Below is the default setup for all Llama experiments using Adam.  

\begin{table}[ht]
\centering
\small
\renewcommand{\arraystretch}{1.2}
\begin{tabular}{@{}ll@{}}
\toprule
\textbf{Dataset}       & WikiText-103                                        \\ 
\addlinespace
\textbf{Epochs}        & 1                                                   \\ 
\addlinespace
\textbf{Optimizer}     & Adam ($\beta_1=0.9,\ \beta_2=0.999$) \\

\addlinespace
\textbf{Batch Size}     & 128 \\

\addlinespace
\textbf{Context Length}     & 128 \\

\addlinespace
\textbf{LR Schedule}   & WSD with 4\% linear warmup \& 20\% cooldown         \\ 
\bottomrule
\end{tabular}
\end{table}

\paragraph{LR Grid Search}
\begin{itemize}[nosep,leftmargin=*]
  \item \(\displaystyle \mathrm{LR} \in \{1.0,\,1.4,\,2.0,\,2.8,\,4.0,\,4.8,\,5.7,\,6.7,\,8.0\}\times10^{-3}\)
  
  \vspace{2mm}
  
  \item \(n \in \{128,\,256,\,512,\,1024,\,2048ß\}\)
\end{itemize}

\noindent 
The loss \(\mathcal{L}\) is evaluated on the validation split. We average runs over 3 random seeds.

\subsubsection{\texorpdfstring{Llama Adam $\beta_1$ and $\beta_2$}{Llama Adam beta1 and beta2}}\label{app:adam_betas}
We repeat the same experiments from Appendix \ref{app:llama_adam_lr} for the Adam $\beta$ hyperparameters, fixing $\mathrm{LR} = 0.004$. For the $\beta_1$ experiment, we perform a sweep over $\beta_1$ with $\beta_2 = 0.999$ fixed and for the $\beta_2$ experiment we fix $\beta_1 = 0.9$ and sweep over $\beta_2$. The grids we used for the sweeps are:
\begin{itemize}
    \item $\beta_1 \in \{0.63,\, 0.78,\, 0.86,\, 0.92,\, 0.95\}$
    \item $\beta_2 \in \{0.95,\, 0.98,\, 0.99,\, 0.995,\, 0.998,\, 0.9999\}$
\end{itemize}
The sweeps are performed in logspace in the effective window size $(1 - \beta)^{-1}$. 

\begin{figure}[htpb]
    \centering
    \begin{subfigure}[t]{0.45\linewidth}
        \centering
        \includegraphics[width=\linewidth]{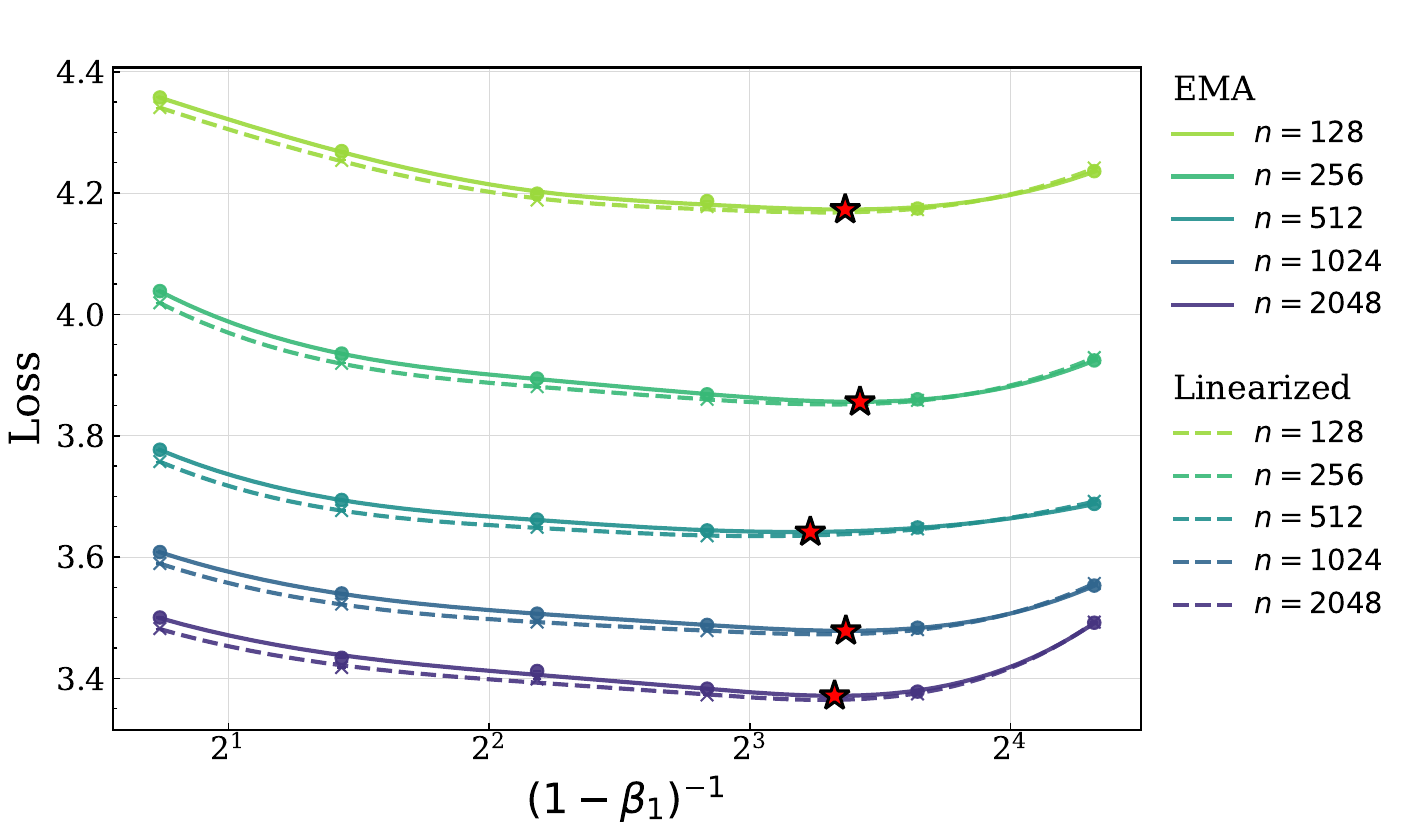}
        \caption{Adam $\beta_1$ sweep.}
        \label{fig:ema_adam_beta1_step_7185}
    \end{subfigure}
    \hspace{2.5mm}
     \begin{subfigure}[t]{0.45\linewidth}
        \centering
        \includegraphics[width=\linewidth]{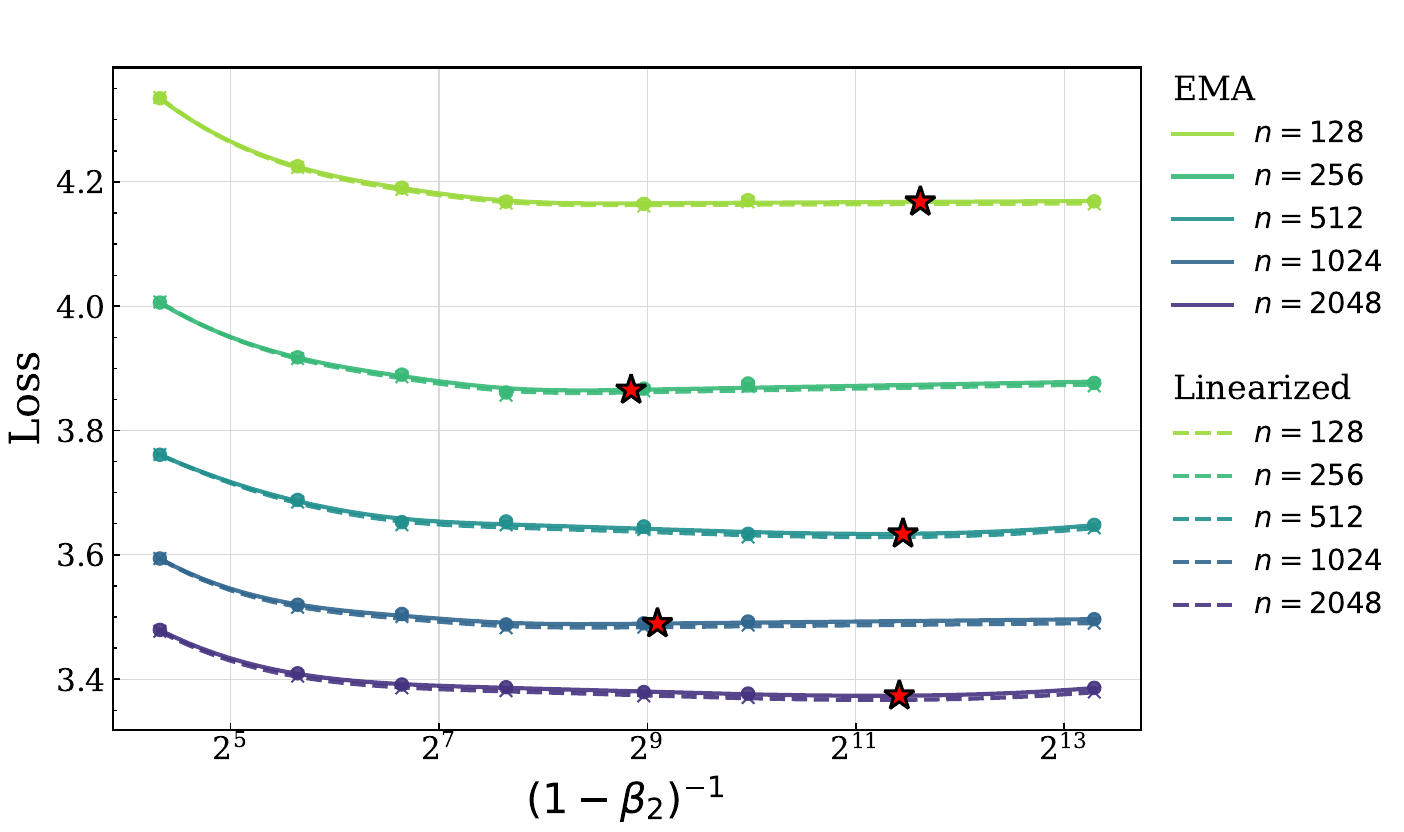}
        \caption{Adam $\beta_2$ sweep.}
        \label{fig:ema_adam_beta2_step_7185}
    \end{subfigure}
    \caption{\small Same Llama and WikiText setup as Fig.~\ref{fig:ema_llama_step_7185}, but sweeping Adam $\beta_1$ and $\beta_2$. \textbf{Left:} Smaller $\beta_1$ values increase the linearization error. \textbf{Right:} The linearization error is small for all $\beta_2$. For large enough $\beta_2$ the loss is near-optimal and transfer is nearly perfect up to harmless fluctuations in the optimal $\beta_2$.}
    \label{fig:ema_llama_step_7185_betas}
\end{figure}
The analogs of Figures \ref{fig:ema_adam_lr_step_7185}, \ref{fig:top_k_across_widths}, \ref{fig:resids_across_width} are presented for $\beta_1$ in Figures \ref{fig:ema_adam_beta1_step_7185}, \ref{fig:adam_beta1_top_k_across_widths}, \ref{fig:adam_beta1_resids_across_widths} respectively and for $\beta_2$ in Figures \ref{fig:ema_adam_beta2_step_7185}, \ref{fig:adam_beta2_top_k_across_widths}, \ref{fig:adam_beta2_resids_across_widths} respectively. 
We note that in this particular setting the performance is insensitive to $\beta_2$ except when it is too small. 
The computed values of $\hat{\bhptrunc}(n)$ shown in Figure \ref{fig:ema_adam_combined_k_vec} are fairly similar for all the hyperparameters, suggesting that the dimension of this invariant subspace may be mostly data and architecture dependent. It will be interesting to perform similar experiments for other HPs such as weight decay and if our decomposition viewpoint can shed insight onto HPs which do not show fast transfer.

\begin{figure}[!htb]
    \centering
    \includegraphics[width=0.95\linewidth]{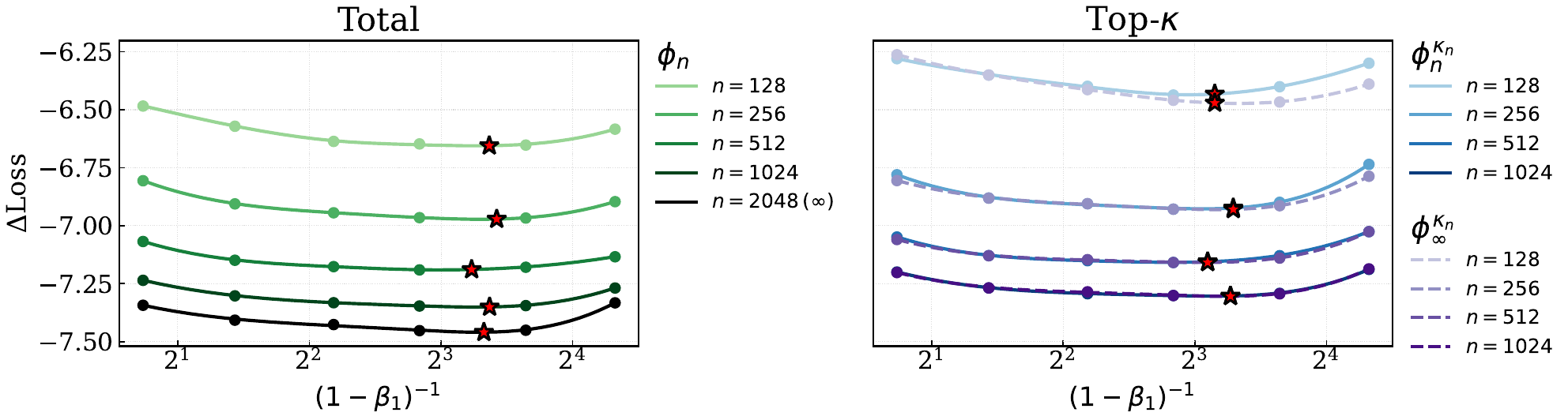}
    \caption{\small \textbf{Left: } Total loss curves $\phi_n$ across widths for Adam $\beta_1$. \textbf{Right:} Corresponding top-$\hptrunc_n$ losses $\phi_n^{\hptrunc_n}$ (blue dashed) and $\phi_\infty^{\hptrunc_n}$ (purple dashed), similar to Figure \ref{fig:top_k_across_widths}.}
    \label{fig:adam_beta1_top_k_across_widths}
\end{figure}

\begin{figure}[!htb]
    \centering
    \includegraphics[width=0.95\linewidth]{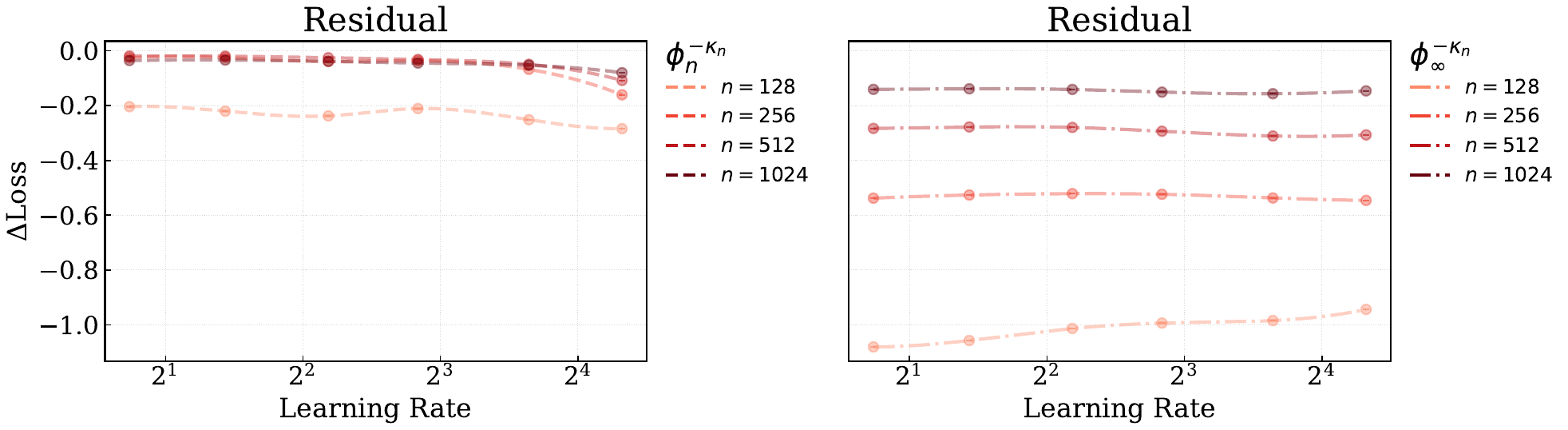}
    \caption{\small Residual losses around the top-$\hptrunc_n$ minimizers for Adam $\beta_1$, which are nearly flat as in Figure \ref{fig:resids_across_width}.}
    \label{fig:adam_beta1_resids_across_widths}
\end{figure}

\begin{figure}[!htb]
    \centering
    \includegraphics[width=0.95\linewidth]{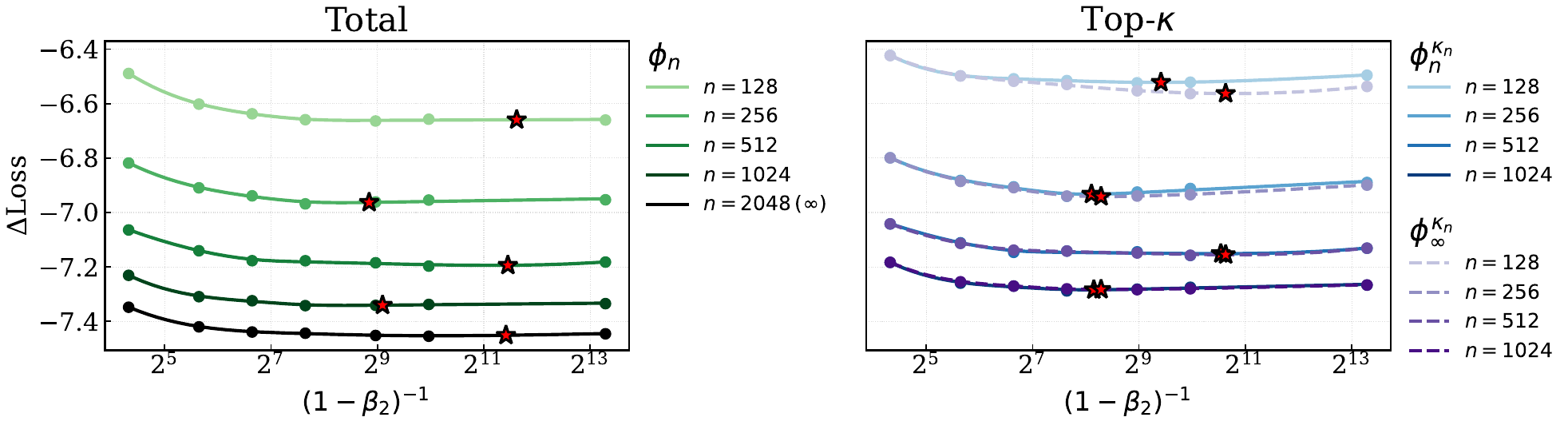}
   \caption{\small \textbf{Left:} Total loss curves $\phi_n$ across widths for Adam $\beta_2$. \textbf{Right:} Corresponding top-$\hptrunc_n$ losses $\phi_n^{\hptrunc_n}$ (blue dashed) and $\phi_\infty^{\hptrunc_n}$ (purple dashed), similar to Figure \ref{fig:top_k_across_widths}.}
   \vspace{5mm}
    \label{fig:adam_beta2_top_k_across_widths}
\end{figure}

\begin{figure}[!htb]
    \centering
    \includegraphics[width=0.95\linewidth]{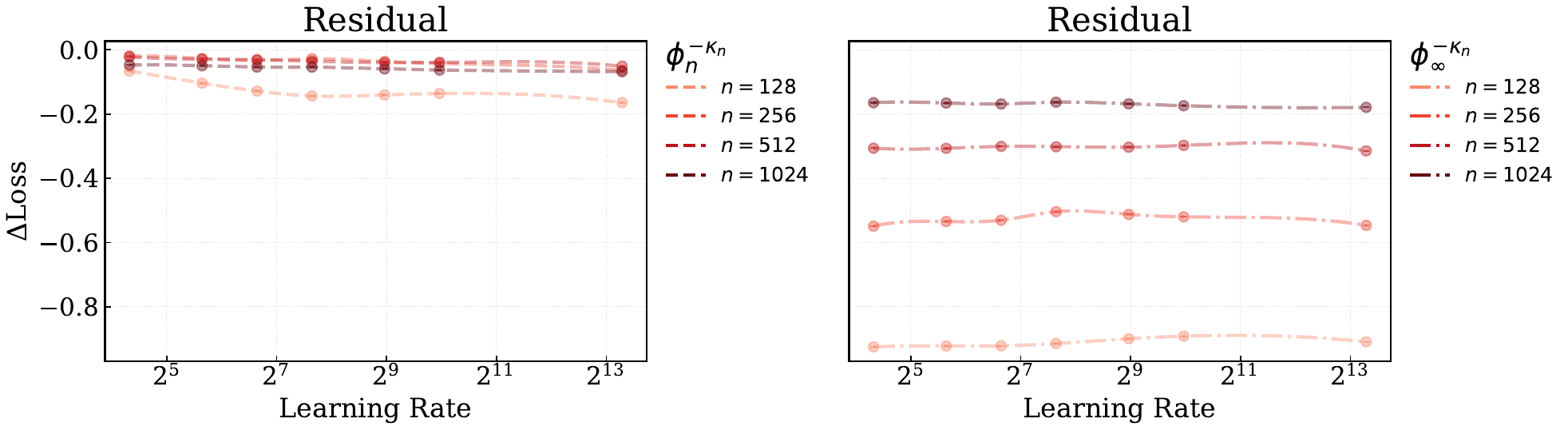}
    \caption{\small Residual losses around the top-$\hptrunc_n$ minimizers for Adam $\beta_2$, which are nearly flat as in Figure \ref{fig:resids_across_width}.}
    \vspace{5mm}
    \label{fig:adam_beta2_resids_across_widths}
\end{figure}

\begin{figure}[!htb]
    \centering
    \includegraphics[width=0.95\linewidth]{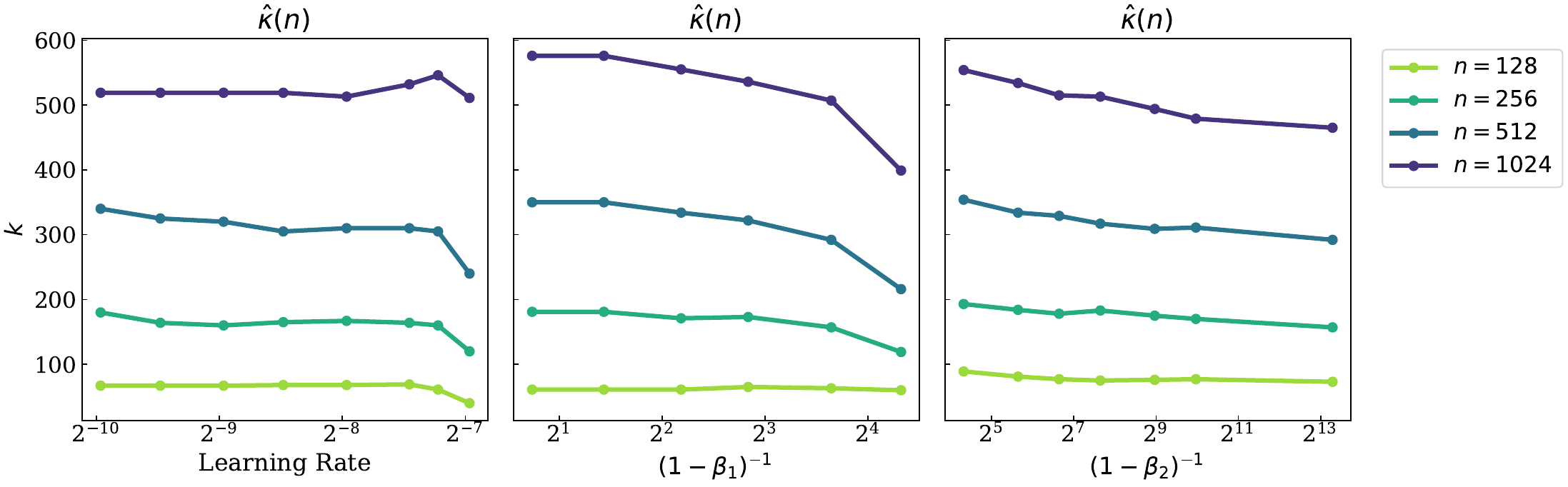}
    \caption{\small Computed values of $\hat{\bhptrunc}(n)$ for sweeps over the Adam LR, $\beta_1$, and $\beta_2$.} 
    \label{fig:ema_adam_combined_k_vec}
\end{figure}

\newpage
~
\newpage

\subsubsection{Llama Muon}\label{app:llama_muon}
As in Appendix \ref{app:llama_adam_lr}, we train a Llama-style transformer architecture  with a warmup stable (WSD) learning rate schedule on WikiText-103, but using the Muon~\citep{jordan2024muon} optimizer (see Appendix \ref{app:muon}) instead of Adam. The training configuration is shown below. The learning rate sweeps are shown in Figure~\ref{fig:ema_muon_lr_step_7185}.

\paragraph{Llama Muon Configuration.}
Below is the setup used for the Muon training.  

\begin{table}[ht]
\centering
\small
\renewcommand{\arraystretch}{1.2}
\begin{tabular}{@{}ll@{}}
\toprule
\textbf{Dataset}       & WikiText-103                                        \\ 
\addlinespace
\textbf{Epochs}        & 1                                                   \\ 
\addlinespace
\textbf{Hidden layers} & 4                                                   \\ 
\addlinespace
\textbf{Optimizer.}     & Muon ($\beta=0.95$, Adam LR$=0.004$, $\beta_1 = 0.9$, $\beta_2 = 0.999$) \\ 
\addlinespace
\textbf{Newton-Schulz Iterations} & 5 \\
\addlinespace
\textbf{Batch Size} & 128 \\
\addlinespace
\textbf{LR Schedule}   & WSD with 4\% linear warmup \& 20\% cooldown         \\ 
\bottomrule
\end{tabular}
\end{table}

\paragraph{LR Grid Search}
\begin{itemize}[nosep,leftmargin=*]
  \item \(\displaystyle \mathrm{LR} \in \{0.1,\,0.2,\,0.4,\,0.57,\,0.8,1.1,1.6,1.9\}\times10^{-2}\)
  
  \vspace{2mm}
  
  \item \(n \in \{128,\,256,\,512,\,1024,\,2048ß\}\)
\end{itemize}


\subsection{Llama Dion}\label{app:llama_dion}

As in Appendix \ref{app:llama_muon}, we train a Llama-style transformer with a warmup stable (WSD) learning rate schedule on WikiText-103, but using the Dion~\citep{ahn2025dion} optimizer (see Appendix \ref{app:dion}). The training configuration is shown below. We consider two different configurations of the Dion rank hyperparameter $r$. In the \textbf{bounded rank} setting we set $r = \min(128, n / 2)$ and in the \textbf{proportional rank} setting we take $r = n / 2$. The learning rate sweeps for the bounded rank setting is shown in Figure \ref{fig:ema_llama_step_7185_dion_bounded_rank}. As we can see, the transfer is nearly perfect. On the other hand, in the proportional rank setting shown in Figure \ref{fig:ema_llama_step_7185_dion_high_rank}, we see the same imperfect transfer that we saw for Muon, but also that the performance at large width benefits from larger $r$. From Figure \ref{fig:ema_dion_bounded_rank_lr_topk_profile}, we see that in the bounded rank setting, the $\phi_n^k$ curves look more closer to the Adam profile in Figure \ref{fig:adam_topk_profile}. In particular, for $n \in \{1024, 2048\}$ the curves are nearly overlapping. However, there is still not the strong top-$k$ invariance that we saw with Adam which holds across all widths, suggesting that there may be a different notion of invariance for Dion. In contrast, in Figure \ref{fig:ema_dion_high_rank_lr_topk_profile}, we see that the $\phi_n^k$ curves strongly resemble those of Muon in Figure \ref{fig:muon_topk_profile}. It is an interesting future work to pin down a more appropriate notion of invariance and to understand what causes the imperfect transfer for Muon and proportional rank Dion.


\begin{table}[ht]
\centering
\small
\renewcommand{\arraystretch}{1.2}
\begin{tabular}{@{}ll@{}}
\toprule
\textbf{Dataset}       & WikiText-103                                        \\ 
\addlinespace
\textbf{Epochs}        & 1                                                   \\ 
\addlinespace
\textbf{Hidden layers} & 4                                                   \\ 
\addlinespace
\textbf{Optimizer}     & Dion ($\mu=0.95$, Adam LR$=0.004$, $\beta_1 = 0.9$, $\beta_2 = 0.999$) \\ 
\addlinespace
\textbf{Newton-Schulz Iterations} & 5 \\
\addlinespace
\textbf{Batch Size} & 128 \\
\addlinespace
\textbf{LR Schedule}   & WSD with 4\% linear warmup \& 20\% cooldown         \\ 
\bottomrule
\end{tabular}
\end{table}

\begin{itemize}[nosep,leftmargin=*]
  \item \(\displaystyle \mathrm{LR} \in \{0.1,\,0.2,\,0.4,\,0.57,\,0.8,1.1,1.6,1.9\}\times10^{-2}\)
  
  \vspace{2mm}
  
  \item \(n \in \{128,\,256,\,512,\,1024,\,2048ß\}\)
\end{itemize}

\begin{figure}[htbp]
\vspace{-5mm}
    \centering
    \begin{subfigure}[t]{0.49\linewidth}
        \centering
        \includegraphics[width=\linewidth]{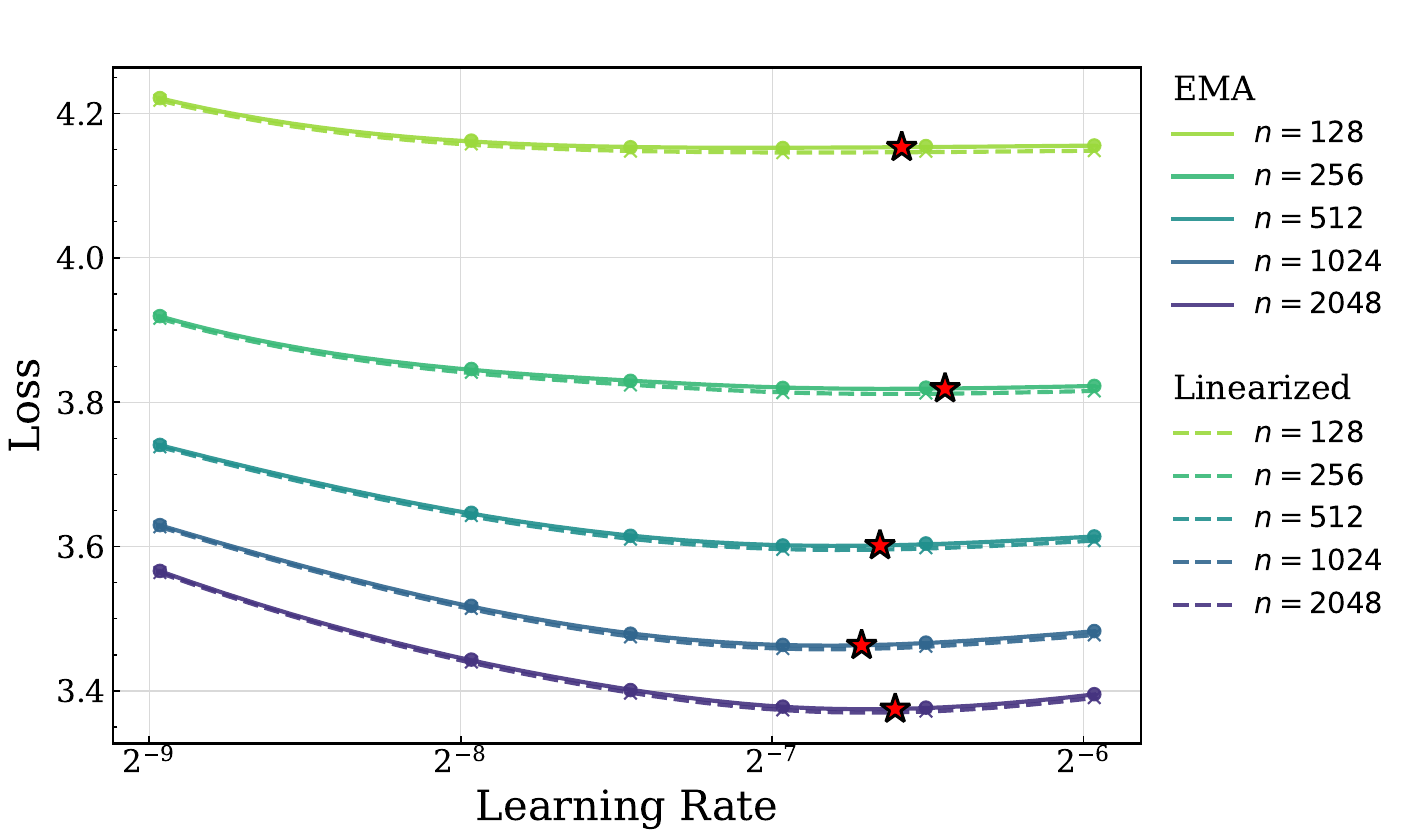}
        \caption{EMA loss (solid) and linearized loss (dashed).}
        \label{fig:ema_dion_bounded_rank_lr_step_7185}
    \end{subfigure}
    \hspace{2.5mm}
     \begin{subfigure}[t]{0.4\linewidth}
        \centering
        \includegraphics[width=\linewidth]{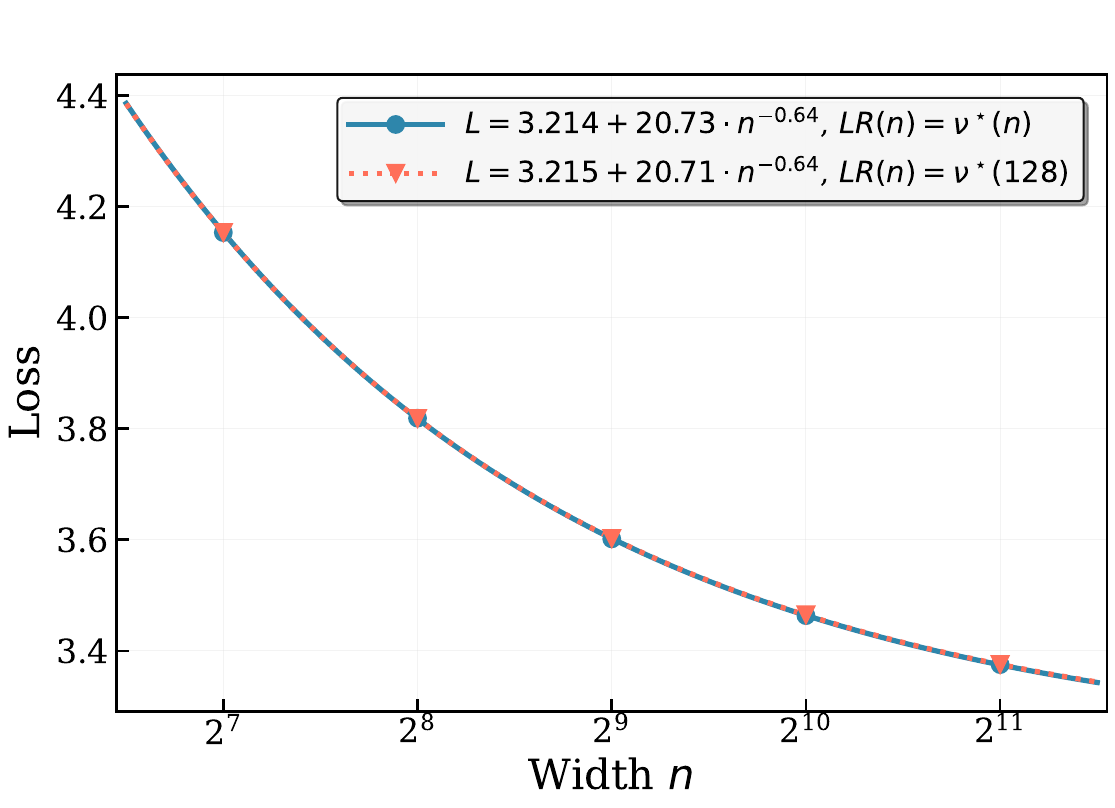}
        \caption{Scaling law for the EMA loss.}
        \label{fig:ema_dion_bounded_rank_step_7185_scaling_law}
    \end{subfigure}
    \caption{\small Same model and dataset as Fig.\ \ref{fig:ema_llama_step_7185}, but trained with Dion using a bounded rank $r = \min(128, n / 2)$. \textbf{Left:} EMA and linearized losses coincide. \textbf{Right:} EMA loss versus width $n$ for the learning rate choices $\hp^\star(n)$ [blue dots] and $\hp^\star(128)$ [orange triangles]. The transfer is nearly perfect in contrast with Muon (Fig. \ref{fig:ema_llama_step_7185_muon}) and Dion with proportional rank $r = n / 2$ (Fig.\ \ref{fig:ema_llama_step_7185_dion_high_rank}).}
    \label{fig:ema_llama_step_7185_dion_bounded_rank}
\end{figure}

\begin{figure}[htbp]
    \centering
    \begin{subfigure}[t]{0.49\linewidth}
        \centering
        \includegraphics[width=\linewidth]{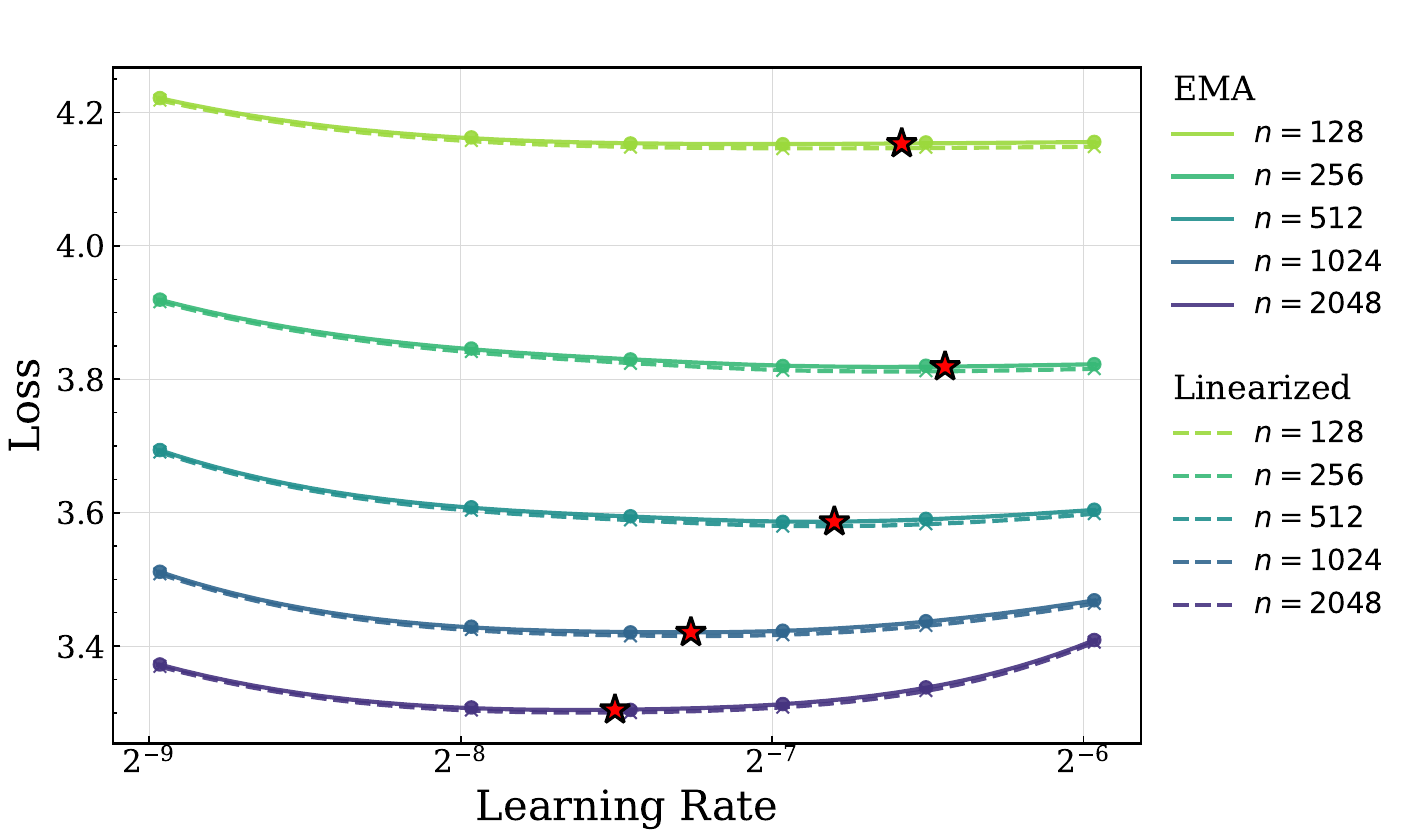}
        \caption{EMA loss (solid) and linearized loss (dashed).}
        \label{fig:ema_dion_high_rank_lr_step_7185}
    \end{subfigure}
    \hspace{2.5mm}
     \begin{subfigure}[t]{0.4\linewidth}
        \centering
        \includegraphics[width=\linewidth]{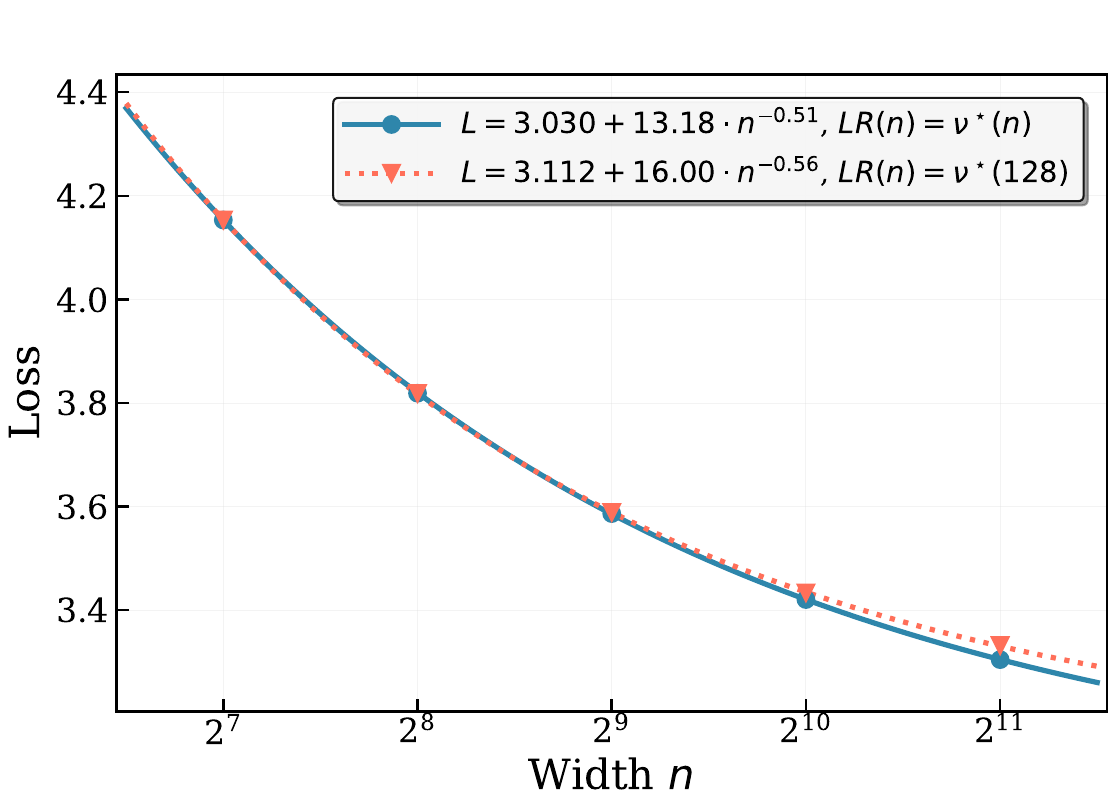}
        \caption{Scaling law for the EMA loss.}
        \label{fig:ema_dion_high_rank_step_7185_scaling_law}
    \end{subfigure}
    \caption{\small Same model and dataset as Fig.\ \ref{fig:ema_llama_step_7185}, but trained with Dion using proportional rank $r = n / 2$. \textbf{Left:} EMA and linearized losses coincide. \textbf{Right:} EMA loss versus width $n$ for the learning rate choices $\hp^\star(n)$ [blue dots] and $\hp^\star(128)$ [orange triangles]. The transfer is imperfect, similar to Muon (Fig. \ref{fig:ema_llama_step_7185_muon}) and in contrast with Dion with bounded rank $r = \min(128, n / 2)$ (Fig.\ \ref{fig:ema_llama_step_7185_dion_bounded_rank}).}
    \label{fig:ema_llama_step_7185_dion_high_rank}
\end{figure}

\begin{figure}[htbp]
\vspace{2.5mm}
    \centering
    \begin{subfigure}[t]{0.45\linewidth}
        \centering
        \includegraphics[width=\linewidth]{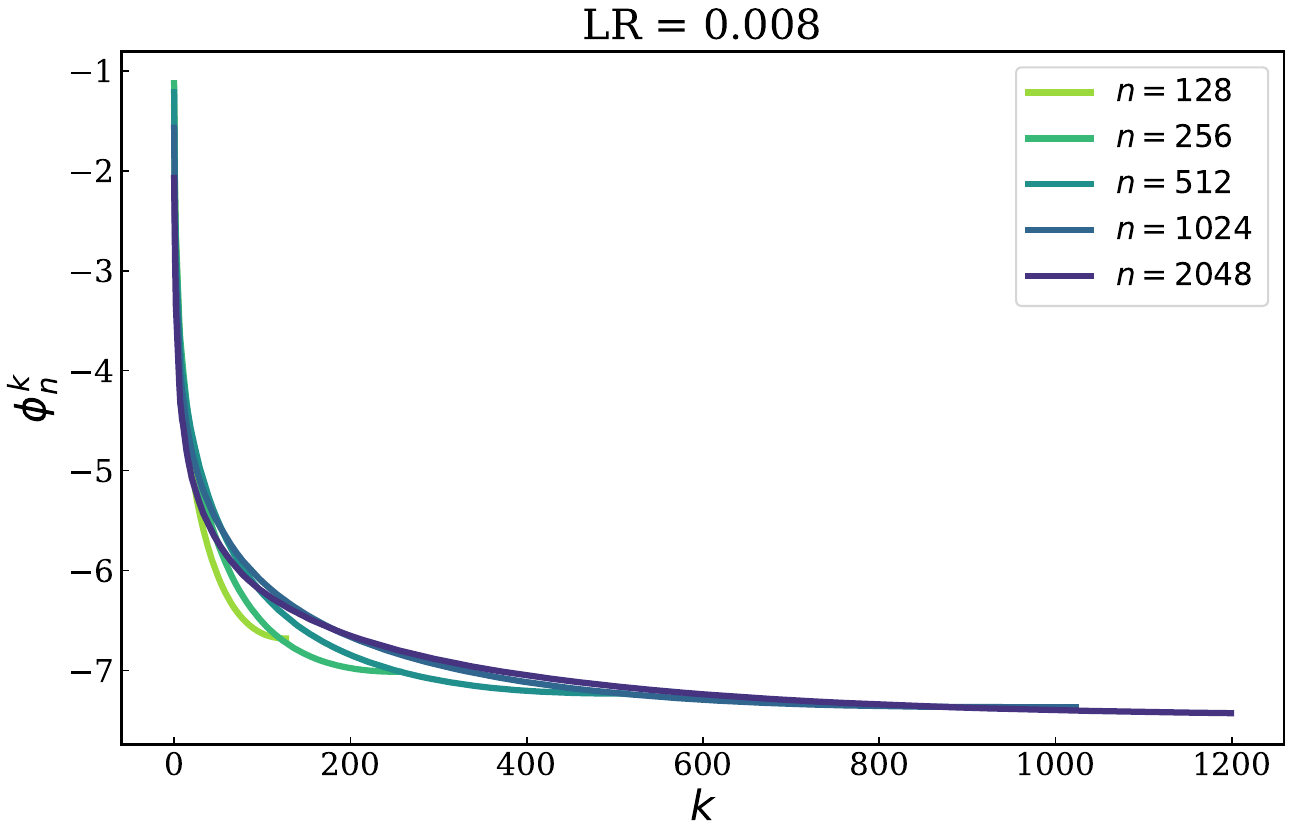}
        \caption{$\phi_n^k$ for bounded rank $r = \min(128, n / 2)$.}
        \label{fig:ema_dion_bounded_rank_lr_topk_profile}
    \end{subfigure}
    \hspace{2.5mm}
     \begin{subfigure}[t]{0.45\linewidth}
        \centering
        \includegraphics[width=\linewidth]{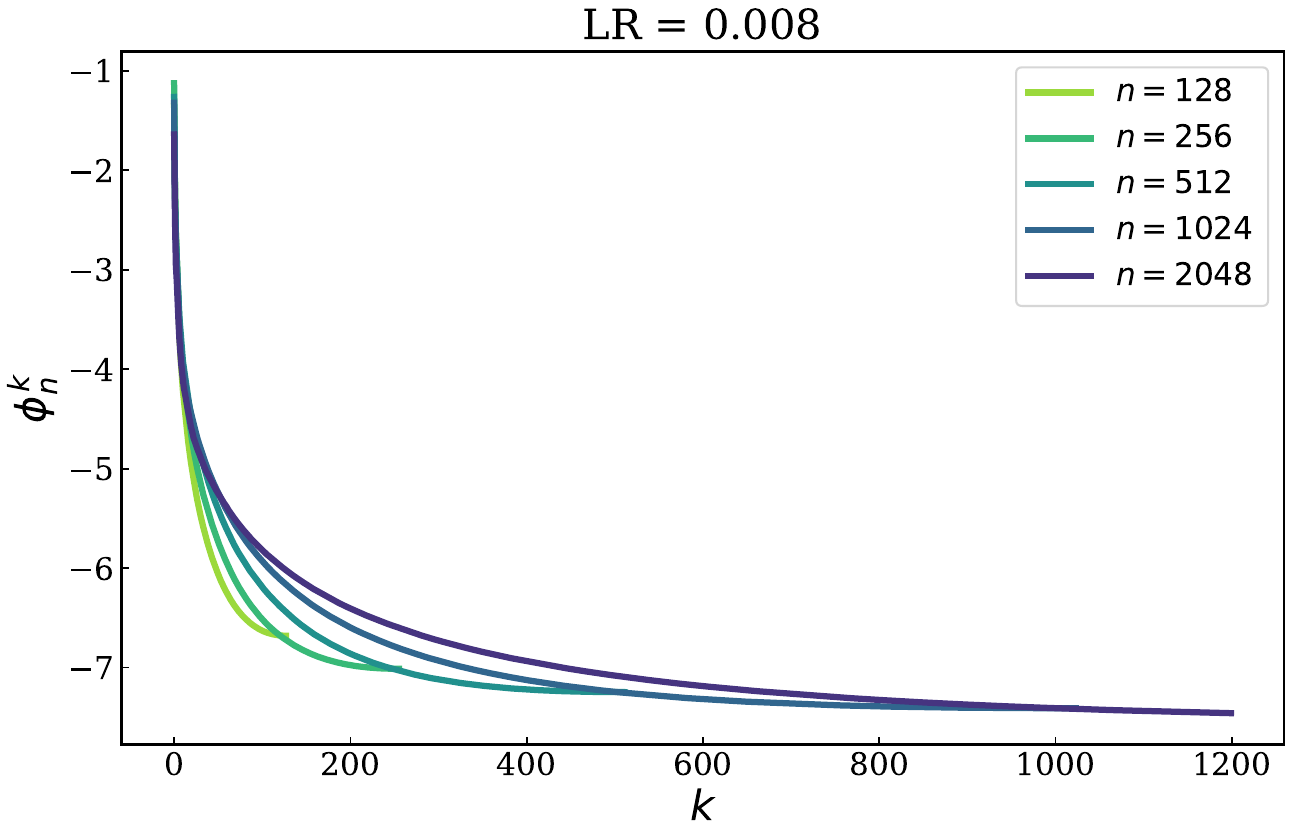}
        \caption{$\phi_n^k$ for proportional rank $r = n / 2$.}
        \label{fig:ema_dion_high_rank_lr_topk_profile}
    \end{subfigure}
    \caption{\small \textbf{Left:} The top-$k$ loss curves $\phi_n^k$ are more invariant in the bounded rank setting, but still appear qualitatively different from Adam and SGD. \textbf{Right:} The top-$k$ loss curves $\phi_n^k$ in the proportional rank setting look qualitatively similar to those for Muon (Fig.\ \ref{fig:muon_topk_profile}).}
    \label{fig:ema_dion_topk_profile}
\end{figure}

\newpage
\subsection{GPT-2 Experiments}\label{app:gpt}

\rev{For our GPT-2 experiments we use the following architecture with trainable position embeddings.}

\begin{table}[ht]
\centering
\small
\renewcommand{\arraystretch}{1.2}
\begin{tabular}{@{}ll@{}}
\toprule
\textbf{$\mu$P Base Dimension} & 128 \\
\addlinespace
\textbf{Normalization} & RMSNorm (prelayer norm) \\
\addlinespace
\textbf{Hidden Layers} & 4 \\
\addlinespace
\textbf{Attention Heads} & 8 \\
\addlinespace
\textbf{Feedforward Network} & GeLU\\
\addlinespace
\textbf{Feedforward Dimension} & 4$\times$ Embedding Dimension\\
\bottomrule
\end{tabular}
\caption{GPT-2 architecture configuration used in all experiments.}
\label{tab:gpt2}
\end{table}

\paragraph{EMA Warmup}  
We warm up the EMA decay coefficient \(\alpha_t\) linearly from \(\alpha_{\mathrm{start}}=0.995\) to \(\alpha_{\mathrm{end}}=0.996\) over 2000 steps in the effective window \((1-\alpha_t)^{-1}\). We subsample the EMA trajectory every \(\tau\) steps, with \(\tau\) itself warmed up linearly from \(\tau_{\mathrm{start}}=1\) to \(\tau_{\mathrm{end}}=2\) over the same period.

\subsubsection{GPT-2 Adam LR}\label{app:gpt_adam_lr}
Similar to the experiments in Section \ref{sec:tf_experiments}, we investigate transfer of the Adam LR, but using the GPT-2 architecture in Table \ref{tab:gpt2} and the FineWeb dataset.
\paragraph{GPT-2 Adam Configuration}
Below is the setup for GPT-2 experiments using Adam.  

\begin{table}[ht]
\centering
\small
\renewcommand{\arraystretch}{1.2}
\begin{tabular}{@{}ll@{}}
\toprule
\textbf{Dataset}       & FineWeb                                        \\ 
\addlinespace
\textbf{Tokens}        & 1B                                                   \\ 
\addlinespace
\textbf{Optimizer}     & Adam ($\beta_1=0.9,\ \beta_2=0.95$) \\

\addlinespace
\textbf{Batch Size}     & 256 \\

\addlinespace
\textbf{Context Length}     & 1024 \\

\addlinespace
\textbf{LR Schedule}   & WSD with 4\% linear warmup \& 20\% cooldown         \\ 
\bottomrule
\end{tabular}
\end{table}

\paragraph{LR Grid Search}
\begin{itemize}[nosep,leftmargin=*]
  \item \(\displaystyle \mathrm{LR} \in \{0.3,\, 0.6,\, 0.8,\, 1.2,\, 1.7,\, 2.4,\, 3.4
\}\times10^{-2}\)
  
  \vspace{2mm}
  
  \item \(n \in \{128,\,256,\,512,\,1024,\,2048\}\)
\end{itemize}
The loss \(\mathcal{L}\) is evaluated on the validation split. We average runs over 3 random seeds.

The analogs of Figures \ref{fig:ema_llama_step_7185}, \ref{fig:top_k_across_widths}, \ref{fig:resids_across_width} are presented for our GPT-2 setup in Figures \ref{fig:ema_gpt_step_2860_adam}, \ref{fig:adam_gpt_top_k_across_widths}, \ref{fig:adam_gpt_resids_across_widths} respectively. As observed in the Llama experiments, we can observe the faithfulness of the linearization and nearly perfect transfer in Figure \ref{fig:ema_gpt_step_2860_adam}. In Figures \ref{fig:adam_gpt_top_k_across_widths} and \ref{fig:adam_gpt_resids_across_widths} we see that top-$k$ invariance and residual flatness hold approximately. Although the decomposition is not as clean as it was for the Llama architecture, the trends are still suggestive and provide qualitative support for our overall conjecture. In particular, we see essentially the same qualitative profile of $\phi_n^k$ in Figure \ref{fig:gpt_adam_topk_profile} which we saw for Figure \ref{fig:adam_topk_profile}. Interestingly however, we see that $\hat{\hptrunc}(n)$ is much smaller in this setting. 
Another difference we observe is that the top-$k$ invariance seems to break down for small learning rates and small widths in the GPT-2 setting, which did not appear to be the case for the Llama architecture. We note that the conditions for our fast transfer conjecture only require invariance to hold for nearly optimal HPs, and we speculate that invariance can indeed break for suboptimal HPs. We defer a more quantitative evaluation of this phenomenon to future work. 

\begin{figure}[!htbp]
\vspace{-8mm}
    \centering
    \begin{subfigure}[t]{0.49\linewidth}
        \centering
        \includegraphics[width=\linewidth]{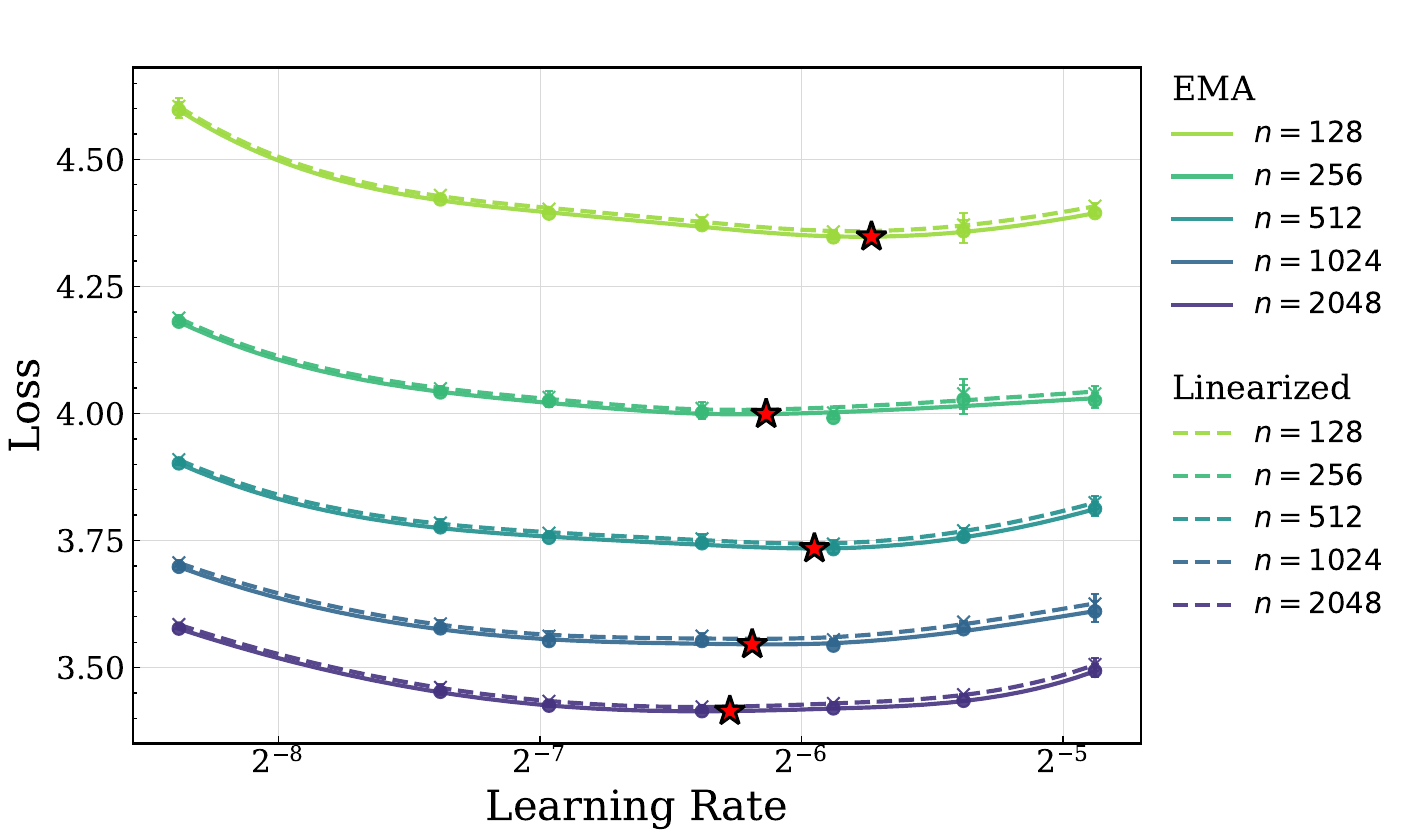}
        \caption{EMA loss (solid) and linearized loss (dashed).}
        \label{fig:ema_gpt_adam_lr_step_2860}
    \end{subfigure}
    \hspace{2.5mm}
     \begin{subfigure}[t]{0.4\linewidth}
        \centering
        \includegraphics[width=\linewidth]{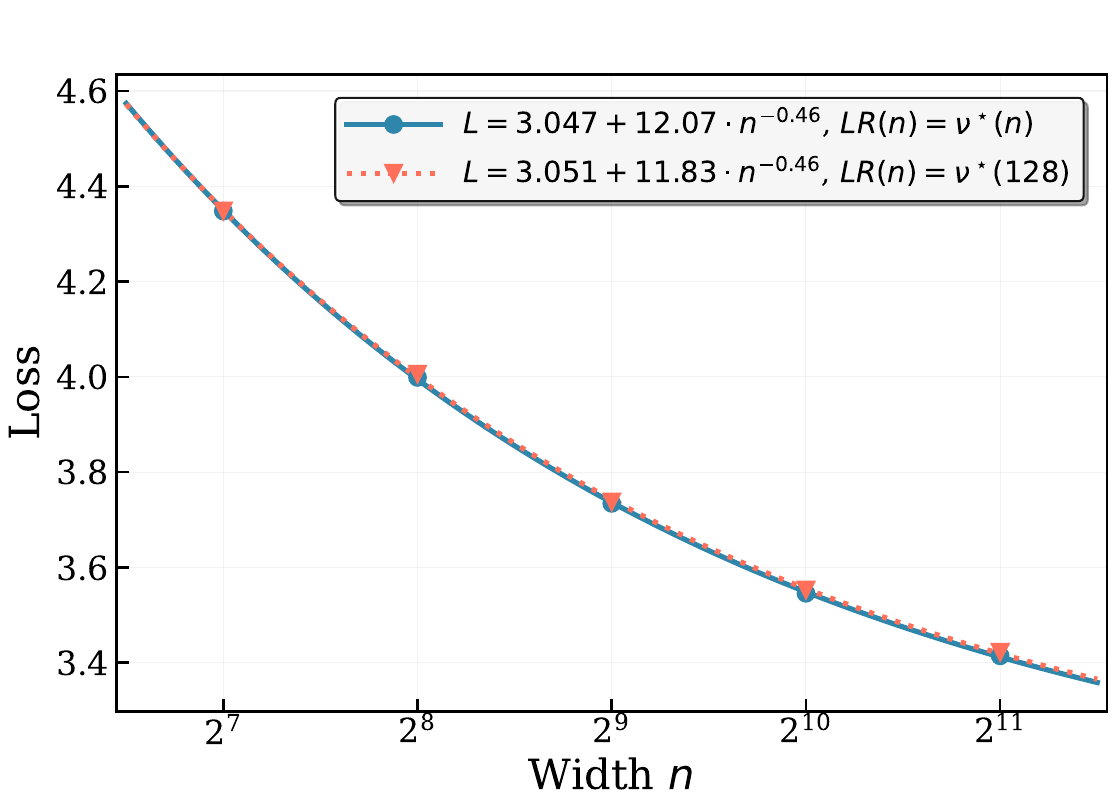}
        \caption{Scaling law for the EMA loss.}
        \label{fig:ema_gpt_adam_step_2860_scaling_law}
    \end{subfigure}
    \vspace{-1mm}
    \caption{\small Training a 4-layer GPT-2 transformer on FineWeb1B with Adam. \textbf{Left:} EMA and linearized losses nearly coincide. \textbf{Right:} EMA loss across widths under the width-dependent optimal learning rate $\hp^\star(n)$ [blue dots] and the fixed width-128 choice $\hp^\star(128)$ [orange triangles] shows near-perfect transfer across widths.}
    \label{fig:ema_gpt_step_2860_adam}
\end{figure}

\begin{figure}[!htb]
    \centering
    \includegraphics[width=0.88\linewidth]{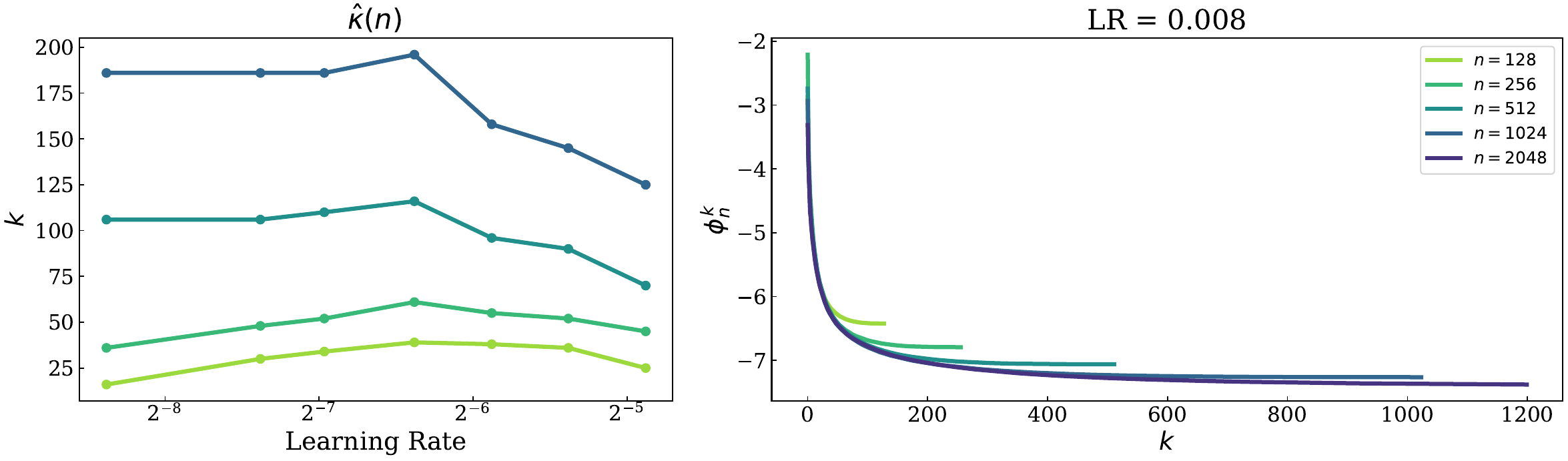}
    \vspace{-1.5mm}
    \caption{\small\textbf{Left:} Computed values of $\hat{\hptrunc}(n)$ using Algorithm \ref{alg:compute-hptrunc-all}. \textbf{Right:} Top-$k$ losses $\phi_n^k$ for $\mathrm{LR} = 0.008$. The curves descend rapidly with $k$ and overlap over different $n$ for an intermediate range of $k$ where top-$k$ invariance holds. The profile is qualitatively similar to Figure \ref{fig:adam_topk_profile}.}
    \label{fig:gpt_adam_topk_profile}   
\end{figure}

\begin{figure}[!htb]
    \centering
    \includegraphics[width=0.95\linewidth]{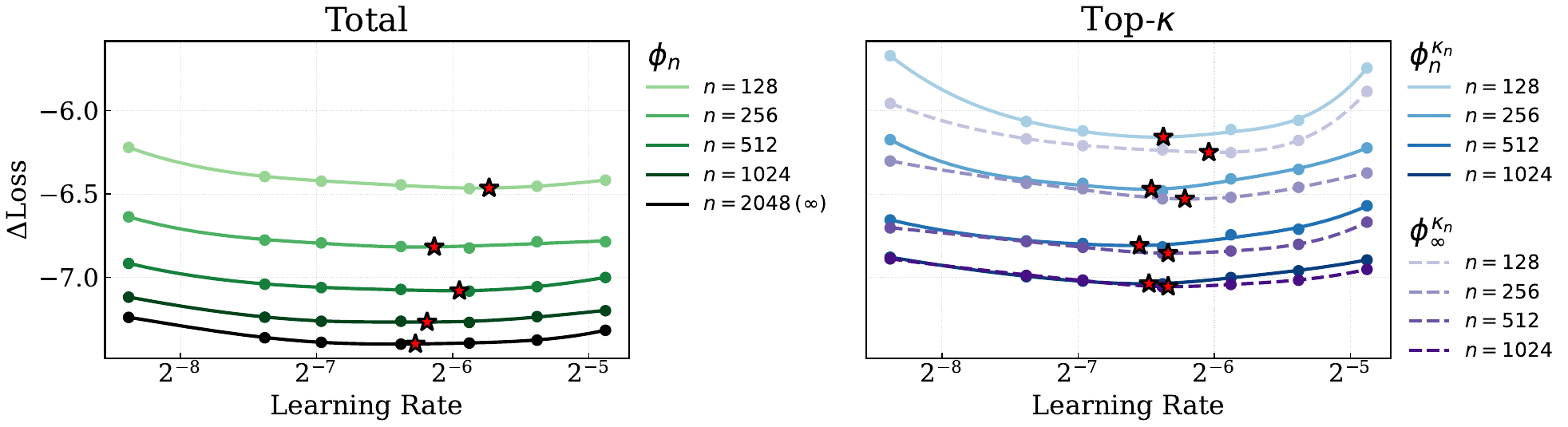}
    \vspace{-1.5mm}
    \caption{\small \textbf{Left: } Total loss curves $\phi_n$ across widths for GPT-2 experiments with Adam. \textbf{Right:} Corresponding top-$\hptrunc_n$ losses $\phi_n^{\hptrunc_n}$ (blue dashed) and $\phi_\infty^{\hptrunc_n}$ (purple dashed), similar to Figure \ref{fig:top_k_across_widths}.}
    \label{fig:adam_gpt_top_k_across_widths}
\end{figure}

\begin{figure}[!htb]
    \centering
    \includegraphics[width=0.95\linewidth]{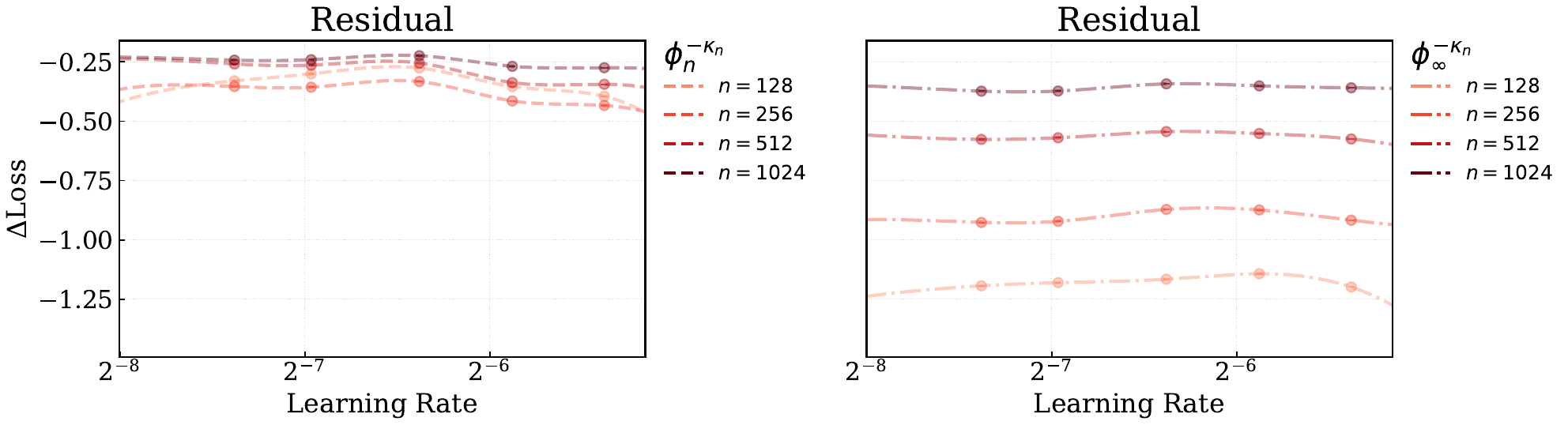}
    \vspace{-1.5mm}
    \caption{\small Residual losses around the top-$\hptrunc_n$ minimizers for GPT-2 with Adam, which are nearly flat as in Figure \ref{fig:resids_across_width}.}
    \vspace{-10mm}
    \label{fig:adam_gpt_resids_across_widths}
\end{figure}

\newpage

\subsubsection{GPT-2 Muon LR}\label{app:gpt_muon_lr}
Next we perform a learning rate sweep for the Muon optimizer (Appendix \ref{app:muon}) in our GPT-2 setup (Table \ref{tab:gpt2}).
\paragraph{GPT-2 Muon Configuration}
Below is the setup for GPT-2 experiments using Muon.  

\begin{table}[ht]
\centering
\small
\renewcommand{\arraystretch}{1.2}
\begin{tabular}{@{}ll@{}}
\toprule
\textbf{Dataset}       & FineWeb                                        \\ 
\addlinespace
\textbf{Tokens}        & 1B                                                   \\ 
\addlinespace
\textbf{Optimizer}     & Muon (Adam LR = $0.01, \beta_1=0.9, \beta_2=0.95$) \\
\addlinespace
\textbf{Newton-Schulz Iterations} & 5 \\
\addlinespace
\textbf{Batch Size}     & 256 \\
\addlinespace
\textbf{Context Length}     & 1024 \\

\addlinespace
\textbf{LR Schedule}   & WSD with 4\% linear warmup \& 20\% cooldown         \\ 
\bottomrule
\end{tabular}
\end{table}

\paragraph{LR Grid Search}
\begin{itemize}[nosep,leftmargin=*]
  \item \(\displaystyle \mathrm{LR} \in \{0.3,\, 0.6,\, 1.2,\, 2.0,\, 4.0,\, 6.0,\,12.0
\}\times10^{-2}\)
  
  \vspace{2mm}
  
  \item \(n \in \{128,\,256,\,512,\,1024,\,2048ß\}\)
\end{itemize}
The loss \(\mathcal{L}\) is evaluated on the validation split. We average runs over 3 random seeds.

The result of the sweep is shown in Figure \ref{fig:ema_gpt_muon_step_3820} which is the analog of Figure \ref{fig:ema_llama_step_7185_muon}. We see that although the optimal learning rate seems to shift non-trivially (Figure \ref{fig:ema_gpt_muon_lr_step_3820}), the Muon optimizer produces ``flat'' hyperparameter curves so that suboptimal learning rate does not affect the transfer performance much (Figure \ref{fig:ema_gpt_muon_step_3820_scaling_law}). However, it is still possible that the transfer suboptimality will become more visible for larger widths.

In Figure \ref{fig:ema_muon_lr_multiple_topk_profile}, we plot the top-$k$ loss profiles $\phi_n^k$ for $\mathrm{LR} \in \{0.001,\, 0.012\}$. We see that for the smaller learning rate the profile looks very similar to Figure \ref{fig:muon_topk_profile}. Interestingly however, for the larger learning rate, which is closer to the optimal learning rate, we see that the profile looks a lot closer to that observed when using the Adam optimizer. One potential explanation is that the dynamics might be more heavily dominated by the layers which use the Adam optimizer for larger learning rates. It is possible that this is linked with the stable Muon transfer observed in this setting as opposed to worse transfer performance observed in Figure \ref{fig:ema_muon_lr_step_7185}. More careful investigation is required to disentangle these effects, potentially by only training the layers which undergo the orthogonalized updates and ablating the number of Newton-Schulz iterations.

\begin{figure}[!htbp]
    \centering
    \begin{subfigure}[t]{0.49\linewidth}
        \centering
        \includegraphics[width=\linewidth]{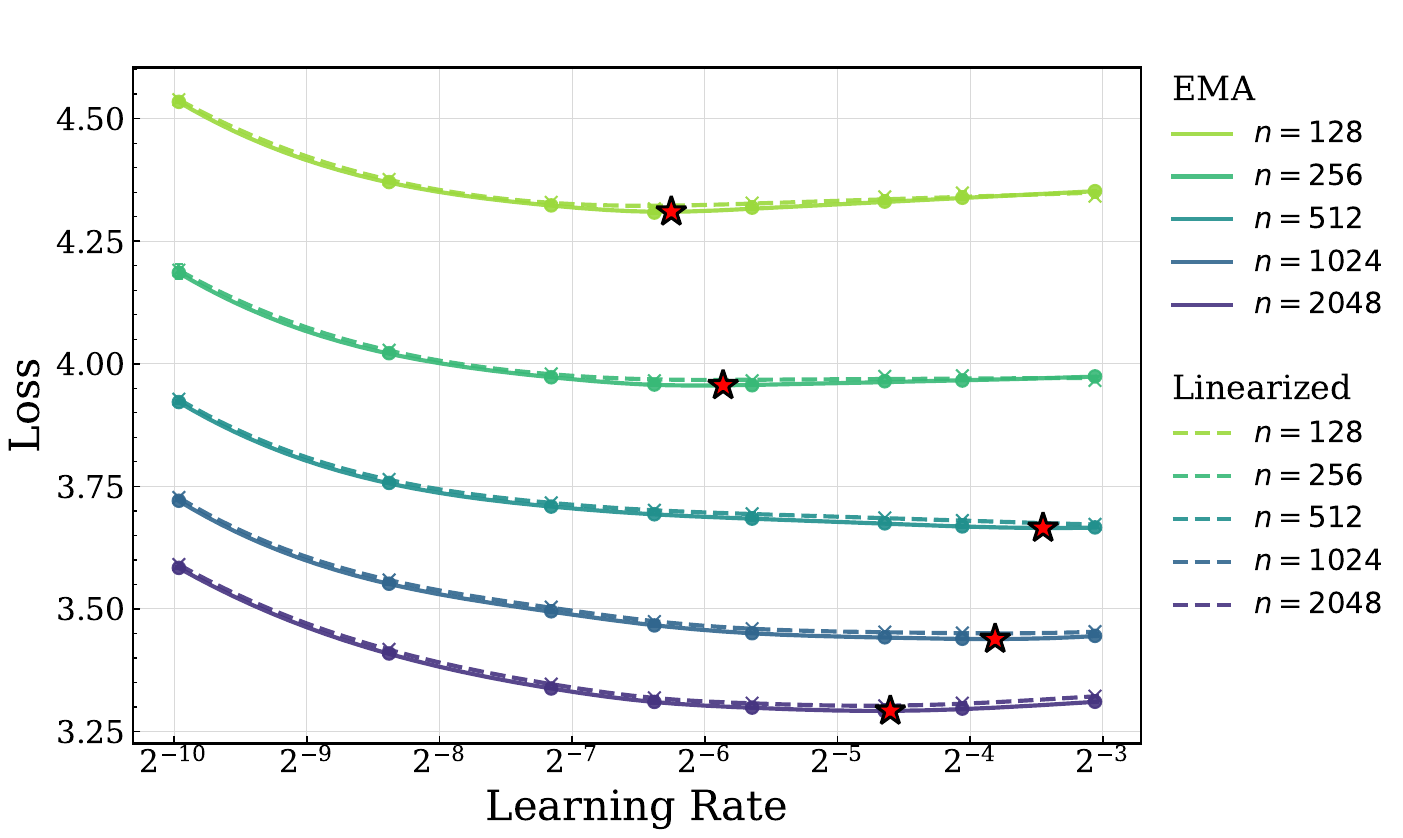}
        \caption{EMA loss (solid) and linearized loss (dashed).}
        \label{fig:ema_gpt_muon_lr_step_3820}
    \end{subfigure}
    \hspace{2.5mm}
     \begin{subfigure}[t]{0.4\linewidth}
        \centering
        \includegraphics[width=\linewidth]{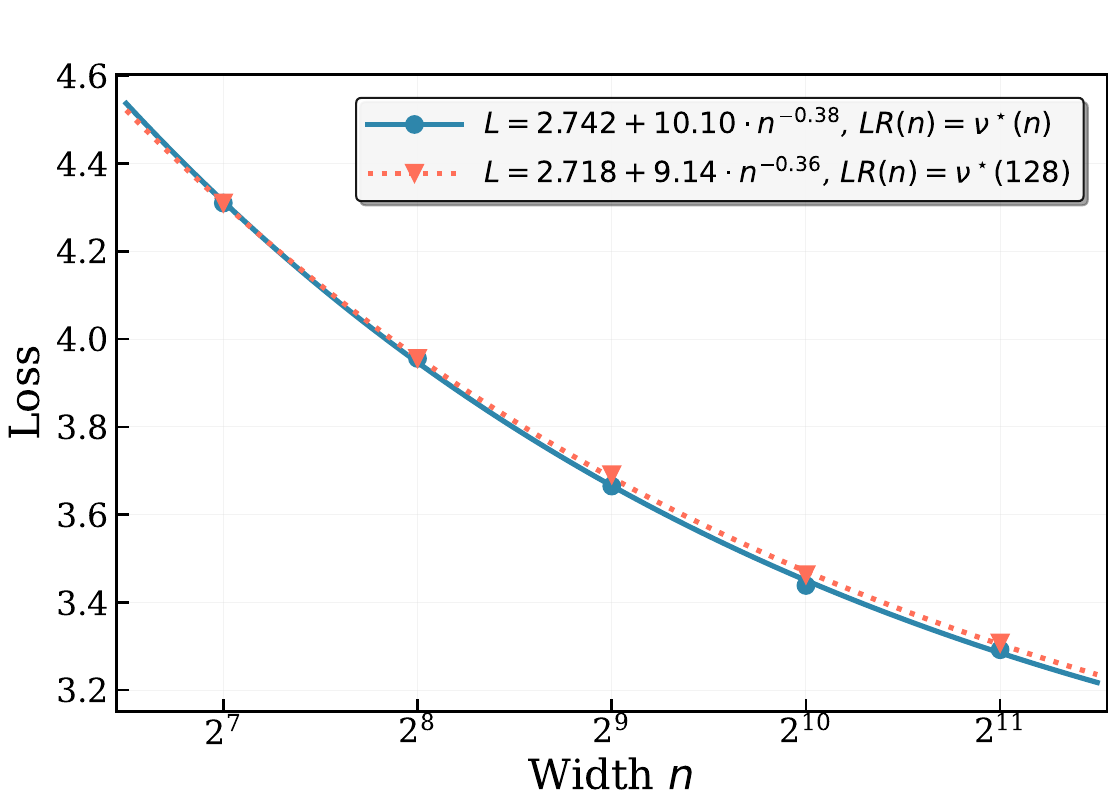}
        \caption{Scaling law for the EMA loss.}
        \label{fig:ema_gpt_muon_step_3820_scaling_law}
    \end{subfigure}
    \caption{\small Same model and dataset as Fig.~\ref{fig:ema_gpt_step_2860_adam}, but trained with Muon. Although the optimal learning fluctuates across width, the insensitivity to learning rate causes transfer to be only slightly suboptimal for the given widths. \textbf{Left:} EMA and linearized losses nearly coincide. \textbf{Right:} EMA loss across widths using the two learning rate choices $\hp^\star(n)$ [blue dots] and $\hp^\star(128)$ [orange triangles].}
    \label{fig:ema_gpt_muon_step_3820}
\end{figure}

\begin{figure}[!htpb]
    \centering
    \includegraphics[width=0.9\linewidth]{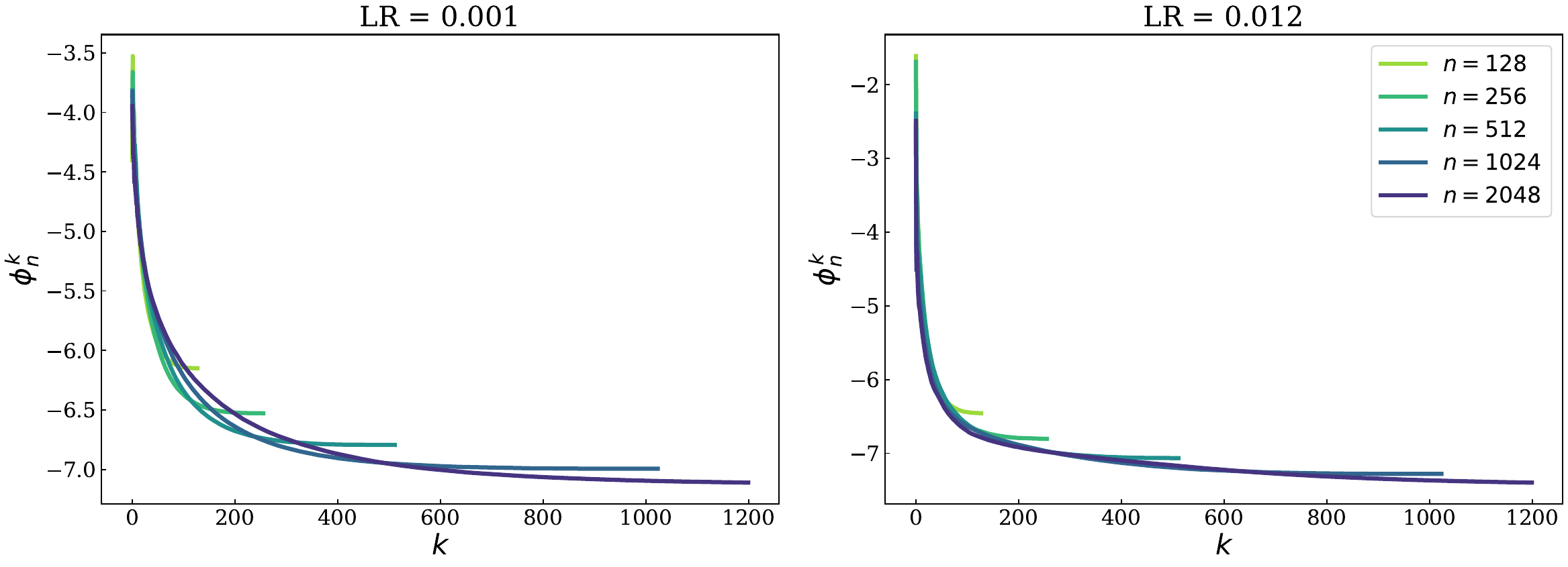}
    \caption{\small The qualitative behavior of the top-$k$ losses $\phi_n^k$ is learning rate dependent for GPT-2 trained with  Muon. \textbf{Left:} Top-$k$ losses $\phi_n^k$ for $\mathrm{LR} = 0.001$. The profile looks similar to the one observed for Muon on the Llama architecture in Figure \ref{fig:muon_topk_profile} since top-$k$ invariance only holds for small $k$ and the descent is slow especially for large $n$. \textbf{Right:} Top-$k$ losses $\phi_n^k$ for $\mathrm{LR} = 0.012$. The profile looks similar to those observed from the Adam optimizer (see Figures \ref{fig:adam_topk_profile} and \ref{fig:gpt_adam_topk_profile}) since the descent is fast, there is approximate top-$k$ invariance, and the tail flattens out.}
    \vspace{1.5mm}\label{fig:ema_muon_lr_multiple_topk_profile}
\end{figure}

\newpage
\subsection{CIFAR-10 MLP SGD}\label{app:mlp_sgd}
We also probe the generality of our observations to a different dataset, optimizer, and architecture: CIFAR-10 training using SGD on a 2-layer MLP with ReLU activation and no biases. 

\paragraph{CIFAR-10 Training Configuration} Below is the setup used for our CIFAR-10 experiment.
\begin{table}[ht]
\centering
\small
\renewcommand{\arraystretch}{1.2}
\begin{tabular}{@{}ll@{}}
\toprule
\textbf{Dataset}       & CIFAR-10                                      \\ 
\addlinespace
\textbf{Epochs}        & 100                                                   \\ 
\addlinespace
\textbf{Layers} & 2                                                   \\ 
\addlinespace
\textbf{Optimizer}     & Momentum SGD ($\beta=0.9$)\\

\addlinespace
\textbf{Batch Size} & 512 \\ 
\addlinespace
\textbf{LR Schedule}   & WSD with 4\% linear warmup \& 20\% cooldown         \\ 
\addlinespace
\textbf{Data Augmentation} & Mixup, Random Resized Cropping\\
\bottomrule
\end{tabular}
\end{table}

We use the same EMA warmup and discretization schedule as detailed in Appendix \ref{app:llama_exps}. We use our largest width $n_{\max} = 8192$ as the infinite-width proxy for computing $\hat{\bhptrunc}(n)$ using Algorithm \ref{alg:compute-hptrunc-all}.

\paragraph{LR Grid Search}
\begin{itemize}[nosep,leftmargin=*]
  \item \(\displaystyle \mathrm{LR} \in \{1.0,\,1.4,\,2.0,\,2.8,\,4.0,\,4.8,\,5.7,\,6.7,\,8.0\}\times10^{-3}\)

  \vspace{2mm}
  
  \item \(n \in \{128,\,256,\,512,\,1024,\,2048,\,4096,\,8192\}\)
\end{itemize}

In the right panel of Figure \ref{fig:sgd_kvec} we see that the majority of the loss decrease occurs in the first few components, and in the left panel we see that the computed values of $\hat{\hptrunc}(n)$ are much smaller than the width $n$. When we apply our decomposition using the computed values of $\hat{\hptrunc}(n)$ in Figures \ref{fig:sgd_top_k_across_widths} and \ref{fig:sgd_resids_across_widths}, we see the same qualitative picture holds as in the transformer Adam experiments (for example see Figures \ref{fig:top_k_across_widths} and \ref{fig:resids_across_width} for Llama trained on WikiText-103).

\begin{figure}[t]
    \centering
    \includegraphics[width=0.9\linewidth]{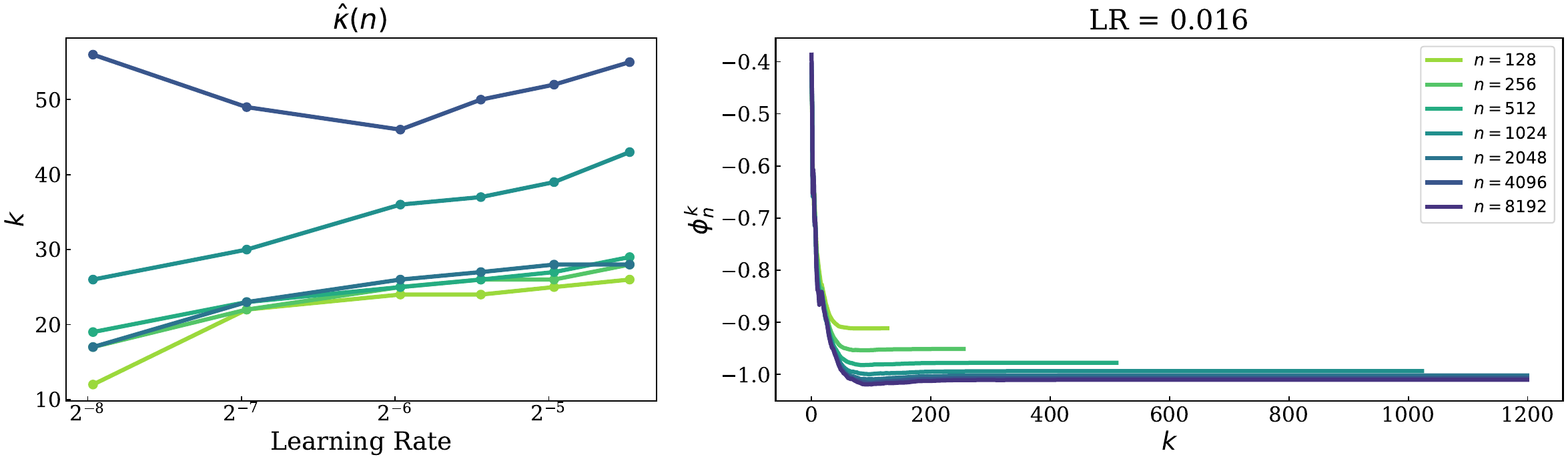}
    \caption{\small Two-layer MLP trained on CIFAR-10 using SGD. \textbf{Left:} Computed values of $\hat{\hptrunc}(n)$, with ratios $k_n / n$ much smaller than in the language setting (see Figure~\ref{fig:adam_topk_profile}). \textbf{Right:} Top-$k$ losses $\phi_n^k$ descend with $k$ and flatten much more quickly than in Figure~\ref{fig:adam_topk_profile}, indicating a  more low-rank structure in this setting.}
    \label{fig:sgd_kvec}
\end{figure}

\begin{figure}[t]
    \centering
    \includegraphics[width=0.95\linewidth]{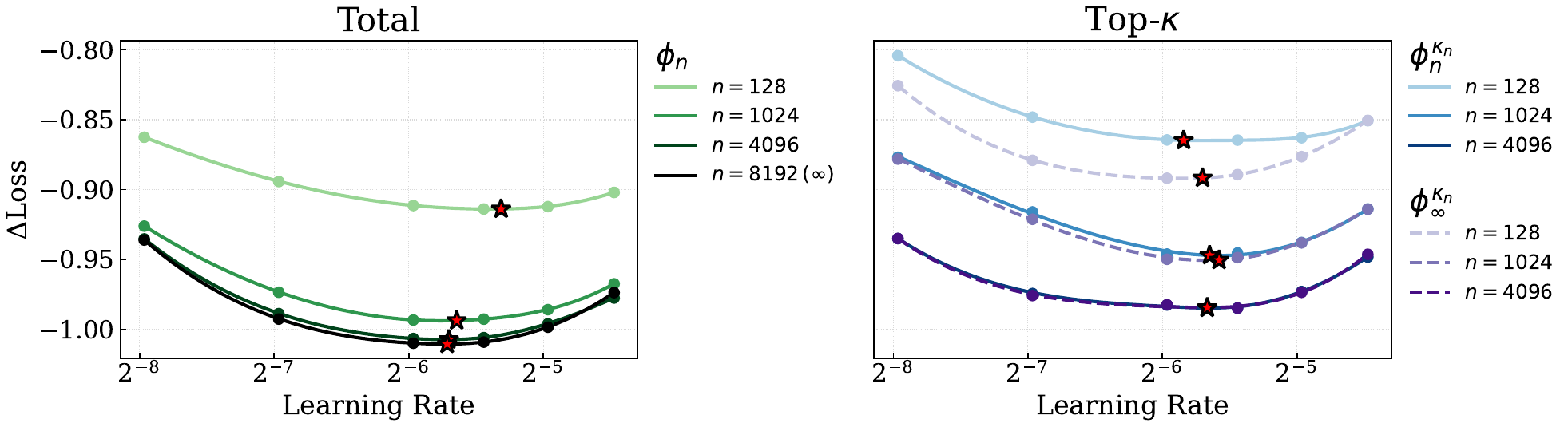}
    \caption{\small \textbf{Left:} Total loss curves $\phi_n$ across widths for the two-layer MLP trained with SGD. \textbf{Right:} Corresponding top-$\hptrunc_n$ losses $\phi_n^{\hptrunc_n}$ (blue dashed) and $\phi_\infty^{\hptrunc_n}$ (purple dashed), similar to Figure \ref{fig:top_k_across_widths}.}
    \label{fig:sgd_top_k_across_widths}
\end{figure}

\begin{figure}[t]
    \centering
    \includegraphics[width=0.95\linewidth]{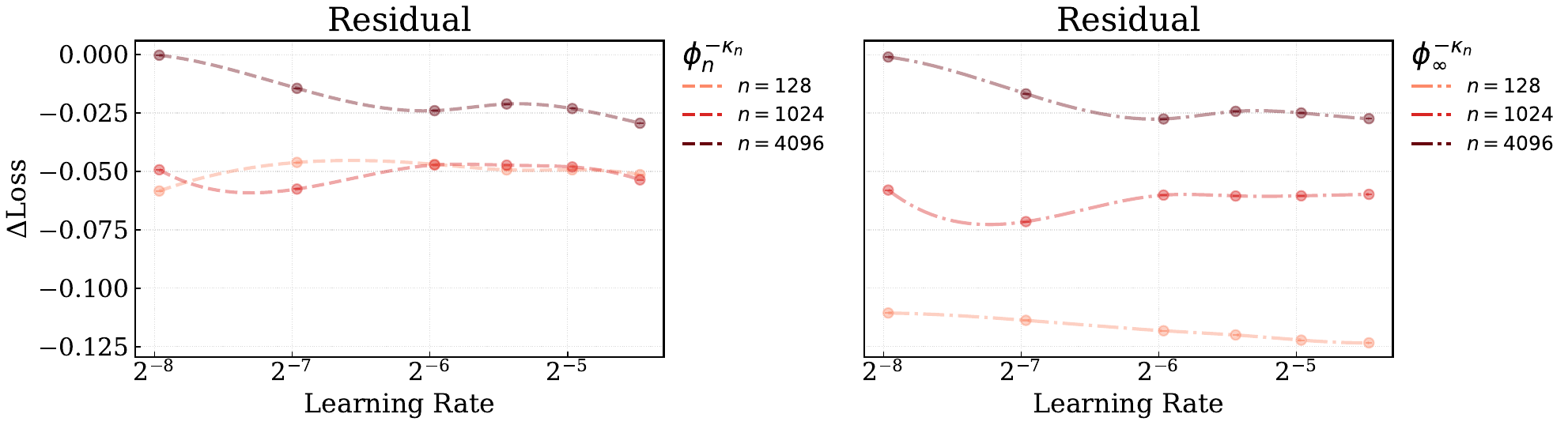}
    \caption{\small Residual losses around the top-$\hptrunc_n$ minimizers for two-layer MLP trained with SGD, which are approximately flat relative to the curvature of the top-$\hptrunc_n$ loss, as in Figure \ref{fig:resids_across_width}.}
    \label{fig:sgd_resids_across_widths}
\end{figure}

\end{document}